\theoremstyle{plain}
\newtheorem{theorem}{Theorem}
\newtheorem*{theoremstar}{Theorem}
\newtheorem{prop}{Proposition}
\newtheorem*{propstar}{Proposition}
\newtheorem{lemma}{Lemma}
\newtheorem{corollary}{Corollary}
\newtheorem*{corollarystar}{Corollary}
\theoremstyle{definition}
\newtheorem{definition}{Definition}
\newtheorem{assumption}{Assumption}
\newtheorem{remarkapp}{Remark}[section]
\DeclareMathOperator{\sgn}{sign}
\DeclareMathOperator{\cA}{\mathcal{A}}
\DeclareMathOperator{\cS}{\mathcal{S}}
\DeclareMathOperator{\cD}{\mathcal{D}}
\def\sT{{\mathbb{T}}}
\def\gT{{\mathcal{T}}}
\title{PPO-Clip Attains Global Optimality: Towards Deeper Understandings of Clipping}
\author{
    Nai-Chieh Huang, Ping-Chun Hsieh, Kuo-Hao Ho, I-Chen Wu 
}
\begin{document}

\maketitle

\begin{abstract}
Proximal Policy Optimization algorithm employing a clipped surrogate objective (PPO-Clip) is a prominent exemplar of the policy optimization methods. 
However, despite its remarkable empirical success, PPO-Clip lacks theoretical substantiation to date.
In this paper, we contribute to the field by establishing the first global convergence results of a PPO-Clip variant in both tabular and neural function approximation settings. Our findings highlight the $O(1/\sqrt{T})$ min-iterate convergence rate specifically in the context of neural function approximation.
We tackle the inherent challenges in analyzing PPO-Clip through three central concepts: (i) We introduce a generalized version of the PPO-Clip objective, illuminated by its connection with the hinge loss.
(ii) Employing entropic mirror descent, we establish asymptotic convergence for tabular PPO-Clip with direct policy parameterization.
(iii) Inspired by the tabular analysis, we streamline convergence analysis by introducing a two-step policy improvement approach. This decouples policy search from complex neural policy parameterization using a regression-based update scheme.
Furthermore, we gain deeper insights into the efficacy of PPO-Clip by interpreting these generalized objectives. Our theoretical findings also mark the first characterization of the influence of the clipping mechanism on PPO-Clip convergence. Importantly, the clipping range affects only the pre-constant of the convergence rate.
\end{abstract}

\section{Introduction}

Policy optimization is a prevalent method for solving reinforcement learning problems, involving iterative parameter updates to maximize objectives. Policy gradient methods, a prominent subset of this approach, were introduced as a direct solution using gradient descent.
Their primary aim is to identify an optimal policy that maximizes the total expected reward through interactions with the environment.
The selection of an appropriate step size is crucial as it significantly influences policy gradient algorithm performance. Addressing this challenge, Trust Region Policy Optimization (TRPO) was created \citep{schulman2015trust}. Utilizing a trust-region approach with a second-order approximation, TRPO guarantees substantial policy improvement.
Unlike computationally intensive TRPO, Proximal Policy Optimization (PPO) \citep{schulman2017proximal} leverages first-order derivatives for policy improvement. PPO encompasses two main variants: PPO-KL and PPO-Clip, each with distinct characteristics. PPO-KL adds a Kullback-Leibler divergence penalty to the objective, while PPO-Clip integrates probability ratio clipping.
These variants showcase remarkable performance across various environments, with PPO standing out for its computational efficiency \citep{chen2018adaptive, ye2020mastering, byun2020proximal}.

Given the empirical success of these policy optimization algorithms, recent works have made significant strides in enhancing their theoretical guarantees.
In particular, \citep{agarwal2020optimality,bhandari2019global} prove the global convergence result of the policy gradient algorithm under different settings. Additionally, \citep{mei2020global} establishes the convergence rates of the softmax policy gradient in both the standard and the entropy-regularized settings. Furthermore, it has been shown that various policy gradient algorithms also enjoy global convergence \citep{fazel2018global,liu2020improved, wang2021global}.
In the context of TRPO and PPO, \citep{shani2020adaptive} have utilized the mirror descent method to establish the convergence rate of adaptive TRPO under both the standard and entropy-regularized settings. Furthermore, \citep{liu2019neural} have provided the convergence rate of PPO-KL and TRPO under neural function approximation.\footnote{For the detailed discussion about related work, please refer to Appendix H.}
By contrast, despite that PPO-Clip is computationally efficient and empirically successful, the following question about the theory of PPO-Clip remains largely open: \textit{Does PPO-Clip enjoy provable global convergence or have any convergence rate guarantee?}

In this paper, we answer the above question affirmatively.
To begin with, we generalize the PPO-Clip objective to encompass a wider range of variants, enhancing our comprehension of its efficacy.
Accordingly, we present the first-ever global convergence guarantee for a PPO-Clip variant under both tabular and neural function approximation.
Notably, through convergence analysis, we offer two pivotal insights into the clipping mechanism:
{(i) Under PPO-Clip, the policy updates scale with advantage magnitudes, while the sign dictates whether to increase or decrease the action probabilities. Notably, given the representation power of neural networks, incorrect signs typically emerge when the advantage magnitudes are nearly zero. In such cases, these values insignificantly contribute to the objective, preserving the objective accuracy despite the incorrect signs. This perspective illuminates the robustness and empirical success of PPO-Clip.}
{(ii) Through our convergence analysis, we demonstrate that the clipping range merely affects the pre-constant of the convergence rate, not the asymptotic behavior.} All the code is available at https://github.com/NYCU-RL-Bandits-Lab/Neural-PPO-Clip

\noindent \textbf{Our Contributions.} We summarize the main contributions of this paper as follows:
\begin{itemize}[leftmargin=*]
    \item {To establish the global convergence of PPO-Clip, we leverage the connection between PPO-Clip and the hinge loss, leading to the formulation of generalized PPO-Clip objectives. Additionally, we harness the power of the entropic mirror descent (EMDA) \citep{beck2003mirror} for tabular PPO-Clip under direct policy parameterization, thereby demonstrating its asymptotic convergence.}
    \item {Inspired by the tabular analysis, we present a two-step policy improvement framework based on EMDA for Neural PPO-Clip. This framework enhances the manageability of the analysis by effectively separating policy search from policy parameterization. Accordingly, we establish the first global convergence result and explicitly characterize the $O(1/\sqrt{T})$ min-iterate convergence rate for the generalized PPO-Clip and hence provide an affirmative answer to one critical open question about PPO-Clip.}
    \item {We gain deeper insights into the PPO-Clip performance. Our theoretical findings yield two key insights into the clipping mechanism, as mentioned earlier. Furthermore, our analysis extends seamlessly to various Neural PPO-Clip variants with different classifiers, guided by the provided sufficient conditions.}
\end{itemize}

\section{Preliminaries}
\label{section:pre}

\textbf{Markov Decision Processes.}
Consider a discounted Markov Decision Process $(\mathcal{S}, \mathcal{A}, \mathcal{P}, R, \gamma, \mu)$, where $\mathcal{S}$ is the state space (possibly \textit{infinite}), $\mathcal{A}$ is a \textit{finite} action space, $\mathcal{P}: \mathcal{S} \times \mathcal{A} \times \mathcal{S} \rightarrow [0, 1]$ is the transition dynamic of the environment, $R: \mathcal{S} \times \mathcal{A} \rightarrow [0, R_{\max}]$ is the bounded reward function, $\gamma \in (0,1)$ is the discount factor, and $\mu$ is the initial state distribution.
Given a policy $\pi: \mathcal{S} \rightarrow \Delta(\mathcal{A})$, where $\Delta(\mathcal{A})$ is the unit simplex over $\mathcal{A}$, we define the state-action value function $Q^{\pi}(\cdot, \cdot) \coloneqq \mathbb{E}_{a_t \sim \pi(\cdot|s_t), s_{t+1} \sim \mathcal{P}(\cdot|s_t, a_t)}[\sum_{t=0}^{\infty} \gamma^t R(s_t, a_t) \rvert s_0 = s, a_0 = a]$.
Moreover, we define $V^{\pi}(s) \coloneqq \mathbb{E}_{a \sim \pi(\cdot\rvert s)}[Q^{\pi}(s, a)]$ and $A^{\pi}(s, a) \coloneqq Q^{\pi}(s, a) - V^{\pi}(s)$.
Also, we denote $\pi^*$ as an optimal policy that attains the maximum total expected reward and denote $\pi_0$ as the uniform policy. We introduce $\nu_{\pi}(s) = (1 - \gamma) \sum_{t=0}^{\infty} \gamma^t \mathbb{P}(s_t = s | s_0 \sim \mu, \pi)$ as the discounted state visitation distribution induced by $\pi$ and $\sigma_{\pi}(s, a) = \nu_{\pi}(s) \cdot \pi(a|s)$ as the state-action visitation distribution induced by $\pi$. In addition, we define the distribution $\nu^*$ and $\sigma^*$ as the discounted state visitation distribution and the state-action visitation distribution induced by the optimal policy $\pi^*$, respectively. Moreover, we define $\tilde{\sigma}_{\pi} = \nu_{\pi} \pi_0$ as the state-action distribution induced by interactions with the environment through $\pi$, sampling actions from the uniform policy $\pi_0$. 
We use $\mathbb{E}_{\nu_{\pi}}[\cdot]$ and $\mathbb{E}_{\sigma_{\pi}}[\cdot]$ as the shorthand notations of $\mathbb{E}_{s \sim \nu_{\pi}}[\cdot]$ and $\mathbb{E}_{(s,a) \sim \sigma_{\pi}}[\cdot]$, respectively.

For the convergence property, we define the total expected reward over the state distribution $\nu^*$ as
\begin{align}
\label{eq:cL}
    \mathcal{L}(\pi) := \mathbb{E}_{\nu^*}[V^{\pi}(s)].
\end{align}
Here, a maximizer of (\ref{eq:cL}) is equivalent to the original definition of the optimal policy $\pi^*$. We will prove the global convergence by analyzing the difference in $\mathcal{L}$ between our policy and the optimal policy and show that the total expected reward monotonically increases.

\noindent\textbf{Proximal Policy Optimization (PPO).}
{PPO is an empirically successful algorithm that achieves policy improvement by maximizing a surrogate lower bound of the original objective, either through the Kullback-Leibler penalty (termed PPO-KL) or the clipped probability ratio (termed PPO-Clip). PPO-KL and PPO-Clip represent the two main branches of PPO, both aiming to enforce policy constraints during updates for policy improvement. It is crucial to emphasize that PPO-Clip represents a conceptual approach, utilizing the clipping mechanism to achieve policy constraints, rather than being a precise algorithm itself. In this paper, our focus is PPO-Clip.}
Let $\rho_{s, a}(\theta)$ denote the probability ratio $\frac{\pi_{\theta}(a|s)}{\pi_{\theta_{t}}(a|s)}$. PPO-Clip avoids large policy updates by applying a simple heuristic that clips the probability ratio by the clipping range $\epsilon$ and thereby removes the incentive for moving $\rho_{s,a}(\theta)$ away from 1. Specifically, the PPO-Clip objective is
\begin{align}
\label{eq:clipobject}
    L^{\text{clip}}(\theta) = \mathbb{E}_{\sigma_{t}}[\min\{&\rho_{s, a}(\theta) A^{\pi_{\theta_{t}}}(s, a), \nonumber \\ &\hspace{-36pt}\text{clip}(\rho_{s, a}(\theta), 1-\epsilon, 1+\epsilon) A^{\pi_{\theta_{t}}}(s, a)\}].
\end{align}
\noindent\textbf{Neural Networks.}
We introduce the notations and assumptions relevant to neural networks. {It is important to highlight that our analysis of neural networks draws inspiration from \citep{liu2019neural}, and we adopt their notations to ensure compatibility.} Specifically, this paper centers around the analysis of two-layer neural networks. For simplicity, let us consider $(s, a) \in \mathbb{R}^d$ for all $(s, a) \in \mathcal{S} \times \mathcal{A}$. We represent the two-layer neural network as $\text{NN}(\alpha;m)$, where $\alpha$ denotes the network input weights and $m$ represents the network width. These neural networks act as the parameterization for both our policy $\pi_{\theta}$ and the $Q$ function. The parameterized function associated with $\text{NN}(\alpha;m)$ is depicted as follows:
\begin{align}
    u_{\alpha}(s, a) = \frac{1}{\sqrt{m}} \sum_{i=1}^{m} b_i \cdot \sigma([\alpha]_i^{\top} (s, a)),
\end{align}
where $\alpha = ([\alpha]_1^{\top}, \dots, [\alpha]_m^{\top})^{\top} \in \mathbb{R}^{md}$ is the input weights, with $[\alpha]_i \in \mathbb{R}^d$, $b_i \in \{-1, 1\}$ are the weights of the output, and $\sigma(\cdot)$ refers to the Rectified Linear Unit (ReLU) activation function. The initializations for the input weights $\alpha_0$ and $b_i$ are provided as follows:
\begin{align}
\label{init}
    b_i \sim \text{Unif}(\{1, -1\}), [\alpha_0]_i \sim \mathcal{N}(0, I_d/d),
\end{align}
where both $b_i$ and $[\alpha_0]_i$ are i.i.d. for each $i \in [m]$ and $I_d$ is the $d \times d$ identity matrix. The values of $b_i$ remain fixed following initialization, with the training exclusively focused on adjusting the weights $\alpha$. To uphold the local linearization characteristics, we employ a projection mechanism that confines the training weights $\alpha$ within an $\ell_2$-ball centered at $\alpha_0$, which is represented as $B_{f} = \{ \alpha: \lVert\alpha - \alpha_0 \rVert_2 \le R_f\}$, where $f$ is the canonical name of the networks (It will be $f$ for the policy network and $Q$ for the Q function network in the following section). 

Our examination of neural networks is grounded in the subsequent assumptions, which are widely adopted regularity conditions for neural networks in the reinforcement learning literature \citep{liu2019neural, antos2007fitted, farahmand2016regularized}:

\begin{assumption}[Q Function Class]
\label{assump:func}
    We assume that the our neural network class possesses sufficient representational capacity to model the $Q$ function of any given policy $\pi$. Specifically, for any $R>0$, define a function class
    \begin{align}
    \label{function_class}
        \mathcal{F}_{R,m} = \Big\{\frac{1}{\sqrt{m}} \sum_{i=1}^{m} b_i \cdot \mathds{1}\{[\alpha_0]_i^{\top} (s, a) > 0\} \cdot [\alpha]_i^{\top} (s, a) \Big\},
    \end{align}
    for all $\alpha$ satisfying $\lVert\alpha - \alpha_0\rVert_2 \le R$, where $b_i$ and $\alpha_0$ are initialized as (\ref{init}).
    We assume that $Q^{\pi}(s, a) \in \mathcal{F}_{R_Q, m_Q}$ for any policy $\pi$, where $R_Q$ and $m_Q$ are the projection radius and width of the neural network for $Q$ function.
\end{assumption}

Given that $\mathcal{T}^{\pi} Q^{\pi}$ remains a $Q$ function, Assumption \ref{assump:func} affords us the property of {completeness} within our function class under the Bellman operator $\mathcal{T}^{\pi}$.

\noindent \textbf{Notations:} We use $\langle a, b \rangle$ and $a \circ b$ to denote the inner product and the Hadamard product, respectively.

\section{{Generalized PPO-Clip Objectives}}
\label{section:HPO}

\textbf{Connecting PPO-Clip and Hinge Loss.} According to \citep{hu2020rethinking, pi2020low}, the original PPO-Clip objective could be connected with the hinge loss. 
Specifically, the gradient of the clipped objective is indeed the negative of the gradient of hinge loss objective, i.e.,
\begin{align}
    \nonumber&\frac{\partial}{\partial\theta}   \min\{\rho_{s,a}(\theta){A}^{\pi}(s,a),\text{clip}(\rho_{s,a}(\theta),1-\epsilon,1+\epsilon){A}^{\pi}(s,a)\} \\
    &=-\frac{\partial}{\partial\theta}\ \lvert {A}^{\pi}(s,a)\rvert\ \ell(\sgn({A}^{\pi}(s,a)), \rho_{s,a}(\theta)-1, \epsilon),
\end{align}
where $\ell(y_{i}, f_{\theta}(x_i), \epsilon)$ is the hinge loss defined as $\max\{0, \epsilon-y_{i} \cdot f_{\theta}(x_i)\}$, $\epsilon$ is the margin, $y_{i}\in\{-1,1\}$ the label corresponding to the data $x_{i}$, and $f_\theta(x_{i})$ serves as the binary classifier. For completeness, please see Appendix \ref{app:compare PPO-Clip and HPO} for a detailed comparison of the two objectives.
From the above, maximizing the objective in (\ref{eq:clipobject}) can be rewritten as minimizing the following loss: 
\begin{align}
    \label{eq:hingeobject}
    L(\theta) = \sum_{s\in\mathcal{S}}d_{\mu}^{\pi}(s)\sum_{a\in\mathcal{A}}&\Big(\pi(a|s)\lvert {A}^{\pi}(s,a)\rvert\cdot \nonumber \\
    &\hspace{-36pt}\ell(\sgn({A}^{\pi}(s,a)), \rho_{s,a}(\theta)-1, \epsilon)\Big). 
\end{align}
In practice, we draw a batch of state-action pairs and use the sample average to approximately minimize the loss in (\ref{eq:hingeobject}).
\noindent \textbf{Generalized PPO-Clip Objectives.} {Based on the above reinterpretation of PPO-Clip, we provide a general form of the PPO-Clip loss function from a hinge loss perspective as follows,}
\begin{equation}
    {L_{\text{Hinge}}(\theta)}=\frac{1}{\lvert\mathcal{D}\rvert}\sum_{(s,a)\in\mathcal{D}}\text{weight}\times\ell(\text{label},\text{classifier},\text{margin}).\label{eq:HPO loss}
\end{equation}
Different combinations of classifiers, margins, and weights lead to different loss functions, thereby representing diverse algorithms. PPO-Clip is a special case of (\ref{eq:HPO loss}) with a specific classifier $\rho_{s,a}(\theta)-1$. Another variant, termed PPO-Clip-sub in this paper, can be obtained by employing a subtraction classifier, i.e., $\pi_{\theta}(a|s) - \pi_{\theta_t}(a|s)$.
There are several other variants under this generalized objective by employing distinct classifiers, e.g., $\log(\pi_{\theta}(a|s)) - \log(\pi_{\theta_t}(a|s))$ and $\sqrt{\rho_{s,a}(\theta)}-1$.
We demonstrate the empirical evaluation of these variants in Section \ref{section:Discussions}.
Given the above examples, the proposed objective provides to generalizing PPO-Clip via various classifiers, thereby expanding the objective choices within the context of PPO-Clip. This generalization also connects the PPO-Clip with the classifier selection paradigm. Additionally, this generalized objective provide an intution to understand more about the clipping mechanism. Please refer to Section \ref{subsection: understanding}.


\section{Tabular PPO-Clip}

\subsection{Direct Policy Parameterization}
In this section, we study the global convergence of PPO-Clip with direct parameterization, i.e., 
policies are parameterized by $\pi(a|s)=\theta_{s,a}$, where $ \theta_s\in\Delta(\mathcal{A})$ denotes the vector $\theta_{s,\cdot}$ and $\theta\in\Delta(\mathcal{A})^{\lvert\mathcal{S}\rvert}$.
We use $V^{(t)}(s)$ and $A^{(t)}(s,a)$ as the shorthands for $V^{\pi^{(t)}}(s)$ and $A^{\pi^{(t)}}(s,a)$, respectively.

For the sake of clarity, we focus our discussion on the original PPO-Clip rather than delving into the broader scope of the generalized objective (\ref{eq:HPO loss}). 
Furthermore, we also provide additional analysis for other PPO-Clip variants with different classifiers in Appendix \ref{app:secondthm}.
Note that by choosing the weight as $\lvert {A}^{(t)}(s,a)\rvert$, the classifier as $\rho_{s,a}^{(t)}(\theta)-1$, and the margin as $\epsilon$ in (\ref{eq:HPO loss}) at the $t$-th iteration, the generalized objective would recover the form of the objective of PPO-Clip, which denoted as $\hat{L}^{(t)}(\theta)$.
The detailed algorithm is shown in Appendix \ref{app:pseudo_code} as Algorithm \ref{algo:HPO-AM}.

In each iteration, PPO-Clip updates the policy by minimizing the loss $\hat{L}^{(t)}(\theta)$ via the EMDA \citep{beck2003mirror}. 
While there are alternative ways to minimize the loss $\hat{L}^{(t)}(\theta)$ over $\Delta(\mathcal{A})^{\lvert \mathcal{S}\rvert}$ (e.g., the projected subgradient method), we leverage EMDA for the following two reasons:
(i) PPO-Clip achieves policy improvement by increasing or decreasing the probability of those state-action pairs in $\cD^{(t)}$ based on the sign of ${{A}^{(t)}(s,a)}$ as well as properly reallocating the probabilities of those state-action pairs \textit{not} contained in the batch (to ensure the probability sum is one). Using EMDA enforces a proper reallocation in PPO-Clip, as shown later in the proof of Theorem \ref{thm:mini-batch} in Appendix \ref{app:mini-batch thm}; (ii) The exponentiated gradient scheme of EMDA guarantees $\pi^{(t)}$ remains strictly positive for all state-action pairs in each iteration $t$, ensuring the well-defined nature of the probability ratio $\rho_{s,a}(\theta)$ used in PPO-Clip.
In this section, we consider the stylized setting with tabular policy and true advantage mainly for motivating the PPO-Clip method and its analysis. 

\subsection{Global Convergence of PPO-Clip with Direct Parameterization}
We first make the following assumptions. Note that we only consider these assumptions in the tabular case.

\begin{assumption}[Infinite Visitation to Each State-Action Pair]
\label{assumption:infinite visit}
    Each state-action pair $(s,a)$ appears infinitely often in $\{\cD^{(\tau)}\}$, i.e., $\lim_{t\rightarrow \infty} \sum_{\tau=0}^{t}\mathds{1}\{(s,a)\in \cD^{(\tau)}\}=\infty$, with probability one.
\end{assumption}

\begin{assumption}
\label{assumption:distinct states}
    In each iteration $t$, the state-action pairs in $\cD^{(t)}$ have distinct states.
\end{assumption}

Assumption \ref{assumption:infinite visit} resembles the standard infinite-exploration condition commonly used in the temporal-difference methods, such as Sarsa \citep{singh2000convergence}. Assumption \ref{assumption:distinct states} is rather mild: 
(i) This can be met by post-processing the mini-batch of state-action pairs via an additional sub-sampling step; (ii) In most RL problems with discrete actions, the state space is typically much larger than the action space. 

\begin{theorem}[Global Convergence of PPO-Clip]
\label{thm:mini-batch}
Under PPO-Clip, we have $V^{(t)}(s)\rightarrow V^{\pi^*}(s)\text{ as }t\rightarrow\infty,\ \forall s\in\mathcal{S}$, with probability one.
\end{theorem}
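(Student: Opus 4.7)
The plan is to establish global convergence in three stages: (i) show monotone policy improvement, (ii) conclude pointwise convergence of $V^{(t)}$, and (iii) identify the limit as $V^{\pi^*}$ using the infinite-visitation assumption. First I would examine the EMDA update applied to the PPO-Clip loss $\hat{L}^{(t)}(\theta)$. Rewriting the clipped surrogate in the hinge-loss form of equation~(\ref{eq:hingeobject}) and computing the subgradient at $\theta=\theta_t$ (where $\rho^{(t)}_{s,a}=1$), the EMDA step has the exponentiated-gradient form
\begin{equation*}
\pi^{(t+1)}(a\rvert s) \;\propto\; \pi^{(t)}(a\rvert s)\,\exp\bigl(\eta_t\,\mathds{1}\{(s,a)\in\mathcal{D}^{(t)}\}\,\lvert A^{(t)}(s,a)\rvert\,\sgn(A^{(t)}(s,a))\bigr),
\end{equation*}
so probabilities of positive-advantage actions in the batch increase, negative-advantage ones decrease, and Assumption~\ref{assumption:distinct states} ensures that updates across states do not interfere through normalization. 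Combining this with the performance difference lemma yields $V^{(t+1)}(s)\ge V^{(t)}(s)$ for all $s\in\mathcal{S}$; this is where EMDA's built-in reallocation (versus a raw projected subgradient step) matters, because it keeps the update inside the simplex without creating ``negative'' gradients at untouched actions.

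Since $V^{(t)}(s)$ is nondecreasing in $t$ and uniformly bounded by $R_{\max}/(1-\gamma)$, it converges pointwise to some $V^{\infty}(s)$. By compactness of $\Delta(\mathcal{A})^{\lvert\mathcal{S}\rvert}$, the policy sequence admits a limit point $\pi^{\infty}$; I would then argue that the continuity of $V^{\pi}$ and $A^{\pi}$ in $\pi$ (on the closed simplex, where $\pi^{(t)}>0$ is preserved by EMDA's exponential update) lets us identify $V^{\pi^{\infty}}=V^{\infty}$ along any convergent subsequence. The remaining task is to rule out the case $V^{\infty}\neq V^{\pi^*}$.

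For this, suppose for contradiction that $V^{\infty}(s_0)<V^{\pi^*}(s_0)$ for some $s_0$. Applying the performance difference lemma to $\pi^{\infty}$ and $\pi^*$ produces a state-action pair $(\tilde s,\tilde a)$ and a constant $\delta>0$ such that $A^{\pi^{\infty}}(\tilde s,\tilde a)\ge\delta$ and $\tilde s$ is reachable from $s_0$. Under Assumption~\ref{assumption:infinite visit}, almost surely $(\tilde s,\tilde a)\in\mathcal{D}^{(t)}$ for infinitely many $t$. Along a subsequence with $\pi^{(t)}\to\pi^{\infty}$ and $(\tilde s,\tilde a)\in\mathcal{D}^{(t)}$, the sign and magnitude of $A^{(t)}(\tilde s,\tilde a)$ stabilize near $\delta$, and because $\rho^{(t)}_{\tilde s,\tilde a}=1$ at the beginning of each such iteration (so the hinge term $\epsilon-\sgn(A^{(t)})(\rho-1)=\epsilon>0$ is strictly active), each EMDA step increases $\pi^{(t)}(\tilde a\rvert\tilde s)$ by a multiplicative factor bounded below by $\exp(\eta\delta/2)>1$ for large $t$. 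Translating this into a uniform lower bound on $V^{(t+1)}(\tilde s)-V^{(t)}(\tilde s)$ via the performance difference lemma contradicts the convergence of $V^{(t)}$, closing the argument.

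The hard part is the last step: one must show that the clipping/hinge structure does not prematurely kill the gradient at $(\tilde s,\tilde a)$. The key observation that makes this go through is that at the start of every iteration $\rho^{(t)}_{s,a}\equiv 1$, which lies strictly inside the hinge kink regardless of $\epsilon$, so the subgradient is never zero as long as $\lvert A^{(t)}\rvert>0$. A subtle related issue is that positive-advantage actions outside the batch are demoted by renormalization; here Assumption~\ref{assumption:distinct states} is used to localize the renormalization to a single state at a time, and Assumption~\ref{assumption:infinite visit} guarantees these ``missed'' actions recover in later iterations. Handling measurability of the contradictory event and choosing the subsequence almost surely is standard but must be done with care.
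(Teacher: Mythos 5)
Your skeleton---monotone state-wise improvement via EMDA, hence pointwise convergence of $V^{(t)}$, then a contradiction built from an action whose limiting advantage stays positive and which is visited infinitely often---is the same architecture as the paper's proof (Lemmas \ref{lemma:policy improvement mini-batch}, \ref{lemma:pi convergence mini-batch}, and \ref{lemma:epsilon lower bound mini-batch}). The improvement step and the existence of the limits $V^{(\infty)}, A^{(\infty)}$ are handled essentially as you describe; the paper does not even need a policy limit point, since it works directly with the sets $I_s^{+}, I_s^{0}, I_s^{-}$ defined from $A^{(\infty)}$ rather than with the advantage of a subsequential limit $\pi^{\infty}$.

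The gap is in your final step. First, your EMDA update formula is wrong: differentiating the hinge loss with the ratio classifier $\rho_{s,a}(\theta)-1=\theta_{s,a}/\pi^{(t)}(a\rvert s)-1$ produces a gradient of magnitude proportional to $1/\pi^{(t)}(a\rvert s)$ (cf.\ (\ref{eq:g_sa^k})), whereas you wrote the exponent as $\eta\,\lvert A^{(t)}(s,a)\rvert$ with no $1/\pi^{(t)}(a\rvert s)$ factor. Second, and consequently, the deduction ``multiplicative boost by $\exp(\eta\delta/2)$ $\Rightarrow$ uniform lower bound on $V^{(t+1)}(\tilde s)-V^{(t)}(\tilde s)$'' does not follow: by the performance difference lemma the one-step gain at $\tilde s$ is governed by $(\pi^{(t+1)}(\tilde a\rvert \tilde s)-\pi^{(t)}(\tilde a\rvert \tilde s))A^{(t)}(\tilde s,\tilde a)$, and a \emph{bounded} multiplicative factor applied to a probability $\pi^{(t)}(\tilde a\rvert \tilde s)$ that may drift toward zero (renormalization when other actions at $\tilde s$ are boosted can shrink it between visits) yields a gain that vanishes---perfectly consistent with $V^{(t)}$ converging to a suboptimal limit. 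You never establish that $\pi^{(t)}(\tilde a\rvert \tilde s)$ stays bounded away from zero along your subsequence, and with your bounded-increment update it is unclear how this could be done under Assumption \ref{assumption:infinite visit} alone. The missing ingredient is exactly the $1/\pi^{(t)}(a\rvert s)$ blow-up of the gradient: the paper's Lemma \ref{lemma:epsilon lower bound mini-batch} uses $\min_{z>0} z e^{\eta/z}=e\eta$ to show that whenever a positive-advantage action is in the batch, its probability after the update exceeds the universal constant $e\eta/(e\eta+1)$, and this is what produces a quantity bounded below infinitely often (there, $\sum_{a\in I_s^-}\pi^{(t)}(a\rvert s)\ge c$, contradicting Lemma \ref{lemma:pi convergence mini-batch}). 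Your value-improvement contradiction can be repaired along the same lines, but only after restoring the correct gradient and proving this probability lower bound; as written, the argument does not close.
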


The proof of Theorem \ref{thm:mini-batch} is provided in Appendix \ref{app:mini-batch thm}. 
We highlight the main ideas behind the proof of Theorem \ref{thm:mini-batch}:
(i) \textit{State-wise policy improvement:} Through the lens of generalized objective, we show that PPO-Clip enjoys state-wise policy improvement in every iteration with the help of the EMDA subroutine. 
This property greatly facilitates the rest of the convergence analysis.
(ii) \textit{Quantifying the probabilities of those actions with positive or negative advantages in the limit}: By (i), we know the limits of the value functions and the advantage function all exist. Then, we proceed to show that the actions with positive advantages in the limit cannot exist by establishing a contradiction.
The above also manifests how reinterpreting PPO-Clip helps with establishing the convergence guarantee.

\section{Neural PPO-Clip}
\label{section:Neural}
In this section, we begin by illustrating the process of decoupling policy search and policy parameterization, drawing inspiration from the tabular case. Subsequently, we provide a comprehensive overview of the neural PPO-Clip algorithm. We proceed to delineate the intricacies posed by our analysis and present our results on the min-iterate convergence rate, both for the generalized PPO-Clip. In particular, the convergence rate of PPO-Clip can be view as a special case of our general results. Lastly, we offer a profound insight into the understanding of the clipping mechanism.

\subsection{{EMDA-Based Policy Search}}
\label{section:NeuralHPO:NAPS}
Drawing inspiration from the tabular case, we proceed to present our two-step policy improvement scheme based on EMDA, and we call it EMDA-based Policy Search.
Specifically, this scheme consists of two subroutines:
\begin{itemize}[leftmargin=*]
    \item {\textbf{Direct policy search}:
In this step, we directly search for an improved policy in the policy space by EMDA.
More specifically, in each iteration $t$, we do a policy search by applying EMDA with direct parameterization to minimize the generalized PPO-Clip objective in (\ref{eq:HPO loss}) for finitely many iterations $K$ and thereby obtain an improved policy $\widehat{\pi}_{t+1}$ as the target policy. The pseudo code of EMDA is provided in Algorithm \ref{algo:2}. Notably, under EMDA, we can obtain an explicit expression of the target policy $\widehat{\pi}_{t+1}$. } 
\item {\textbf{Neural approximation for the target policy}: Given the target policy $\widehat{\pi}_{t+1}$ obtained by EMDA, we then approximate it in the parameter space by utilizing the representation power of neural networks via a regression-based policy update scheme (e.g., by using the mean-squared error loss).
The detailed neural parameterization will be described in the next subsection.}
\end{itemize}
While the decision to employ EMDA is inspired by the tabular case, there are two primary motivations and benefits for integrating EMDA with direct parameterization:
\begin{itemize}[leftmargin=*]
    \item {\textbf{Decoupling improvement and approximation:} One major goal of this paper is to provide rigorous theoretical guarantees for PPO-Clip under neural function approximation. To make the analysis tractable and general, we would like to decouple policy improvement and function approximation of the policy. To achieve this, we adopt the EMDA-based two-step approach outlined previously.}
    \item {\textbf{EMDA-induced closed-form expression of the target policy:} For policy optimization analysis, the goal is often to derive a closed-form optimal solution for the policy improvement objective as the ideal target policy. However, such a closed-form optimal solution of an \textit{arbitrary} objective function does not always exist. A case in point is the loss function of PPO-Clip. From this view, EMDA, which enjoys closed-form updates, substantially facilitates the convergence analysis, as can be observed in Proposition \ref{pp:PI} presented in the subsequent subsection \ref{subsection: Neural PPO-Clip}.}
\end{itemize}

\subsection{{Neural PPO-Clip}}
\label{subsection: Neural PPO-Clip}

\textbf{Parameterization Setting.} At each iteration $t$, we parameterize our policy as an energy-based policy $\pi_{\theta_t}(a|s) \propto \exp\{\tau_t^{-1} f_{\theta_t}(s, a)\}$, where $\tau_t$ denotes the temperature parameter and $f_{\theta_t}(s, a) = \text{NN}(\theta_t;m_f)$ corresponds to the energy functions. The width of the neural network $f_\theta$ is denoted as $m_f$, as defined in Section \ref{section:pre}. 
Likewise, we parameterize our state-action value function as $Q_{\omega}(s, a) = \text{NN}(\omega;m_Q)$, with width $m_Q$ of the neural network $Q_\omega$. Concurrently, we define $V_{\omega}(s)$ as the value function derived from the Bellman Expectation Equation. Also, we define $A_{\omega}(s, a) := Q_{\omega}(s, a) - V_{\omega}(s)$ to be the advantage function.

\noindent \textbf{Policy Improvement.}
According to the {EMDA-based Policy Search} framework presented above, we first give the closed-form of the obtained target policy of Neural PPO-Clip as follows. The detailed proof is in Appendix \ref{app:B}.

\begin{prop}[EMDA Target Policy]
\label{pp:PI}
    For the target policy obtained by the EMDA subroutine at the $t$-th iteration, we have
    \begin{align}
        \log \widehat{\pi}_{t+1}(a|s) \propto C_t(s, a) A_{\omega_t}(s, a) + \tau_{t}^{-1} f_{\theta_t}(s, a), 
    \end{align}
    where $C_t(s, a) A_{\omega_t}(s, a) = -\sum_{k=0}^{K-1} \eta g_{s,a}^{(k)}$ as given in Algorithm \ref{algo:2}.
\end{prop}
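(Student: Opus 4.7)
The plan is to obtain the claimed expression essentially by unrolling the EMDA inner loop from the initialization $\widehat\pi^{(0)}=\pi_{\theta_t}$, so that the result becomes a direct consequence of two facts: (i) EMDA on the simplex admits a closed-form exponentiated-gradient update, and (ii) the energy-based parameterization gives $\log\pi_{\theta_t}(a\mid s)$ an affine expression in $f_{\theta_t}(s,a)$. I will not need any properties of the specific loss beyond what Algorithm~\ref{algo:2} already encodes via the subgradients $g_{s,a}^{(k)}$.

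First, I would recall that EMDA applied to the generalized PPO-Clip loss under direct parameterization over $\Delta(\mathcal{A})^{|\mathcal{S}|}$ yields the closed-form update
\begin{equation*}
    \widehat\pi^{(k+1)}(a\mid s) \;=\; \frac{\widehat\pi^{(k)}(a\mid s)\,\exp\bigl(-\eta\, g_{s,a}^{(k)}\bigr)}{\sum_{a'\in\mathcal{A}}\widehat\pi^{(k)}(a'\mid s)\,\exp\bigl(-\eta\, g_{s,a'}^{(k)}\bigr)},
\end{equation*}
which follows from the Bregman-projection characterization of mirror descent with negative-entropy mirror map restricted to each simplex $\Delta(\mathcal{A})$. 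Iterating this identity from $k=0$ to $k=K-1$ and absorbing all normalizing factors into the proportionality sign gives
\begin{equation*}
    \widehat\pi_{t+1}(a\mid s) \;\propto\; \widehat\pi^{(0)}(a\mid s)\,\exp\!\Bigl(-\textstyle\sum_{k=0}^{K-1}\eta\, g_{s,a}^{(k)}\Bigr).
\end{equation*}
Taking logarithms and using the initialization $\widehat\pi^{(0)}=\pi_{\theta_t}$, together with the energy-based form $\log\pi_{\theta_t}(a\mid s) = \tau_t^{-1}f_{\theta_t}(s,a) - \log Z_t(s)$ (where $Z_t(s)$ is a state-dependent normalizer that can be folded into $\propto$), I obtain
\begin{equation*}
    \log\widehat\pi_{t+1}(a\mid s) \;\propto\; \tau_t^{-1}f_{\theta_t}(s,a) \;-\; \sum_{k=0}^{K-1}\eta\, g_{s,a}^{(k)}.
\end{equation*}

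Finally, I would invoke the definition set in Algorithm~\ref{algo:2}, namely $C_t(s,a)\,A_{\omega_t}(s,a)=-\sum_{k=0}^{K-1}\eta\, g_{s,a}^{(k)}$, to rewrite the second term and conclude the claimed identity. The only point deserving care is that this substitution is meaningful: since the advantage estimate $A_{\omega_t}$ is held fixed throughout the EMDA inner loop, each subgradient $g_{s,a}^{(k)}$ of the generalized hinge objective is proportional to $A_{\omega_t}(s,a)$ (the $|A|$ weight multiplied by the $\operatorname{sgn}(A)$ hinge label reproduces $A$, as observed in Section~\ref{section:HPO}), so defining $C_t(s,a)$ as the aggregated scalar coefficient is consistent. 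I expect no substantive obstacle here; the main care is just bookkeeping of the normalizers (which are $a$-independent and hence vanish into $\propto$) and verifying that the exponentiated-gradient form is the correct closed-form for EMDA on the product of simplices, which is standard \citep{beck2003mirror}.
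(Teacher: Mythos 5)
Your proposal is correct and follows essentially the same route as the paper's proof: unroll the multiplicative EMDA updates from the initialization $\pi_{\theta_t}$, take logarithms, absorb the state-dependent normalizers (the inner-product factors and $\log Z_t(s)$) into the proportionality, and substitute the definition $C_t(s,a)A_{\omega_t}(s,a)=-\sum_{k=0}^{K-1}\eta\,g_{s,a}^{(k)}$ from Algorithm~\ref{algo:2}. No substantive differences.
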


Recall that the target policy $\widehat{\pi}$ is the direct parameterization in the policy space, but our policy $\pi_{\theta}$ is an energy-based (softmax) policy that is proportional to the exponentiated energy function. 
This explains why we consider the $\log \widehat{\pi}_{t+1}(a|s)$ in Proposition \ref{pp:PI}.
{Another benefit of using EMDA is that it closely matches the energy-based policies considered in Neural PPO-Clip due to the inherent exponentiated gradient update.}

Then, we discuss the details of the neural function approximation of our policy. After obtaining the target policy by Proposition \ref{pp:PI}, we solve the Mean Squared Error (MSE) subproblem with respect to $\theta$ to approximate the target policy as follows:
\begin{align}
    \mathbb{E}_{\tilde{\sigma}_t}[(f_{\theta}(s, a) - \tau_{t+1} (C_t(s, a) A_{\omega_t}(s, a) + \tau_{t}^{-1} f_{\theta_t}(s, a)))^2].
\end{align}
Notice that we consider the state-action distribution $\tilde{\sigma}_t$ sampling the action through a uniform policy $\pi_0$. In this manner, we use more exploratory data to improve our current policy. In particular, we use the SGD to tackle the above subproblem, and the pseudo code is provided in Appendix \ref{app:pseudo_code}.

        
\noindent \textbf{Policy Evaluation.} To evaluate $Q$, we use a neural network to approximate the true state-action value function $Q^{\pi_{\theta_t}}$ by solving the Mean Square Bellman Error (MSBE) subproblem. The MSBE subproblem is to minimize the following objective with respect to $\omega$ at each iteration $t$:
\begin{align}
    \mathbb{E}_{\sigma_t}[(Q_{\omega}(s, a) - [\mathcal{T}^{\pi_{\theta_t}}  Q_{\omega}](s, a))^2],
\end{align}
where $\mathcal{T}^{\pi_{\theta_t}} $ is the Bellman operator of policy $\pi_{\theta_t}$ such that
\begin{align}
    &[\mathcal{T}^{\pi_{\theta_t}} Q_{\omega}](s, a)\nonumber\\&= \mathbb{E}[r(s, a) + \gamma Q_{\omega}(s',a') \mid s' \sim \mathcal{P}(\cdot|s, a), a' \sim \pi_{\theta_t}(\cdot|s')].
\end{align}
The pseudo code of neural TD update for state-action value function $Q_{\omega}$ is in Appendix \ref{app:pseudo_code}.
It is worth mentioning that this variant of Neural PPO-Clip is not a fully on-policy algorithm. Although we interact with the environment by our current policy, we sample the actions by the uniform policy $\pi_0$ for policy improvement. {We provide the pseudo code of Neural PPO-Clip as the following Algorithm \ref{algo:1 incomplete} 
\begin{algorithm}[!htbp]
\caption{Neural PPO-Clip}
\label{algo:1 incomplete}
\textbf{Input}: $L_{\text{Hinge}}(\theta)$, $T$, $\epsilon$, EMDA step size $\eta$, number of EMDA iterations $K$, number of SGD, TD update iterations $T_{\text{upd}}$ \\
\textbf{Initialization}: uniform policy $\pi_{\theta_0}$
    \begin{algorithmic}[1]
        \FOR{$t=1,\cdots,T-1$}
            \STATE Set temperature parameter $\tau_{t+1}$\;
            \STATE Sample the tuple $\{s_i, a_i, a_i^0, s_i',a_i'\}_{i=1}^{T_{\text{upd}}}$\;
            \STATE Run TD as Algorithm \ref{algo:3}: $Q_{\omega_t} = \text{NN}(\omega_t;m_{Q})$\;
            \STATE Calculate $V_{\omega_t}$ and the advantage $A_{\omega_t} = Q_{\omega_t} - V_{\omega_t}$\;
            \STATE Run EMDA as Algorithm \ref{algo:2} with $L_{\text{Hinge}}(\theta)$\;
            \STATE Run SGD as Algorithm \ref{algo:4}: $f_{\theta_{t+1}} = \text{NN}(\theta_{t+1};m_f)$\;
            \STATE Update the policy $\pi_{\theta_{t+1}} \propto \exp \{ \tau_{t+1}^{-1} f_{\theta_{t+1}}\}$\;
        \ENDFOR
    \end{algorithmic}
\end{algorithm}
\begin{algorithm}[!htbp]
\caption{EMDA}
\label{algo:2}
\textbf{Input}: $L_{\text{Hinge}}(\theta)$, EMDA step size $\eta$, number of EMDA iterations $K$, initial policy $\pi_{\theta_{t}}$, sample batch $\{s_i\}_{i=1}^{T_{\text{upd}}}$ \\
\textbf{Initialization}: $\tilde{\theta}^{(0)} = \pi_{\theta_{t}}$, $C_t(s, a) = 0$, for all $s, a$ \\
\textbf{Output}: $\widehat{\pi}_{t+1}$ and $C_t$
\begin{algorithmic}[1]
    \FOR{$k=0,\cdots,K-1$}
        \FOR{\text{each state} $s$ \text{in the batch}}
        \STATE Find $g_{s,a}^{(k)} = \left.\frac{\partial L_{\text{Hinge}}(\theta)}{\partial \theta_{s, a}}\right|_{\theta = \tilde{\theta}^{(k)}}$, for each $a$\;
        \STATE Let $w_s = (e^{-\eta g_{s,1}}, \dots, e^{-\eta g_{s,|\mathcal{A}|}})$\;
        \STATE $\tilde{\theta}^{(k+1)} = \frac{1}{\langle w_s, \tilde{\theta}^{(k)} \rangle} (w_s \circ \tilde{\theta}^{(k)})$\;
        \STATE $C_t(s, a) \leftarrow C_t(s, a) - \eta g_{s,a}^{(k)} / A_{\omega_t}(s, a)$, for 
        
        \ \ \ \qquad each $a$ with $A_{\omega_t}(s, a) \neq 0$\;
        \ENDFOR
    \ENDFOR
    \STATE $\widehat{\pi}_{t+1} = \tilde{\theta}^{(K)}$\;
\end{algorithmic}
\end{algorithm}
(please refer to Algorithm \ref{algo:1} in Appendix \ref{app:pseudo_code} for the complete version) and the pseudo code of EMDA as Algorithm \ref{algo:2}.
The pseudo code of Algorithms \ref{algo:3}-\ref{algo:4} used by Algorithm \ref{algo:1 incomplete} is in Appendix \ref{app:pseudo_code}.}

Regarding our analyses, we need assumptions about distribution density. Assumption \ref{assump:reg} states that the distribution $\sigma_{\pi}$ is sufficiently regular, which is required to analyze the neural network error.
Additionally, the common theory works \citep{antos2007fitted, farahmand2010error, farahmand2016regularized, chen2019information, liu2019neural} have the concentrability assumption, we also have this common regularity condition.

\begin{assumption}[Regularity of Stationary Distribution]
\label{assump:reg}
    Given any state-action visitation distribution $\sigma_{\pi}$, there exists a universal upper bounding constant $c > 0$ for any weight vector $z \in \mathbb{R}^d$ and $\zeta > 0$, such that $\mathbb{E}_{\sigma_{\pi}}[\mathds{1}\{|z^{\top}(s, a)| \le \zeta\} | z] \le c \cdot \zeta/ \lVert z \rVert_2$ holds almost surely.
\end{assumption}

\begin{assumption}[Concentrability Coefficient and Ratio]
\label{assump:con}
    Define the density ratio between the policy-induced distributions and the policies,
    \begin{align}
        \phi^*_t = \mathbb{E}_{\tilde{\sigma}_t}[\left|\frac{d \pi^*}{d\pi_0} - \frac{d \pi_{\theta_t}}{d \pi_0}\right|^2]^{\frac{1}{2}}, \psi^*_t = \mathbb{E}_{\sigma_t}[\left|\frac{d \sigma^*}{d \sigma_t} - \frac{d \nu^*}{d \nu_t}\right|^2]^{\frac{1}{2}},
    \end{align}
    where the above fractions are the Radon–Nikodym Derivatives. {We assume that there exist $0<\phi^*,\psi^*<\infty$ such that $\phi^*_t < \phi^*$ and $\psi^*_t < \psi^*$, for all $t$. Also, let $C_{\infty}<\infty$ be the concentrability coefficient. We assume that the density ratio between the optimal state distribution and any state distribution, i.e. $\lVert \nu^* / \nu\rVert_{\infty}< C_{\infty}$ for any $\nu$. }
\end{assumption}


            

\subsection{Convergence Guarantee of Neural PPO-Clip}
\label{section:analysis}

In this subsection, we present the convergence analysis of Neural PPO-Clip. Inspired by the analysis of \citep{liu2019neural}, we analyze the convergence behavior of Neural PPO-Clip based on the neural networks analysis technique. Nevertheless, the analysis presents several unique technical challenges in establishing its convergence: (i) \textit{Tight coupling between function approximation error and the clipping behavior}: The clipping mechanism can be viewed as an indicator function. The function approximation for advantage would significantly influence the value of the indicator function in a highly complex manner. As a result, handling the error between the neural approximated advantage and the true advantage serves as one major challenge in the analysis
{(please refer to the proof of Lemma \ref{lm:ep} in Appendix \ref{app:main_thm} for more details)}; (ii) \textit{Lack of a closed-form expression of policy update}: Due to the clipping function in the hinge loss objective and the iterative updates in the EMDA subroutine, the new policy does not have a simple closed-form expression. This is one salient difference between the analysis of Neural PPO-Clip and other neural algorithms (cf. \citep{liu2019neural});
(iii) \textit{Neural networks analysis on advantage function}: Another technicality is that the advantage function requires the neural network projection and linearization properties to characterize the approximation error. However, since we use the neural network to approximate the state-action value function instead of the advantage function, it requires additional effort to establish the error bound of the advantage function {(please refer to the proof of Lemma \ref{lm:PI_error})}.

Given that we need to analyze the error between our approximation and the true function, we further define the target policy under the true advantage function $A^{\pi_{\theta_t}}$ as {$\pi_{t+1}(a|s) := \bar{C}_t(s, a)A^{\pi_{\theta_t}}(s, a) + \tau_t^{-1} f_{\theta_t}(s, a)$, where $\bar{C}_t(s, a)$ is the $C_t(s,a)$ obtained under $A^{\pi_{\theta_t}}$}.
Moreover, all the expectations about $A_{\omega}$ throughout the analysis are with respect to the randomness of the neural network initialization.
Below we state the min-iterate convergence rate and the sufficient condition of Neural PPO-Clip, which is also the main theorem of our paper. Throughout this section, we solely suppose Assumptions \ref{assump:func}, \ref{assump:reg}, and \ref{assump:con} hold.

The central result of this paper is Theorem \ref{thm:main}. In this theorem, $L_C(T)$ and $U_C(T)$ are functions influenced by $T$ and determined by $\bar{C}_t$, a classifier-specific attribute. For detailed supporting lemmas and proofs, see Appendix \ref{app:main_thm}.
\begin{theorem}[{General} Convergence Rate of Neural PPO-Clip]
\label{thm:main}
    Consider the Neural PPO-Clip with the classifier satisfying the following conditions for all $t$,
    \begin{alignat}{2}
        \label{suff:1}
        &\text{(i) } L_{C}(T) \cdot |A^{\pi}(s, a)| \le \bar{C}_t(s, a) &&\cdot |A^{\pi}(s, a)| \nonumber \\
        & &&\le U_{C}(T) \cdot |A^{\pi}(s, a)|, \\
        \label{suff:2}
        &\text{(ii) } L_C(T) = \omega(T^{-1}), U_C(T) = &&O(T^{-1/2}).
    \end{alignat}
    Then, the policy sequence $\{\pi_{\theta_t}\}_{t=0}^{T}$ obtained by Neural PPO-Clip satisfies
    \begin{align}
    \label{thm:main:eq}
        &\min_{0\le t \le T} \{\mathcal{L}(\pi^*) - \mathcal{L}(\pi_{\theta_t})\} \nonumber \\ &\le \frac{\log |\mathcal{A}| + \sum_{t=0}^{T-1} (\varepsilon_t + \varepsilon_t') + T U_{C}^2 (2 \psi^* + M)}{T L_{C} (1 - \gamma)},
    \end{align}
    where $\varepsilon_t = C_{\infty} \tau_{t+1}^{-1} \phi^* \epsilon_{t+1}^{1/2} + Y^{1/2} \psi^* \epsilon_t'^{1/2}$, $\varepsilon_t' = |\mathcal{A}| \cdot C_{\infty} \tau_{t+1}^{-2} \epsilon_{t+1}$, $M = 4\mathbb{E}_{\nu_t}[\max_{a} (Q_{\omega_0}(s, a))^2] + 4R_f^2$, and $Y = 2M + 2(R_{\max} / (1 - \gamma))^2$.
    
\end{theorem}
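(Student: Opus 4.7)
My plan is to combine a performance difference identity with a mirror-descent style KL-potential analysis, treating the target policy from Proposition \ref{pp:PI} as an idealized exponentiated-gradient update and then paying for the various approximation errors separately. Concretely, I would start from the performance difference lemma $(1-\gamma)(\mathcal{L}(\pi^*)-\mathcal{L}(\pi_{\theta_t})) = \mathbb{E}_{\nu^*}\big[\sum_a \pi^*(a|s) A^{\pi_{\theta_t}}(s,a)\big]$ so that the quantity on the left of \eqref{thm:main:eq} is controlled by a weighted sum of true advantages against the optimal policy. I would then introduce the potential $\Phi_t := \mathbb{E}_{\nu^*}\bigl[\mathrm{KL}(\pi^*(\cdot|s)\,\|\,\pi_{\theta_t}(\cdot|s))\bigr]$, noting that $\Phi_0 \le \log|\mathcal{A}|$ because $\pi_{\theta_0}$ is uniform; this is what will produce the $\log|\mathcal{A}|$ term in the bound.

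Next I would try to lower-bound the one-step potential decrease $\Phi_t - \Phi_{t+1}$ in terms of the inner product with the true advantage, up to error terms. The key algebraic step is to expand $\log\pi_{\theta_{t+1}}-\log\pi_{\theta_t}$ using the closed form in Proposition \ref{pp:PI}, which gives that under the ideal EMDA update the log-ratio equals $\bar C_t(s,a)A^{\pi_{\theta_t}}(s,a)$ up to a state-dependent normalizer. Using condition (i), this log-ratio is sandwiched between $L_C$ and $U_C$ times $|A^{\pi_{\theta_t}}|$, so the cross term against $\pi^*$ can be lower-bounded by $L_C\cdot\mathbb{E}_{\nu^*}[\sum_a\pi^*(a|s) A^{\pi_{\theta_t}}(s,a)]$, while the $U_C^2$ term enters as the standard quadratic Bregman penalty $\mathbb{E}_{\nu^*}[\mathrm{KL}(\pi_{\theta_{t+1}}\|\pi_{\theta_t})]$, producing the $U_C^2(2\psi^*+M)$ contribution once one bounds the second moment of the log-ratio via the boundedness of the advantage (this is where $M$ and $Y$ appear, through $\max_a Q_{\omega_0}^2$ and the projection radius $R_f$).

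I would then insert the three sources of approximation error: (a) replacing $A^{\pi_{\theta_t}}$ by the neural estimate $A_{\omega_t}$, which costs an error tied to the MSBE subproblem and is absorbed into a $\psi^*\sqrt{\epsilon_t'}$ term (using a Cauchy--Schwarz step and Assumption \ref{assump:con}); (b) replacing the ideal target policy $\pi_{t+1}$ by the EMDA target $\widehat\pi_{t+1}$ under the estimated advantage, controlled by the same $\epsilon_t'$ machinery through $\bar C_t$ versus $C_t$; and (c) replacing $\widehat\pi_{t+1}$ by the neurally parameterized $\pi_{\theta_{t+1}}$, which costs $\tau_{t+1}^{-1}\phi^*\sqrt{\epsilon_{t+1}}$ from the MSE subproblem together with a second-order remainder of size $|\mathcal{A}|\cdot C_\infty\tau_{t+1}^{-2}\epsilon_{t+1}$ from expanding a log-normalizer. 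These exactly match the definitions of $\varepsilon_t$ and $\varepsilon_t'$. The changes of measure from $\nu^*$ to the sampling distributions $\sigma_t,\tilde\sigma_t$ are absorbed through $C_\infty$, $\phi^*$, $\psi^*$.

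Finally, I would telescope the resulting inequality from $t=0$ to $T-1$. The potential telescoping gives $\Phi_0 - \Phi_T \le \log|\mathcal{A}|$, the cross terms accumulate the true advantage gaps weighted by $L_C(1-\gamma)$, and the quadratic and approximation-error terms accumulate on the other side. Dividing by $T L_C(1-\gamma)$ and lower-bounding the average gap by the min-iterate gap yields exactly \eqref{thm:main:eq}. The main obstacle I anticipate is the coupling in point (a): because the hinge/clipping indicator depends discontinuously on the sign and magnitude of $A_{\omega_t}$, one cannot simply plug the advantage error in; instead I expect to need the density regularity in Assumption \ref{assump:reg} to argue that the set where $A_{\omega_t}$ and $A^{\pi_{\theta_t}}$ straddle the clipping boundary has small measure, so that the error $\epsilon_t'$ enters only as a square root rather than polluting the $L_C$ constant itself. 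Condition (ii), $L_C=\omega(T^{-1})$ and $U_C=O(T^{-1/2})$, is then precisely what makes the final fraction vanish as $T\to\infty$ and delivers the advertised $O(1/\sqrt T)$ rate in the PPO-Clip instantiation.
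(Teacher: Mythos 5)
Your proposal follows essentially the same route as the paper's proof: performance difference lemma in terms of advantages, the KL potential $\mathbb{E}_{\nu^*}[\mathrm{KL}(\pi^*\|\pi_{\theta_t})]$ with $\Phi_0\le\log|\mathcal{A}|$, a one-step descent inequality driven by the EMDA closed form of Proposition \ref{pp:PI}, the sandwich from condition (i) to extract $L_C$, the three-way error decomposition matching $\varepsilon_t$ and $\varepsilon_t'$, and a final telescoping sum. The one place where your mechanism differs from the paper's is the step you correctly flag as the main obstacle, namely the discontinuous dependence of the clipping indicators (hence of $C_t$ versus $\bar C_t$) on the sign of the estimated advantage. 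You propose to invoke the density regularity of Assumption \ref{assump:reg} to show the set where $A_{\omega_t}$ and $A^{\pi_{\theta_t}}$ straddle the decision boundary has small measure; the paper instead handles this in Lemma \ref{lm:ep} purely via Markov's inequality on the advantage estimation error (Lemma \ref{lm:EPA}) combined with a case split on whether $|A^{\pi_{\theta_t}}|$ exceeds a threshold $\epsilon_{\text{err}}$: when the error is below $\epsilon_{\text{err}}$ and the true advantage exceeds it, the signs agree and $C_t=\bar C_t$; when the true advantage is below $\epsilon_{\text{err}}$, its contribution is negligibly small. The threshold is then tuned to $\epsilon_{\text{err}}=U_C(T)$ at the end, which is exactly where the extra $2\psi^* U_C^2$ inside the $T U_C^2(2\psi^*+M)$ term comes from. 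Your alternative is plausible but would require extra work to relate the measure of the straddling set to $\epsilon_t'$; the paper's Markov-plus-threshold argument gets the $\sqrt{\epsilon_t'}$ dependence with no appeal to Assumption \ref{assump:reg}, at the cost of the additional $U_C$-dependent pre-constant you would need to reproduce to match \eqref{thm:main:eq} exactly.
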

{To demonstrate that our convergence analysis is general for Neural PPO-Clip with various classifiers, we choose to state Theorem \ref{thm:main} in a general form utilizing the condition (\ref{suff:1}) and (\ref{suff:2}).
Indeed, we show that (\ref{suff:1}) and (\ref{suff:2}) can be naturally satisfied by using the standard PPO-Clip classifier described in (\ref{eq:hingeobject}) in the following Corollary \ref{cor:PPO-Clip}. Importantly, these conditions are not technical assumptions for our theorem. Notably, we also establish that PPO-Clip-sub (a variant of generalized PPO-Clip utilizing a distinct classifier) aligns with the result presented in Theorem \ref{thm:main}. For a comprehensive statement and analysis, please refer to Appendix \ref{app:add:cor}.

\begin{corollary}[Global Convergence of {Neural PPO-Clip},  Informal]
\label{cor:PPO-Clip}
    Consider Neural PPO-Clip with the standard PPO-Clip classifier $\rho_{s, a}(\theta) - 1$ and the objective function $L^{(t)}(\theta)$ in each iteration $t$ as 
    \begin{align}
         \mathbb{E}_{\nu_t}[\langle \pi_{\theta_t}(\cdot|s), |A^{\pi_{\theta_t}}(s, \cdot)| \circ \ell (\sgn(A^{\pi_{\theta_t}}(s, \cdot)), \rho_{s, \cdot}(\theta) - 1, \epsilon) \rangle].
    \end{align}
    (i) If we specify the EMDA step size $\eta = T^{-\alpha}$ where $\alpha \in [1/2, 1)$ and the temperature parameter $\tau_t = T^{\alpha} / (Kt)$. Recall that $K$ is the number of EMDA iterations.
    Let the neural networks' widths be $m_f, m_Q$, and the SGD and TD updates $T_{\text{upd}}$ be configured as in Appendix \ref{app:add:cor}, we have
    \begin{align}
        \min_{0\le t \le T} &\{\mathcal{L}(\pi^*) - \mathcal{L}(\pi_{\theta_t})\} \nonumber \\ & \hspace{0pt}\le \frac{\log |\mathcal{A}| + K^2 (2 \psi^* + M) + O(1)}{T^{\alpha} (1 - \gamma)}. \label{cor:PPO-Clip:eq}
    \end{align}
    {Hence, Neural PPO-Clip has $O(T^{-\alpha})$ convergence rate.}
    (ii) Furthermore, let the $\alpha = 1/2$, we obtain the fastest convergence rate, which is $O(1 / \sqrt{T})$.
\end{corollary}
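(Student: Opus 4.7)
The plan is to specialize Theorem \ref{thm:main} to the classifier $\rho_{s,a}(\theta)-1$ by explicitly identifying $L_C(T)$ and $U_C(T)$ generated by the $K$-step EMDA subroutine, verifying the two sufficient conditions, and then balancing the resulting terms in (\ref{thm:main:eq}) at $\alpha=1/2$.

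First, I would derive a closed form for $\bar{C}_t(s,a)$. Starting from the objective $L^{(t)}(\theta)$ in the corollary, direct parameterization makes the weight $\pi_{\theta_t}(a|s)$ cancel with the $1/\pi_{\theta_t}(a|s)$ arising from $\partial\rho_{s,a}/\partial\theta_{s,a}$, so the (sub)gradient at the EMDA iterate $\tilde{\theta}^{(k)}$ takes the form
\begin{align}
\bar{g}_{s,a}^{(k)} = -\nu_t(s)\cdot A^{\pi_{\theta_t}}(s,a)\cdot \mathds{1}\bigl\{\sgn(A^{\pi_{\theta_t}}(s,a))(\bar{\rho}_{s,a}^{(k)}-1)<\epsilon\bigr\}.
\end{align}
Feeding this into the $C_t$-update in Algorithm \ref{algo:2} (with $A^{\pi_{\theta_t}}$ in place of $A_{\omega_t}$) yields
\begin{align}
\bar{C}_t(s,a) = \eta\,\nu_t(s)\sum_{k=0}^{K-1}\mathds{1}\bigl\{\sgn(A^{\pi_{\theta_t}}(s,a))(\bar{\rho}_{s,a}^{(k)}-1)<\epsilon\bigr\}.
\end{align}
Because $\bar{\rho}_{s,a}^{(0)}=1$ activates the $k=0$ indicator and each indicator is bounded by $1$, this gives $\eta\,\nu_t(s) \le \bar{C}_t(s,a) \le K\eta\,\nu_t(s)$. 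With $\eta=T^{-\alpha}$ and uniform bounds on $\nu_t$ absorbed into constants, I therefore set $L_C(T)=\Theta(T^{-\alpha})$ and $U_C(T)=\Theta(K T^{-\alpha})$. The range $\alpha\in[1/2,1)$ immediately yields $L_C(T)=\omega(T^{-1})$ and $U_C(T)=O(T^{-1/2})$, so both conditions (\ref{suff:1})–(\ref{suff:2}) hold and Theorem \ref{thm:main} applies.

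Substituting $L_C, U_C$ and the temperature schedule $\tau_t=T^\alpha/(Kt)$ into (\ref{thm:main:eq}), the right-hand side decomposes into three groups after dividing by $TL_C(1-\gamma)=\Theta(T^{1-\alpha})$: (a) the $\log|\mathcal{A}|$ term contributes $O(T^{\alpha-1})$; (b) $TU_C^2(2\psi^*+M)$ contributes $O(K^2 T^{-\alpha})$; (c) the error sum $\sum_t(\varepsilon_t+\varepsilon_t')$ carries factors $\tau_{t+1}^{-1}\!\sim Kt/T^\alpha$ and $\tau_{t+1}^{-2}\!\sim K^2 t^2/T^{2\alpha}$ multiplying the SGD/TD errors $\epsilon_{t+1}^{1/2},\epsilon_t'^{1/2}$. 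Choosing the neural-network widths $m_f, m_Q$ and inner-loop count $T_{\text{upd}}$ as in Appendix \ref{app:add:cor} (following the standard neural-TD and neural-SGD arguments in the vein of \citep{liu2019neural}) drives $\epsilon_t,\epsilon_t'$ polynomially small in $T$ fast enough that group (c) becomes a lower-order term. Combining (a)–(c) gives $\min_t\{\mathcal{L}(\pi^*)-\mathcal{L}(\pi_{\theta_t})\} = O(T^{\alpha-1}) + O(T^{-\alpha})$, proving (i).

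For (ii), the two dominant contributions $T^{\alpha-1}$ and $T^{-\alpha}$ are balanced when $\alpha-1 = -\alpha$, i.e., $\alpha=1/2$, producing the fastest rate $O(T^{-1/2})$. The main technical obstacle I anticipate is in group (c): because each outer iteration runs $K$ EMDA steps and the inverse temperature grows as $Kt/T^\alpha$, the per-iteration approximation errors must be driven polynomially small in $T$ without letting the constants $K^2$, $|\mathcal{A}|$, $C_\infty$, $\phi^*$, $\psi^*$ accumulate into the leading order; verifying that the hyperparameter choices in Appendix \ref{app:add:cor} are simultaneously consistent with Assumptions \ref{assump:func}, \ref{assump:reg}, and \ref{assump:con} is where most of the bookkeeping lies.
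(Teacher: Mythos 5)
Your overall strategy is the same as the paper's: read $\bar C_t$ off the EMDA gradients, use the fact that the $k=0$ step is never clipped (since $\bar\rho^{(0)}_{s,a}=1$ makes the indicator's argument equal to $0<\epsilon$) to obtain the lower bound $\eta$, take $K\eta$ as the upper bound, verify conditions (\ref{suff:1})--(\ref{suff:2}), and invoke Theorem \ref{thm:main} with the widths and $T_{\text{upd}}$ chosen so that $\sum_t(\varepsilon_t+\varepsilon_t')=O(1)$. Two points deserve attention. First, your gradient carries a factor $\nu_t(s)$ which you then ``absorb into constants.'' That step is not safe as written: condition (\ref{suff:1}) requires a lower bound $L_C(T)$ holding uniformly over all $(s,a)$, and $\nu_t(s)$ is not bounded away from zero in general, so $L_C(T)=\Theta(T^{-\alpha})$ would fail if that factor were genuinely present. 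In the paper's algorithm the EMDA subroutine computes the per-state gradient $g^{(k)}_{s,a}=\partial L_{\text{Hinge}}/\partial\theta_{s,a}$ separately for each sampled state, so the state weighting lives in the sampling rather than in the gradient; the paper's derivation accordingly gives $g_{s,a}=-A^{\pi_{\theta_t}}(s,a)\,\mathds{1}\{(\tilde\theta_{s,a}/\pi_{\theta_t}(a|s)-1)\sgn(A^{\pi_{\theta_t}}(s,a))<\epsilon\}$ with no $\nu_t(s)$, and hence $\eta\le\bar C_t(s,a)\le K\eta$ exactly. You should drop the $\nu_t(s)$ rather than absorb it.

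Second, your final bookkeeping $O(T^{\alpha-1})+O(T^{-\alpha})$ is the honest consequence of $L_C=T^{-\alpha}$ and $U_C=KT^{-\alpha}$ (the $\log|\mathcal{A}|$ term is divided by $TL_C=T^{1-\alpha}$), but it recovers the corollary's displayed bound with denominator $T^{\alpha}(1-\gamma)$ only at $\alpha=1/2$, where the two exponents coincide. For $\alpha>1/2$ your bound is $O(T^{-(1-\alpha)})$, which does not literally establish the claimed $O(T^{-\alpha})$ rate of part (i); the paper's own proof glosses over the same issue by writing $T^{-1/2}$ in place of $T^{-\alpha}$ when bounding $\bar C_t$. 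Your balancing argument for part (ii) at $\alpha=1/2$ is correct, and that is in fact the only regime in which the displayed inequality follows cleanly; so state explicitly that you prove the corollary as written for $\alpha=1/2$ and a $O(T^{-(1-\alpha)})$ rate otherwise, or adopt the paper's convention and restrict the leading-order claim accordingly.
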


Notably, the min-iterate convergence rates presented in (\ref{thm:main:eq}) and (\ref{cor:PPO-Clip:eq}) are commonly observed in the realms of nonconvex optimization and neural network theory \citep{lacoste2016convergence, ghadimi2016accelerated, liu2019neural}, and they do not constitute stringent results. Furthermore, it is worth pointing out that in (\ref{thm:main:eq}), the terms $\varepsilon_t$ and $\varepsilon_t'$ correspond to the errors introduced by policy improvement and policy evaluation, respectively. These errors can be controlled by adjusting neural network widths and the number of TD and SGD iterations $T_{\text{upd}}$, and they can be made arbitrarily small. Further details can be found in Appendix \ref{app:main_thm}.

Consequently, the convergence rate obtained by our analysis is determined by $U_C(T)^2 / L_C(T)$. After a brief calculation, it becomes evident that under conditions (\ref{suff:1}) and (\ref{suff:2}), the most optimal convergence rate achievable through (\ref{thm:main:eq}) is $O(1 / \sqrt{T})$. This scenario arises when $L_C(T) = U_C(T) = O(T^{-1/2})$. This insight underscores that within our analysis, the original PPO-Clip stands as the algorithm that achieves the most favorable bound.

\begin{figure*}[!ht]
\centering
    \subfigure[Space Invaders]{\includegraphics[width=0.268\linewidth]{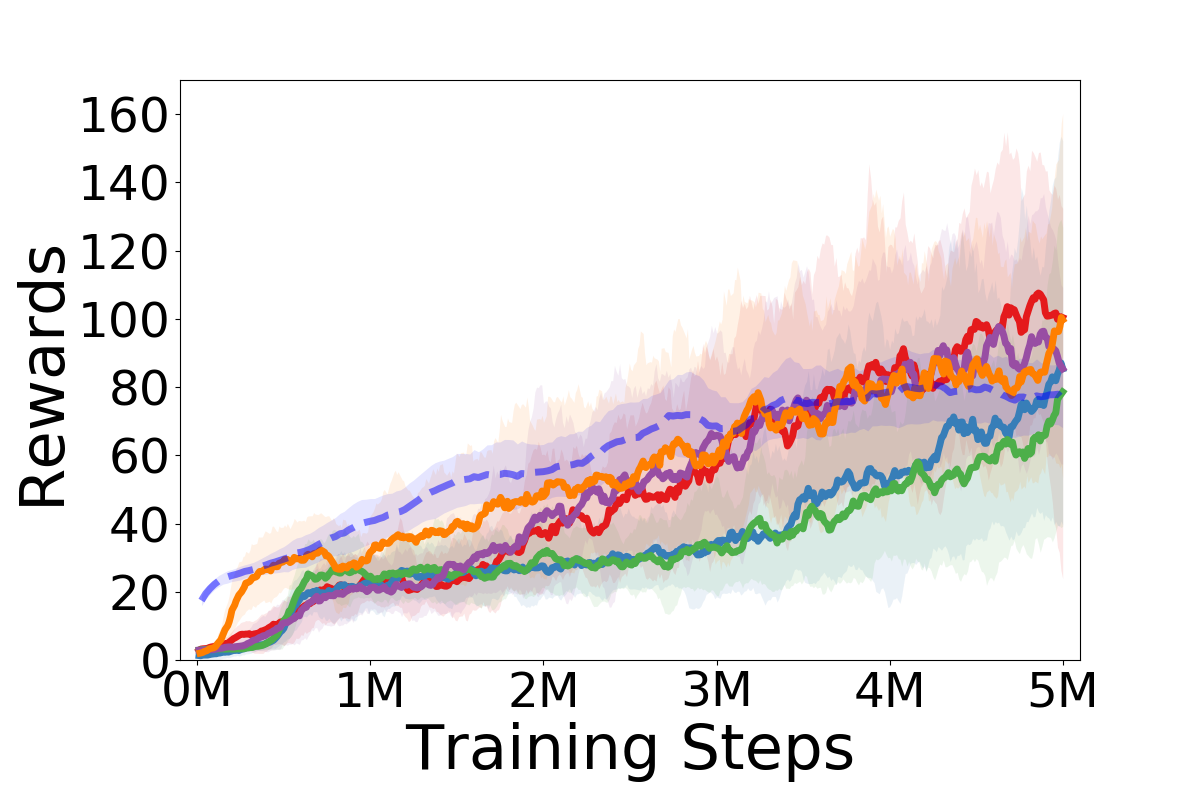}}
    \hfill
\hspace{-7mm}
    \subfigure[Breakout]{\includegraphics[width=0.268\linewidth]{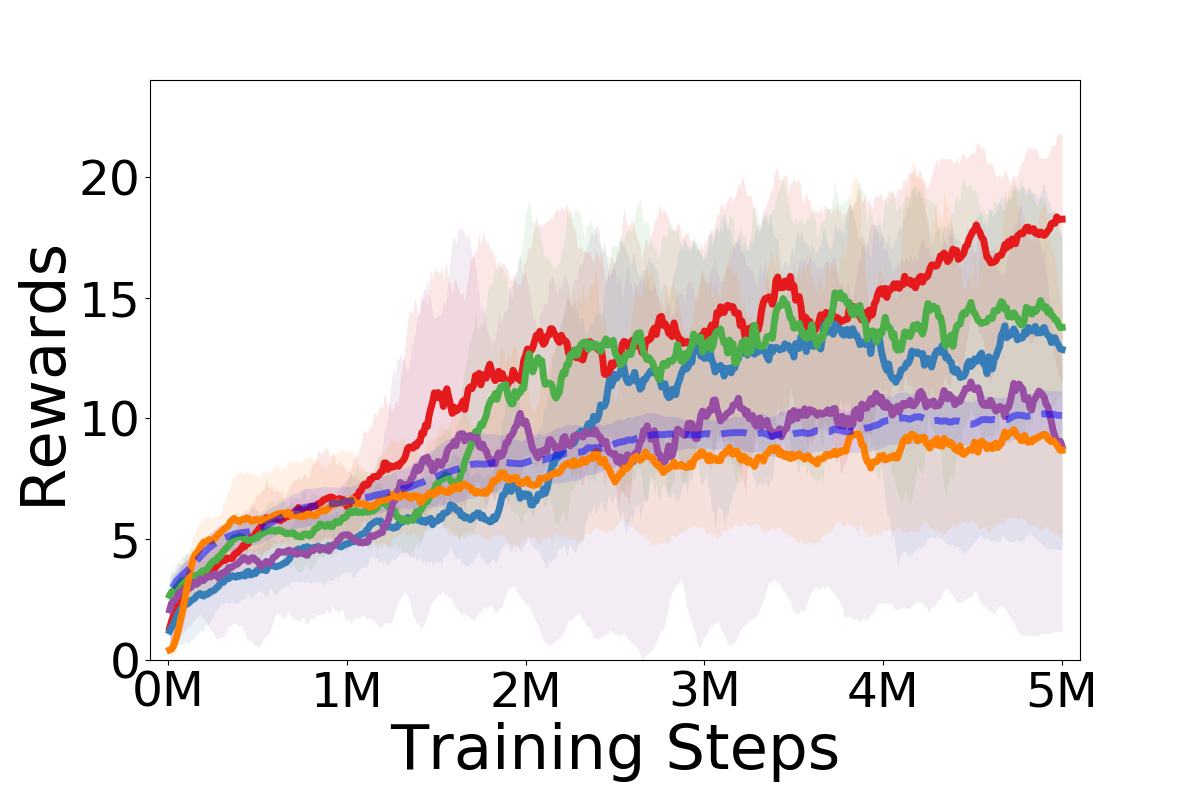}}
\hspace{-7mm}
 \hfill
    \subfigure[LunarLander]{\includegraphics[width=0.268\linewidth]{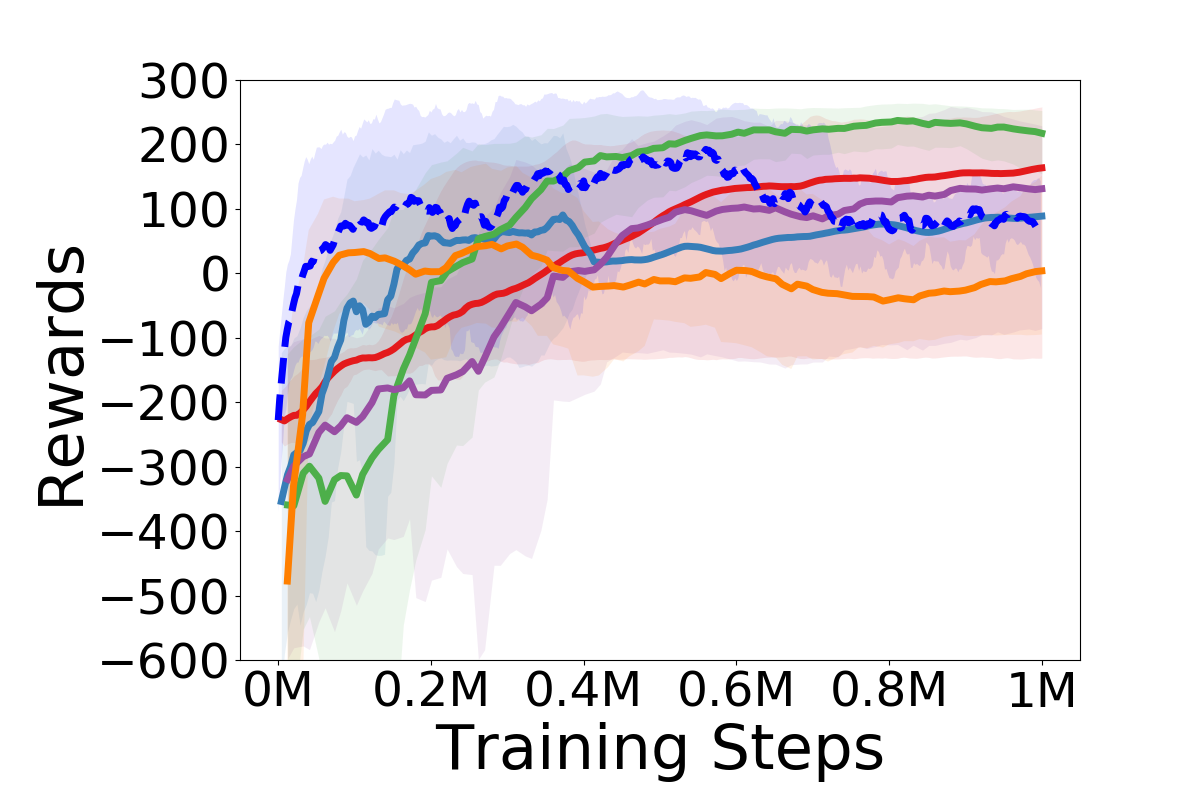}}
\hspace{-5.8mm}
 \hfill
    \subfigure[CartPole]{\includegraphics[width=0.268\linewidth]{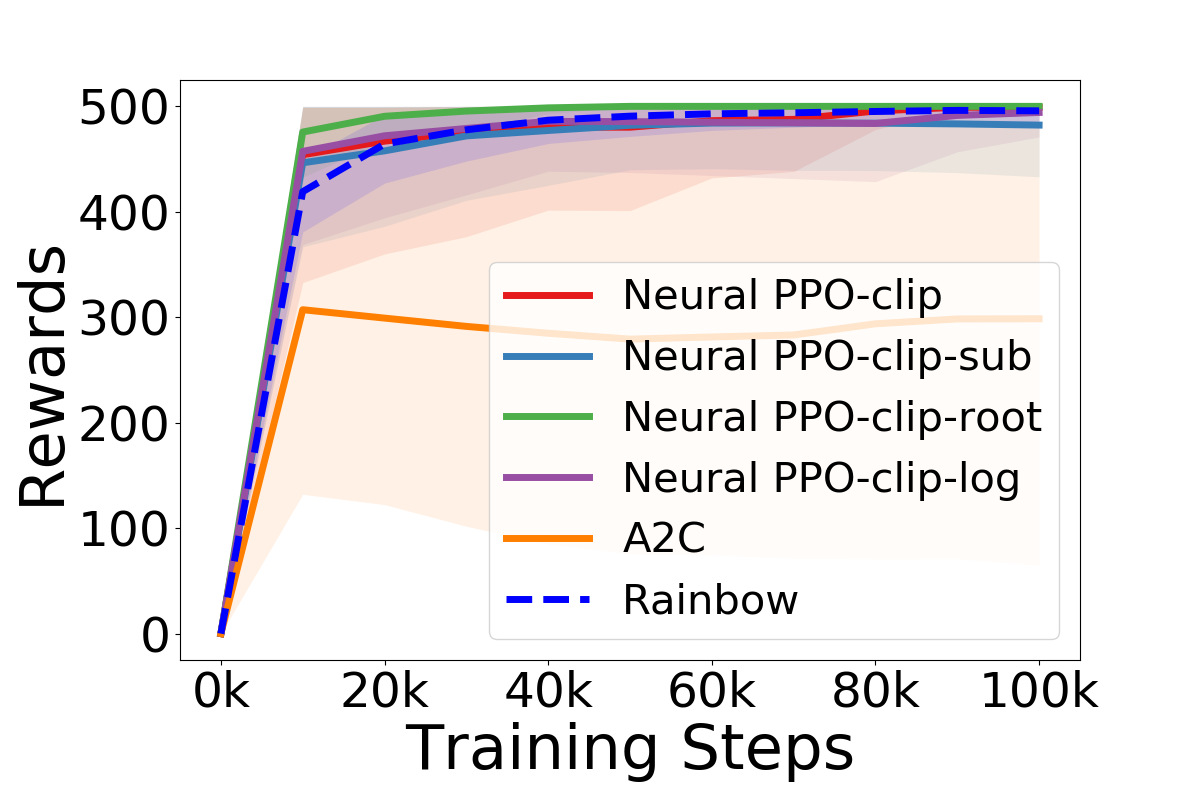}}
\caption{Evaluation of PPO-Clip with different classifiers and popular benchmark methods in MinAtar and OpenAI Gym.}
\label{fig:experiments}
\vspace{-2mm}
\end{figure*}

\subsection{Understanding the Clipping Mechanism}
\label{subsection: understanding}

In this subsection, we delve into the more profound understanding of the clipping mechanism.

\noindent \textbf{Rationale Behind the PPO-Clip Convergence.} As outlined in Section \ref{section:HPO}, the clipping mechanism establishes a connection to the hinge loss, consequently shaping the objective as (\ref{eq:HPO loss}). Notably, in the context of the original PPO-Clip, we specify the objective as follows:
\begin{equation}
    \frac{1}{\lvert\mathcal{D}\rvert}\sum_{(s,a)\in\mathcal{D}}\lvert {A}^{\pi}(s,a)\rvert\ \ell(\sgn({A}^{\pi}(s,a)), \rho_{s,a}(\theta)-1, \epsilon). \label{eq: understand PPO-Clip}
\end{equation}

We delve more deeply into this objective (\ref{eq: understand PPO-Clip}). It is important to note that if the \textit{signs} of the advantages are incorrect, it can lead to significant errors in computing the objective value during learning. However, due to the impressive empirical performance of neural networks in approximating values, erroneous signs of advantages tend to occur mainly when $\lvert {A}^{\pi}(s,a)\rvert$ is close to zero. Moreover, when $\lvert {A}^{\pi}(s,a)\rvert$ is near zero, its contribution to the objective remains relatively insignificant. Consequently, despite incorrect signs, the objective value remains reasonably accurate. 
This perspective offers an explanation for the robustness and impressive empirical performance of PPO-Clip. Additionally, this notion supports the potential of PPO-Clip to achieve convergence.
Furthermore, this concept is essential to comprehend the novel proof technique introduced in Lemma \ref{lm:ep}.
This lemma forms the cornerstone for bounding the errors in policy improvement and evaluation. For more detailed insights, please refer to Appendix \ref{app:main_thm}.

\noindent\textbf{Characterization of the Clipping Mechanism.} Our convergence analysis reveals that clipping mechanisms solely impact the pre-constant of convergence rates. Surprisingly, our analysis and results show that the clipping range $\epsilon$ only influences the \textit{pre-constant} of the Neural PPO-Clip convergence rate. This is unexpected since, intuitively, $\epsilon$ is considered analogous to the penalty parameter of PPO-KL \citep{liu2019neural}, which directly affects convergence rates. Contrary to expectations, we discover that the EMDA step size $\eta$ plays a crucial role in determining convergence rates, rather than the clipping range $\epsilon$. This result is illustrated by the involvement of the clipping mechanism in the EMDA subroutine through the indicator functions in the gradients.
Moreover, as the clipping range $\epsilon$ is contained inside the indicator function, \textit{it only influences the number of effective EMDA updates but not the magnitude of each EMDA update}. Since we know that the convergence rate is determined by the magnitude of the gradient updates (i.e., $U_C(T), L_C(T)$, which is $\eta$-dependent and $\eta$ is $T$-dependent), the clipping range can only affect the pre-constant of the convergence rate and the rate would still be $O(1 / \sqrt{T})$. 
For a more comprehensive understanding, please refer to Appendices \ref{app:main_thm} and \ref{app:add:cor}.

\section{Experiments}
\label{section:Discussions}
\textbf{Experimental Setup.}
{Given the convergence guarantees in Section \ref{section:analysis}, to better understand the empirical behavior of the generalized PPO-Clip objective, we further conduct experiments to evaluate Neural PPO-Clip with different classifiers. Specifically, we evaluate Neural PPO-Clip, Neural PPO-Clip-sub (as introduced in Section \ref{section:HPO}), and two additional classifiers, $\log(\pi_{\theta}(a|s))-\log(\pi_{\theta_t}(a|s))$ and $\sqrt{\rho_{s,a}(\theta)}-1$(termed as Neural PPO-Clip-log and Neural PPO-Clip-root), against benchmark approaches in several RL benchmark environments. Our implementations of Neural PPO-Clip are based on the RL Baseline3 Zoo framework \citep{rl-zoo3}. We test the algorithms in both MinAtar \citep{young19minatar} and OpenAI Gym environments \citep{brockman2016openai}. In addition, the algorithms are compared with popular baselines, including A2C and Rainbow. A2C follows the implementation and default settings from RL Baseline3 Zoo. For Rainbow, we adopt the configuration from \citep{ceron2021revisiting}.
Please refer to Appendix \ref{app:Experiments} for more details about our experiment settings.}

\noindent \textbf{Variants of Neural PPO-Clip Achieves Comparable Empirical Performance.}
{Figure \ref{fig:experiments} shows the training curves of Neural PPO-Clip with various classifiers and the benchmark methods. Notably, we observe that Neural PPO-Clip with various classifiers can achieve comparable or better performance than the baseline methods in both RL environments. To be mentioned, the performance of Rainbow is consistent with the results reported by \citep{ceron2021revisiting}. In summary, the outcomes depicted above underscore the practicality of the hinge loss reinterpretation of PPO-Clip within standard RL tasks. Furthermore, this approach positions classifier selection as a potential hyperparameter for the future deployment of PPO-Clip.}

\section{Concluding Remarks}
\label{section:conclusion}
The convergence behavior of PPO-Clip, a longstanding open problem, is addressed in this paper, providing the first convergence result and deeper insights. Our limitations are (i) analysis under discrete action space and (ii) reliance on NN error analysis, typically requiring large NN width. Despite the empirical success of PPO-Clip without this, our two-layer NN exploration suggests our results hold if approximation errors are well-managed. We anticipate this work will spark a deeper understanding of PPO-Clip within the RL community.

\section*{Acknowledgment}

We thank Hsuan-Yu Yao, Kai-Chun Hu, and Liang-Chun Ouyang for the helpful discussion and for providing insightful advice regarding the experiment.
This material is based upon work partially supported by the National Science and Technology Council (NSTC), Taiwan under Contract No. NSTC 112-2628-E-A49-023 and Contract No. NSTC 112-2634-F-A49-001-MBK and based upon work partially supported by the Higher Education Sprout Project of the National Yang Ming Chiao Tung University and Ministry of Education (MOE), Taiwan.

\bibliography{aaai24.bib}



\appendix
\onecolumn

\newpage
\section*{Appendix}
\label{section:appendix}

\section{Pseudo Code of Algorithms}
\label{app:pseudo_code}

\begin{algorithm}[!htbp]
\caption{Neural PPO-Clip (A More Detailed Version of Algorithm \ref{algo:1 incomplete})}
\label{algo:1}
\textbf{Input}: MDP $(\mathcal{S}, \mathcal{A}, \mathcal{P}, r, \gamma, \mu)$, Objective function $L_{\text{Hinge}}$, EMDA step size $\eta$, number of EMDA iterations $K$, number of SGD and TD update iterations $T_{\text{upd}}$, number of Neural PPO-Clip iterations $T$, the clipping range $\epsilon$ \\
\textbf{Initialization}: the policy $\pi_{\theta_0}$ as a uniform policy
    \begin{algorithmic}[1]
        \FOR{$t=1,\cdots,T-1$}
            \STATE Set temperature parameter $\tau_{t+1}$\;
            \STATE Sample the tuple $\{s_i, a_i, a_i^0, s_i',a_i'\}_{i=1}^{T_{\text{upd}}}$, where $(s_i, a_i) \sim \sigma_t, a_i^0 \sim \pi_0(\cdot|s_i)$, $s_i' \sim \mathcal{P}(\cdot|s_i, a_i)$ and $a_i' \sim \pi_{\theta_t}(\cdot|s_i')$\;
            \STATE Solve for $Q_{\omega_t} = \text{NN}(\omega_t;m_{Q})$ by using TD update as Algorithm \ref{algo:3}\;
            \STATE Calculate $V_{\omega_t}$ by Bellman expectation equation and the advantage $A_{\omega_t} = Q_{\omega_t} - V_{\omega_t}$\;
            \STATE {Use the states with nonzero advantage as the batch $\{s_i\}_{i=1}^{T_{\text{upd}}}$ for $L_{\text{Hinge}}(\theta)$ and obtain target policy $\widehat{\pi}_{t+1}$ and $C_t$ by using}
            
            {EMDA in Algorithm \ref{algo:2}}\;
            \STATE Solve for $f_{\theta_{t+1}} = \text{NN}(\theta_{t+1};m_f)$ by using SGD as Algorithm \ref{algo:4} based on the EMDA result\;
            \STATE Update the policy $\pi_{\theta_{t+1}} \propto \exp \{ \tau_{t+1}^{-1} f_{\theta_{t+1}}\}$\;
        \ENDFOR
    \end{algorithmic}
\end{algorithm}
\begin{remarkapp}
\label{remark:choice}
{In Neural PPO-Clip, there are various types of classifiers, the choices of the EMDA step size $\eta$ and the temperature parameters $\{\tau_{t}\}$ of the neural networks are important factors to the convergence rate and hence shall be configured properly according to the properties of different classifiers. As a result, we do not specify the specific choices of $\eta$ and $\{\tau_{t}\}$ in the following pseudo code of the generic Neural PPO-Clip. Please refer to Corollaries \ref{cor:PPO-Clip}-\ref{cor:sub} in Appendix \ref{app:add:cor} for the choices of $\eta$ and $\{\tau_{t}\}$ for Neural PPO-Clip with several classifiers including the standard PPO-Clip classifier $\rho_{s, a}(\theta) - 1 = \frac{\pi_{\theta}(a|s)}{\pi_{\theta_{t}}(a|s)} - 1$.}
\end{remarkapp}

For better readability, we restate EMDA (Algorithm \ref{algo:2}) here as Algorithm \ref{algo:2 copy}.
\begin{algorithm}[!htbp]
\caption{EMDA}
\label{algo:2 copy}
\textbf{Input}: $L_{\text{Hinge}}(\theta)$, EMDA step size $\eta$, number of EMDA iterations $K$, initial policy $\pi_{\theta_{t}}$, sample batch $\{s_i\}_{i=1}^{T_{\text{upd}}}$ \\
\textbf{Initialization}: $\tilde{\theta}^{(0)} = \pi_{\theta_{t}}$, $C_t(s, a) = 0$, for all $s, a$ \\
\textbf{Output}: $\widehat{\pi}_{t+1}$ and $C_t$
\begin{algorithmic}[1]
    \FOR{$k=0,\cdots,K-1$}
        \FOR{\text{each state} $s$ \text{in the batch}}
        \STATE Find $g_{s,a}^{(k)} = \left.\frac{\partial L_{\text{Hinge}}(\theta)}{\partial \theta_{s, a}}\right|_{\theta = \tilde{\theta}^{(k)}}$, for each $a$\;
        \STATE Let $w_s = (e^{-\eta g_{s,1}}, \dots, e^{-\eta g_{s,|\mathcal{A}|}})$\;
        \STATE $\tilde{\theta}^{(k+1)} = \frac{1}{\langle w_s, \tilde{\theta}^{(k)} \rangle} (w_s \circ \tilde{\theta}^{(k)})$\;
        \STATE $C_t(s, a) \leftarrow C_t(s, a) - \eta g_{s,a}^{(k)} / A_{\omega_t}(s, a)$, for 
        
        \ \ \ \qquad each $a$ with $A_{\omega_t}(s, a) \neq 0$\;
        \ENDFOR
    \ENDFOR
    \STATE $\widehat{\pi}_{t+1} = \tilde{\theta}^{(K)}$\;
\end{algorithmic}
\end{algorithm}

\begin{algorithm}[!htbp]
\caption{Policy Evaluation via TD}
\label{algo:3}
\textbf{Input}: MDP $(\mathcal{S}, \mathcal{A}, \mathcal{P}, r, \gamma)$, initial weights $b_i$, $[\omega(0)]_i \ (i \in [m_Q])$, number of iterations $T_{\text{upd}}$, sample $\{s_i, a_i, s_i', a_i\}_{i=1}^{T_{\text{upd}}}$ \\
\textbf{Output}: $Q_{\bar{\omega}}$
    \begin{algorithmic}[1]
        \STATE Set the step size $\eta_{\text{upd}} \leftarrow T_{\text{upd}}^{-1/2}$\;
        \FOR{$t=0,\cdots,T_{\text{upd}}-1$}
            \STATE $(s,a,s',a') \leftarrow (s_i,a_i,s_i',a_i')$\;
            \STATE $\omega(t + 1/2) \leftarrow \omega(t) - \eta_{\text{upd}} \cdot (Q_{\omega(t)}(s, a) -  r(s, a) - \gamma Q_{\omega(t)}(s', a')) \cdot \nabla_{\omega} Q_{\omega(t)}(s, a)$\;
            \STATE $\omega(t+1) \leftarrow \arg \min_{\omega \in B_Q}\{||\omega - \omega(t+1/2)||_2\}$\;
        \ENDFOR
        \STATE Take the average over path $\bar{\omega} \leftarrow 1/T_{\text{upd}} \cdot \sum_{t=0}^{T_{\text{upd}}-1} \omega(t)$\;
    \end{algorithmic}
\end{algorithm}

\begin{algorithm}[!htbp]
\caption{Policy Improvement via SGD}
\label{algo:4}
\textbf{Input}: MDP $(\mathcal{S}, \mathcal{A}, \mathcal{P}, r, \gamma)$, the current energy function $f_{\theta_t}$, initial weights $b_i$, $[\theta(0)]_i \ (i \in [m_f])$, number of iterations $T_{\text{upd}}$, sample $\{s_i, a_i^0\}_{i=1}^{T_{\text{upd}}}$ \\
\textbf{Output}: $f_{\bar{\theta}}$
    \begin{algorithmic}[1]
        \STATE Set the step size $\eta_{\text{upd}} \leftarrow T_{\text{upd}}^{-1/2}$\;
        \FOR{$t=0,\cdots,T_{\text{upd}}-1$}
            \STATE $(s,a) \leftarrow (s_i,a_i^0)$\;
            \STATE $\theta(t + 1/2) \leftarrow \theta(t) - \eta_{\text{upd}} \cdot (f_{\theta_t}(s,a) - \tau_{t+1} \cdot (C_t(s, a) \cdot A_{\omega_t}(s, a) + \tau_{t}^{-1}f_{\theta_t}(s, a))) \cdot \nabla_{\theta} f_{\theta_t}(s, a)$\;
            \STATE $\theta(t+1) \leftarrow \arg \min_{\theta \in B_f}\{||\theta - \theta(t+1/2)||_2\}$\;
        \ENDFOR
        \STATE Take the average over path $\bar{\theta} \leftarrow 1/T_{\text{upd}} \cdot \sum_{t=0}^{T_{\text{upd}}-1} \theta(t)$\;
    \end{algorithmic}
\end{algorithm}

\begin{algorithm}
\caption{Tabular PPO-Clip}
\label{algo:HPO-AM}
\textbf{Initialization}: policy $\pi^{(0)}=\pi(\theta^{(0)})$, initial state distribution $\mu$, step size of EMDA $\eta$, number of EMDA iterations $K^{(t)}$ \\
\textbf{Output}: Learned policy $\pi^{(\infty)}$
    \begin{algorithmic}[1]
        \FOR{$t=0, 1, \cdots$}
            \STATE Collect a set of trajectories $\tau \in \mathcal{D}^{(t)}$ under policy $\pi^{(t)}=\pi(\theta^{(t)})$\;
            \STATE Find ${A}^{(t)}$ by a policy evaluation method\;
            \STATE Compute $\hat{L}^{(t)}(\theta)$ based on ${A}^{(t)}$ and the collected samples in $\mathcal{D}^{(t)}$\;
            \STATE Update the policy by $\theta^{(t+1)}=\text{EMDA-tabular}(\hat{L}^{(t)}(\theta),\eta,K^{(t)},\cD^{(t)},\theta^{(t)})$\;
        \ENDFOR
    \end{algorithmic}
\end{algorithm}

For consistency in notation, we present the EMDA utilized in Tabular PPO-Clip as Algorithm \ref{algo:EMD}.

\begin{algorithm}
\caption{EMDA-tabular$(L(\theta),\eta,K,\cD, \theta_{\text{init}})$}
\label{algo:EMD}
\textbf{Input}: Objective $L(\theta)$, step size $\eta$, number of iteration $K$,  dataset $\cD$, and initial parameter $\theta_{\text{init}}$ \\
\textbf{Initialization}: $\widetilde{\theta}^{(0)}=\theta_{\text{init}}$, $\widetilde{\theta}=\theta_{\text{init}}$ \\
\textbf{Output}: Learned parameter $\tilde{\theta}$
    \begin{algorithmic}[1]
        \FOR{$k=0,\cdots,K-1$}
            \FOR{each state $s$ in $\cD$}
            \vspace{1mm}
                \STATE Find $g_{s,a}^{(k)}=\frac{\partial L(\theta)}{\partial \theta_{s,a}}\rvert_{\theta=\widetilde{\theta}^{(k)}}$, for each $a$\;
                \STATE Let $w_s=(e^{-\eta g_{s,1}^{(k)}},\cdots,e^{-\eta g_{s,\lvert \mathcal{A}\rvert}^{(k)}})$\;
                \STATE $\widetilde{\theta}_s^{(k+1)}=\frac{1}{\langle w_s, \widetilde{\theta}_s^{(k)}\rangle}(w_s \circ \widetilde{\theta}_s^{(k)})$\;
            \ENDFOR
        \ENDFOR
    \end{algorithmic}
\end{algorithm}

\newpage
\section{Proof of Proposition \ref{pp:PI}}
\label{app:B}
For completeness, we restate Proposition \ref{pp:PI} as follows.
\begin{propstar}[EMDA Target Policy]
    For the target policy obtained by the EMDA subroutine at the $t$-th iteration, we have
    \begin{align}
        \log \widehat{\pi}_{t+1}(a|s) \propto C_t(s, a) A_{\omega_t}(s, a) + \tau_{t}^{-1} f_{\theta_t}(s, a), 
    \end{align}
    where $C_t(s, a) A_{\omega_t}(s, a) = -\sum_{k=0}^{K-1} \eta g_{s,a}^{(k)}$ as given in Algorithm \ref{algo:2}.
\end{propstar}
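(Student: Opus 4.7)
The plan is a direct unrolling of the multiplicative weight update in Algorithm \ref{algo:2} followed by a logarithm, with the proportionality absorbing all state-dependent normalizing constants.

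First, I would expand the EMDA recursion. Fix an iteration $t$ and a state $s$ appearing in the batch. By Algorithm \ref{algo:2}, the iterates satisfy
\begin{equation*}
\widetilde{\theta}^{(k+1)}(a|s) \;=\; \frac{e^{-\eta g_{s,a}^{(k)}}\,\widetilde{\theta}^{(k)}(a|s)}{\sum_{a'} e^{-\eta g_{s,a'}^{(k)}}\,\widetilde{\theta}^{(k)}(a'|s)},
\end{equation*}
starting from $\widetilde{\theta}^{(0)}(a|s)=\pi_{\theta_t}(a|s)$. Iterating this $K$ times telescopes: the numerator at step $K$ is $\pi_{\theta_t}(a|s)\,\exp\!\bigl(-\sum_{k=0}^{K-1}\eta g_{s,a}^{(k)}\bigr)$, while the accumulated denominators merge into a single (state-dependent) normalizing factor $Z_t(s)$ independent of $a$.

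Next, I would take logarithms and substitute the energy-based form of $\pi_{\theta_t}$. Since $\pi_{\theta_t}(a|s)\propto\exp\{\tau_t^{-1}f_{\theta_t}(s,a)\}$, we have $\log\pi_{\theta_t}(a|s)=\tau_t^{-1}f_{\theta_t}(s,a)-\log Z'_t(s)$ for some state-only constant $Z'_t(s)$. Therefore $\widehat{\pi}_{t+1}(a|s)=\widetilde{\theta}^{(K)}(a|s)$ obeys
\begin{equation*}
\log\widehat{\pi}_{t+1}(a|s)\;=\;\tau_t^{-1}f_{\theta_t}(s,a)\;-\;\sum_{k=0}^{K-1}\eta g_{s,a}^{(k)}\;-\;\log Z_t(s)-\log Z'_t(s).
\end{equation*}
Finally, using the bookkeeping identity encoded in Algorithm \ref{algo:2}, namely $C_t(s,a)A_{\omega_t}(s,a)=-\sum_{k=0}^{K-1}\eta g_{s,a}^{(k)}$ (valid for every $a$: if $A_{\omega_t}(s,a)\neq 0$ this is the accumulator; if $A_{\omega_t}(s,a)=0$ then $g_{s,a}^{(k)}=0$ since the gradient of the hinge-based objective vanishes in the zero-advantage case, so both sides are zero), we obtain
\begin{equation*}
\log\widehat{\pi}_{t+1}(a|s)\;=\;C_t(s,a)A_{\omega_t}(s,a)\;+\;\tau_t^{-1}f_{\theta_t}(s,a)\;+\;\text{const}(s),
\end{equation*}
which is precisely the desired proportionality in $a$.

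The proof is essentially bookkeeping — no genuine obstacle — but the one small point to verify carefully is that the accumulated normalizers collapse to a single $a$-independent quantity and that the identification $C_t(s,a)A_{\omega_t}(s,a)=-\sum_k \eta g_{s,a}^{(k)}$ is well-defined even when $A_{\omega_t}(s,a)=0$. The latter follows because the hinge-loss-based objective $L_{\text{Hinge}}$ in \eqref{eq:HPO loss} has each summand proportional to $|A^{\pi}(s,a)|$, so the relevant gradient contribution vanishes whenever the advantage does, and the product $C_t\cdot A_{\omega_t}$ is therefore unambiguously zero.
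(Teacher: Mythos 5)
Your proposal is correct and follows essentially the same route as the paper's proof: unroll the multiplicative EMDA update into a product, take logarithms so the accumulated normalizers and the softmax partition function become state-only constants absorbed by the proportionality, and identify $-\sum_{k=0}^{K-1}\eta g_{s,a}^{(k)}$ with $C_t(s,a)A_{\omega_t}(s,a)$. Your extra remark on the zero-advantage case (where the hinge-loss gradient vanishes, so both sides of the bookkeeping identity are zero) is a small point the paper leaves implicit, but it does not change the argument.
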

\begin{proof}[Proof of Proposition \ref{pp:PI}]
    We expand the closed-form of the $\log$ of the EMDA target policy,
\begin{align}
    \log \widehat{\pi}_{t+1}(a|s) &= \log \left(\prod_{k=0}^{K^{(t)}-1} \frac{\exp(-\eta g_{s, a}^{(k)})}{\langle w_s, \tilde{\theta}^{(k)} \rangle} \cdot \pi_{\theta_t}(a|s)\right) \\
    &= \sum_{k=0}^{K^{(t)}-1} -\eta g_{s,a}^{(k)} - \sum_{k=0}^{K^{(t)}-1} \log (\langle w_s, \tilde{\theta}^{(k)}  \rangle) + \log \pi_{\theta_t}(a|s) \\
    &= \sum_{k=0}^{K^{(t)}-1} -\eta g_{s,a}^{(k)}  - \sum_{k=0}^{K^{(t)}-1} \log (\langle w_s, \tilde{\theta}^{(k)}  \rangle) + \tau_{t}^{-1} f_{\theta_t}(s, a) - \log (Z_t(s)) \\
    &\propto C_t(s, a) \cdot A_{\omega_t}(s, a) + \tau_{t}^{-1} f_{\theta_t}(s, a).
\end{align}
where $Z_t(s)$ is the normalizing factor of the policy at step $t$. Since both the $\sum_{k=0}^{K^{(t)}-1} \log (\langle w_s, \tilde{\theta}^{(k)}  \rangle)$ and $\log (Z_t(s))$ are state-dependent, we can cancel it under softmax policy. We obtain $C_t(s, a)$ from Algorithm \ref{algo:2} and complete the proof.
\end{proof}

\section{Proof of the Supporting Lemmas for Theorem \ref{thm:main}}
\label{app:main_thm}

{In the following, we slightly abuse the notations $ \mathbb{E}_{\tilde{\sigma}_t}$, $\mathbb{E}_{\sigma_t}$, and $\mathbb{E}_{\nu^*}$ to denote the expectations (over the respective distribution) conditioned on the policy $\pi_{\theta_t}$.}


\subsection{Additional Supporting Lemmas}
{Throughout this section, we slightly abuse the notation that we use $\mathbb{E}_{\text{init}}[\cdot]$ to denote the expectation over the initialization of neural networks. Also, we assume that Assumptions \ref{assump:func}, \ref{assump:reg}, and \ref{assump:con} hold in the following proofs.}
\begin{lemma}[Policy Evaluation Error]
\label{lm:PE_error}
The output $A_{\bar{\omega}} = Q_{\bar{\omega}} - V_{\bar{\omega}}$ of Algorithm \ref{algo:3} and Bellman expectation equation satisfies
\begin{align}
    \mathbb{E}_{\text{init,}\sigma_t}[(A_{\omega_t}(s, a) - A^{\pi_{\theta_t}}(s, a))^2] = O(R_Q^2 T_{\text{upd}}^{-1/2} + R_Q^{5/2} m_Q^{-1/4} + R_Q^3 m_Q^{-1/2}).
\end{align}
\end{lemma}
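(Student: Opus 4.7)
\textbf{Proof plan for Lemma \ref{lm:PE_error}.} The strategy is to reduce the advantage-function error to the state-action value function error, and then invoke the neural TD convergence result established in the line of work on neural policy optimization (cf.\ \citep{liu2019neural}, whose neural TD analysis is directly applicable here since Algorithm \ref{algo:3} is a projected-SGD instantiation of neural TD with step size $\eta_{\mathrm{upd}} = T_{\mathrm{upd}}^{-1/2}$ and projection onto $B_Q$). Concretely, since $A_{\omega_t}(s,a) = Q_{\omega_t}(s,a) - V_{\omega_t}(s)$ with $V_{\omega_t}(s) = \mathbb{E}_{a \sim \pi_{\theta_t}(\cdot|s)}[Q_{\omega_t}(s,a)]$, and analogously $A^{\pi_{\theta_t}}(s,a) = Q^{\pi_{\theta_t}}(s,a) - V^{\pi_{\theta_t}}(s)$, I would first write
\begin{equation*}
A_{\omega_t}(s,a) - A^{\pi_{\theta_t}}(s,a) = \bigl(Q_{\omega_t}(s,a) - Q^{\pi_{\theta_t}}(s,a)\bigr) - \mathbb{E}_{a' \sim \pi_{\theta_t}(\cdot|s)}\bigl[Q_{\omega_t}(s,a') - Q^{\pi_{\theta_t}}(s,a')\bigr].
\end{equation*}

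Next, I would apply $(x - y)^2 \le 2x^2 + 2y^2$ together with Jensen's inequality on the conditional expectation over actions, to obtain
\begin{equation*}
\bigl(A_{\omega_t}(s,a) - A^{\pi_{\theta_t}}(s,a)\bigr)^2 \le 2\bigl(Q_{\omega_t}(s,a) - Q^{\pi_{\theta_t}}(s,a)\bigr)^2 + 2\,\mathbb{E}_{a' \sim \pi_{\theta_t}(\cdot|s)}\bigl[\bigl(Q_{\omega_t}(s,a') - Q^{\pi_{\theta_t}}(s,a')\bigr)^2\bigr].
\end{equation*}
Taking $\mathbb{E}_{\mathrm{init},\sigma_t}$ of both sides and using that $\sigma_t(s,a) = \nu_t(s)\pi_{\theta_t}(a|s)$, the action-averaged second term is dominated by a quantity of the same form as $\mathbb{E}_{\mathrm{init},\sigma_t}[(Q_{\omega_t} - Q^{\pi_{\theta_t}})^2]$ (up to an absolute constant), so the advantage-error bound is reduced, up to a factor of $4$, to bounding the mean-squared $Q$-error.

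Finally, I would invoke the neural TD convergence bound: under Assumption \ref{assump:func} (completeness of $\mathcal{F}_{R_Q, m_Q}$ under $\mathcal{T}^{\pi_{\theta_t}}$) and Assumption \ref{assump:reg} (regularity of $\sigma_{\pi}$), the projected TD iterates $\{\omega(t)\}$ of Algorithm \ref{algo:3} with Polyak averaging satisfy
\begin{equation*}
\mathbb{E}_{\mathrm{init},\sigma_t}\bigl[(Q_{\bar{\omega}}(s,a) - Q^{\pi_{\theta_t}}(s,a))^2\bigr] = O\bigl(R_Q^2 T_{\mathrm{upd}}^{-1/2} + R_Q^{5/2} m_Q^{-1/4} + R_Q^3 m_Q^{-1/2}\bigr),
\end{equation*}
which is the standard rate obtained by decomposing into (i) the projected-SGD optimization error in the local-linearization function class $\mathcal{F}_{R_Q,m_Q}$, giving the $R_Q^2 T_{\mathrm{upd}}^{-1/2}$ term, and (ii) the linearization/approximation error between the true ReLU network and its first-order expansion around the initialization $\alpha_0$, giving the $m_Q^{-1/4}$ and $m_Q^{-1/2}$ terms. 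Combining this with the reduction step above yields the claimed bound.

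\textbf{Main obstacle.} The routine calculation (triangle inequality, Jensen) is straightforward; the nontrivial content is entirely in the neural TD rate, specifically in handling (a) the bias induced by semi-gradient updates (TD is not a true SGD on a convex loss), and (b) the non-smooth ReLU activation under projection onto $B_Q$. I would not re-derive these here: since Algorithm \ref{algo:3} coincides structurally with the neural TD routine analyzed in \citep{liu2019neural} and our Assumptions \ref{assump:func} and \ref{assump:reg} match theirs, I would cite their neural TD lemma verbatim for the $Q$-error bound and only carry out the advantage-to-$Q$ reduction in detail.
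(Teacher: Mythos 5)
Your proposal is correct and follows essentially the same route as the paper: the identical decomposition $A_{\omega_t} - A^{\pi_{\theta_t}} = (Q_{\omega_t} - Q^{\pi_{\theta_t}}) - \mathbb{E}_{a'\sim\pi_{\theta_t}}[Q_{\omega_t} - Q^{\pi_{\theta_t}}]$, the same $(a+b)^2 \le 2a^2+2b^2$ plus Jensen reduction to $4\,\mathbb{E}_{\sigma_t}[(Q_{\omega_t}-Q^{\pi_{\theta_t}})^2]$, and the same verbatim citation of the neural TD rate (Theorem 4.6 of \citep{liu2019neural}) for the $Q$-error. No gaps.
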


\noindent To prove Lemma \ref{lm:PE_error}, we start by stating a bound on the error of the estimated state-action value function.

\begin{lemma}[Theorem 4.6 in \citep{liu2019neural}]
\label{thm:4.6_liu}
The output $Q_{\bar{\omega}}$ of Algorithm \ref{algo:3} satisfies
\begin{align}
    \mathbb{E}_{\text{init,}\sigma_t}[(Q_{\omega_t}(s, a) - Q^{\pi_{\theta_t}}(s, a))^2] = O(R_Q^2 T_{\text{upd}}^{-1/2} + R_Q^{5/2} m_Q^{-1/4} + R_Q^3 m_Q^{-1/2}).
\end{align}
\end{lemma}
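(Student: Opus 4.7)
The plan is to reduce this statement to Theorem 4.6 of \citep{liu2019neural} by verifying that Algorithm \ref{algo:3} instantiates the projected neural TD scheme analyzed there. Algorithm \ref{algo:3} performs projected stochastic semi-gradient updates for the Bellman operator $\mathcal{T}^{\pi_{\theta_t}}$ using a two-layer ReLU network $Q_{\omega}(s,a) = \tfrac{1}{\sqrt{m_Q}}\sum_i b_i \sigma([\omega]_i^{\top}(s,a))$, with the initialization scheme (\ref{init}), projection onto $B_Q = \{\omega : \|\omega-\omega_0\|_2 \le R_Q\}$, step size $\eta_{\text{upd}} = T_{\text{upd}}^{-1/2}$, and Polyak averaging. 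Assumption \ref{assump:func} supplies the realizability condition $Q^{\pi_{\theta_t}} \in \mathcal{F}_{R_Q,m_Q}$, and Assumption \ref{assump:reg} provides the regularity of $\sigma_t$ needed to bound activation changes.

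First I would decompose $Q_{\omega_t}(s,a)-Q^{\pi_{\theta_t}}(s,a)$ via the local linearization $Q_{\omega}^{\mathrm{lin}}(s,a) = \tfrac{1}{\sqrt{m_Q}}\sum_i b_i \mathds{1}\{[\omega_0]_i^{\top}(s,a)>0\}[\omega]_i^{\top}(s,a)$, writing
\begin{align}
Q_{\omega_t}(s,a) - Q^{\pi_{\theta_t}}(s,a) = \bigl(Q_{\omega_t}(s,a) - Q_{\omega_t}^{\mathrm{lin}}(s,a)\bigr) + \bigl(Q_{\omega_t}^{\mathrm{lin}}(s,a) - Q^{\pi_{\theta_t}}(s,a)\bigr).
\end{align}
For the first piece, the linearization residual equals $\tfrac{1}{\sqrt{m_Q}}\sum_i b_i\bigl(\sigma([\omega_t]_i^\top(s,a)) - \mathds{1}\{[\omega_0]_i^\top(s,a)>0\}[\omega_t]_i^\top(s,a)\bigr)$, which is supported on neurons whose activation pattern flipped between $\omega_0$ and $\omega_t$. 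Applying Assumption \ref{assump:reg} to the random direction $z=[\omega_0]_i$ gives a pointwise probability $O(R_Q/(\sqrt{m_Q d}\,\|[\omega_0]_i\|_2))$ of a flip, and summing over $i\in[m_Q]$ together with standard concentration of $\|[\omega_0]_i\|_2$ under (\ref{init}) yields $\mathbb{E}_{\mathrm{init},\sigma_t}[(Q_{\omega_t}-Q_{\omega_t}^{\mathrm{lin}})^2] = O(R_Q^{3}m_Q^{-1/2})$.

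For the second piece, I would invoke the stochastic approximation analysis of projected TD restricted to the linearized function class $\mathcal{F}_{R_Q,m_Q}$. The key ingredients are: (a) $\gamma$-contraction of $\mathcal{T}^{\pi_{\theta_t}}$ in $L^2(\sigma_t)$, which gives a one-point monotonicity inequality at the fixed point $Q^{\pi_{\theta_t}}$ (which lies in $B_Q$ by Assumption \ref{assump:func}); (b) uniform boundedness of the stochastic semi-gradients by $O(R_Q)$ on the projection ball; (c) nonexpansiveness of the Euclidean projection onto $B_Q$. Combining these via the standard averaged-iterate projected SGD argument with step size $T_{\text{upd}}^{-1/2}$ gives $\mathbb{E}_{\mathrm{init},\sigma_t}[(Q_{\bar\omega}^{\mathrm{lin}} - Q^{\pi_{\theta_t}})^2] = O(R_Q^2 T_{\text{upd}}^{-1/2})$, together with a cross term $O(R_Q^{5/2} m_Q^{-1/4})$ that arises from the mismatch between the true TD gradient (using $Q_{\omega_t}$) and the linearized gradient (using $Q_{\omega_t}^{\mathrm{lin}}$), controlled by Cauchy--Schwarz against the $O(R_Q^{3/2}m_Q^{-1/4})$ linearization error in $L^2(\sigma_t)$.

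Finally I would combine the two pieces using $(x+y)^2 \le 2x^2+2y^2$ to obtain the stated bound $O(R_Q^2 T_{\text{upd}}^{-1/2} + R_Q^{5/2}m_Q^{-1/4} + R_Q^3 m_Q^{-1/2})$. The main obstacle is the careful bookkeeping of the $R_Q$ exponents in the mixed term: the $R_Q^{5/2}m_Q^{-1/4}$ contribution does not come from either error in isolation but from bounding $\mathbb{E}[\langle \nabla_\omega Q_{\omega_t} - \nabla_\omega Q_{\omega_t}^{\mathrm{lin}},\, Q_{\omega_t}^{\mathrm{lin}} - Q^{\pi_{\theta_t}}\rangle]$ along the TD trajectory, which requires a uniform-in-$\omega\in B_Q$ version of the linearization bound and must be propagated through the telescoping argument for averaged iterates. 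Once this cross-term bookkeeping matches the statement of Theorem 4.6 in \citep{liu2019neural}, the proof is complete.
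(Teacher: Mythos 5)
The paper contains no proof of this lemma: it is imported verbatim as Theorem 4.6 of \citep{liu2019neural} and used purely as a black box inside the proof of Lemma \ref{lm:PE_error}, so there is no paper-internal argument to compare against --- what you have written is instead a reconstruction of the cited source's proof. Judged on those terms, your outline is faithful to the original analysis: the decomposition into a linearization residual plus a stochastic-approximation error over the linearized class $\mathcal{F}_{R_Q,m_Q}$, the use of Assumption \ref{assump:reg} to bound activation flips (yielding $O(R_Q^3 m_Q^{-1/2})$), the one-point monotonicity of the projected semi-gradient scheme at the fixed point (which exists because Assumption \ref{assump:func} makes $Q^{\pi_{\theta_t}}$ itself the fixed point of $\Pi_{\mathcal{F}_{R_Q,m_Q}}\mathcal{T}^{\pi_{\theta_t}}$ in the linearized class), and the cross term $O(R_Q^{5/2} m_Q^{-1/4})$ from the gradient mismatch all match the structure behind the original result --- indeed the paper restates exactly this machinery, including the approximate stationary point $u^0_{\alpha^*} = \prod_{\mathcal{F}_{R_u,m}}\mathcal{T}u^0_{\alpha^*}$, as Theorem \ref{thm:meta} when it needs the analogous bound for the SGD policy-improvement step. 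Your verification that Algorithm \ref{algo:3} instantiates the analyzed scheme (same two-layer ReLU parameterization and initialization (\ref{init}), projection onto $B_Q$, step size $T_{\text{upd}}^{-1/2}$, iterate averaging, and i.i.d.\ sampling from $\sigma_t$) is exactly the right check for invoking the citation. The one honest caveat is that the genuinely hard steps --- TD updates are not gradients of any objective, so the ``standard averaged-iterate projected SGD argument'' must be replaced by the contraction-based monotonicity argument, and the cross-term bookkeeping requires a uniform-in-$\omega \in B_Q$ linearization bound propagated along the trajectory --- are deferred in your sketch to precisely the content the citation encapsulates; as a reduction to the known theorem your proposal is sound, but as a standalone proof it remains an outline at those points. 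One should also note Liu et al.'s mild iteration-count requirement ($T_{\text{upd}} \gtrsim (1-\gamma)^{-2}$, cf.\ Theorem \ref{thm:meta}), which your sketch omits.
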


\begin{proof}[Proof of Lemma \ref{lm:PE_error}]
We are ready to show the policy evaluation error of the advantage function. First, we find the bound of $|A_{\omega_t}(s, a) - A^{\pi_{\theta_t}}(s, a)|$. We have
\begin{align}
    |A_{\omega_t}(s, a) - A^{\pi_{\theta_t}}(s, a)| &= |Q_{\omega_t}(s, a) - V_{\omega_t}(s) - Q^{\pi_{\theta_t}}(s, a) + V^{\pi_{\theta_t}}(s)| \\
    &= \left|Q_{\omega_t}(s, a) - Q^{\pi_{\theta_t}}(s, a) + \sum_{a'} \pi_{\theta_t}(a'|s) \cdot (Q^{\pi_{\theta_t}}(s, a') - Q_{\omega_t}(s, a'))\right| \\
    &= \left|Q_{\omega_t}(s, a) - Q^{\pi_{\theta_t}}(s, a) + \mathbb{E}_{a'\sim \pi_{\theta_t}}[Q^{\pi_{\theta_t}}(s, a') - Q_{\omega_t}(s, a')]\right| \\
    &\le |Q^{\pi_{\theta_t}}(s, a) - Q_{\omega_t}(s, a)| + |\mathbb{E}_{a'\sim \pi_{\theta_t}}[Q^{\pi_{\theta_t}}(s, a') - Q_{\omega_t}(s, a')]|.
\end{align}
Then, we can derive the bound of $(A^{\pi_{\theta_t}}(s, a) - A_{\omega_t}(s, a))^2$ as follows,
\begin{align}
    (A^{\pi_{\theta_t}}(s, a) - A_{\omega_t}(s, a))^2 &\le 2(Q^{\pi_{\theta_t}}(s, a) - Q_{\omega_t}(s, a))^2 + 2(\mathbb{E}_{a'\sim \pi_{\theta_t}}[Q^{\pi_{\theta_t}}(s, a') - Q_{\omega_t}(s, a')])^2 \label{eq:PE_error 1}\\
    &\le 2(Q^{\pi_{\theta_t}}(s, a) - Q_{\omega_t}(s, a))^2 + 2\mathbb{E}_{a'\sim \pi_{\theta_t}}[Q^{\pi_{\theta_t}}(s, a') - Q_{\omega_t}(s, a')^2],\label{eq:PE_error 2}
\end{align}
{where (\ref{eq:PE_error 2}) holds by Jensen's inequality.}
By taking the expectation of (\ref{eq:PE_error 1})-(\ref{eq:PE_error 2}) over the state-action distribution $\sigma_t$, we have
\begin{align}
    \mathbb{E}_{\sigma_t}[(&A^{\pi_{\theta_t}}(s, a) - A_{\omega_t}(s, a))^2] \label{eq:PE_error 3}\\
    &\le 2\mathbb{E}_{\sigma_t}[(Q^{\pi_{\theta_t}}(s, a) - Q_{\omega_t}(s, a))^2] + 2\mathbb{E}_{\sigma_t}[\mathbb{E}_{ a'\sim \pi_{\theta_t}}[(Q^{\pi_{\theta_t}}(s, a') - Q_{\omega_t}(s, a'))^2]] \label{eq:PE_error 4}\\
    &= 4\mathbb{E}_{\sigma_t}[(Q^{\pi_{\theta_t}}(s, a) - Q_{\omega_t}(s, a))^2].\label{eq:PE_error 5},
\end{align}
where the last equality in (\ref{eq:PE_error 5}) is obtained by the actions are directly sampled by $\pi_{\theta_t}$ so we can ignore it in the latter term. Last, we leverage Lemma \ref{thm:4.6_liu} to obtain the result of Lemma \ref{lm:PE_error}.
\end{proof}

\begin{lemma}[Policy Improvement Error]
\label{lm:PI_error}
The output $f_{\bar{\theta}}$ of Algorithm \ref{algo:4} satisfies
\begin{align}
    \mathbb{E}_{\text{init,}\tilde{\sigma}_t}&[(f_{\bar{\theta}}(s, a) - \tau_{t+1} \cdot (C_t(s, a) \cdot A_{\omega_t}(s, a) + \tau_t^{-1} f_{\theta_t}(s, a)))^2] \\
    = \ &O(R_f^2 T_{\text{upd}}^{-1/2} + R_f^{5/2} m_f^{-1/4} + R_f^3 m_f^{-1/2}), \nonumber
\end{align}
\end{lemma}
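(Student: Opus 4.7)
The plan is to port the proof of Lemma \ref{lm:PE_error} (which, via Lemma \ref{thm:4.6_liu}, relies on the projected-SGD analysis of Liu et al.\ under neural-network linearization) to the policy-improvement SGD procedure in Algorithm \ref{algo:4}. Let $g_t(s,a) := \tau_{t+1}\bigl(C_t(s,a) A_{\omega_t}(s,a) + \tau_t^{-1} f_{\theta_t}(s,a)\bigr)$ denote the regression target and define the population objective $F(\theta) := \tfrac{1}{2}\mathbb{E}_{\tilde{\sigma}_t}\bigl[(f_\theta(s,a) - g_t(s,a))^2\bigr]$. Algorithm \ref{algo:4} draws $(s_i, a_i^0) \sim \tilde{\sigma}_t$ and performs projected SGD on $F$ over $B_f$ with step size $T_{\text{upd}}^{-1/2}$ followed by Polyak-Ruppert averaging; the subgradient used in Line~4 is an unbiased estimate of $\nabla F(\theta)$.

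I would introduce the local linearization $\hat{f}_\theta(s,a) = \tfrac{1}{\sqrt{m_f}} \sum_{i=1}^{m_f} b_i \mathds{1}\{[\alpha_0]_i^\top (s,a) > 0\} [\theta]_i^\top (s,a)$ and the associated linearized class $\mathcal{F}_{R_f, m_f}$ from (\ref{function_class}), then decompose the squared error into three pieces via $(a+b+c)^2 \le 3(a^2+b^2+c^2)$: (i) the excess risk of the linearized SGD iterate over its best-in-class approximator of $g_t$; (ii) the pointwise linearization gap $f_{\bar\theta}-\hat{f}_{\bar\theta}$; and (iii) the representation bias between $g_t$ and $\mathcal{F}_{R_f, m_f}$. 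For (i), the linearized objective is a convex quadratic with bounded stochastic-gradient second moment on $B_f$ (since $\nabla_\theta f_\theta$ and the residual are both $O(1)$ on the ball), so the textbook projected-SGD analysis with step size $T_{\text{upd}}^{-1/2}$ and averaging yields an $O(R_f^2 T_{\text{upd}}^{-1/2})$ bound via Jensen's inequality and telescoping. For (ii), Assumption \ref{assump:reg} applied to each neuron $[\alpha_0]_i^\top(s,a)$, combined with the projection radius $R_f$ and the Gaussian initialization (\ref{init}), yields $\mathbb{E}_{\text{init},\tilde\sigma_t}[(f_{\bar\theta} - \hat{f}_{\bar\theta})^2] = O(R_f^{5/2} m_f^{-1/4} + R_f^3 m_f^{-1/2})$, exactly as in Liu et al.'s Theorem~4.6.

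For (iii), I need $g_t$ to lie in $\mathcal{F}_{R_f, m_f}$ so that the realizability bias vanishes. Observe that $f_{\theta_t}$ already lies in $\mathcal{F}_{R_f, m_f}$ by construction, and Assumption \ref{assump:func} gives a linear-in-parameters representation of $A_{\omega_t} = Q_{\omega_t} - V_{\omega_t}$, inherited from $Q_{\omega_t} \in \mathcal{F}_{R_Q, m_Q}$. Since the EMDA output satisfies $C_t(s,a) A_{\omega_t}(s,a) = -\sum_{k=0}^{K-1} \eta g_{s,a}^{(k)}$ with each hinge-loss subgradient of magnitude $O(|A^{\pi_{\theta_t}}(s,a)|)$, the target $g_t$ sits in a linearized class with effective radius $R \lesssim \tau_{t+1}\bigl(\eta K + \tau_t^{-1} R_f\bigr)$. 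Combining (i)--(iii) yields the stated $O(R_f^2 T_{\text{upd}}^{-1/2} + R_f^{5/2} m_f^{-1/4} + R_f^3 m_f^{-1/2})$ bound.

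The main obstacle is step (iii): carefully tracking the $\tau_{t+1}$-scaling together with $\eta$ and $K$ to confirm that the effective radius of $g_t$ stays within $R_f$, so that realizability holds without incurring an extra misspecification term. This requires the hyperparameter choices spelled out in Corollary \ref{cor:PPO-Clip} (in particular, $\eta = T^{-\alpha}$ and $\tau_{t+1} = T^\alpha/(Kt)$, which make $\tau_{t+1}\eta K$ and $\tau_{t+1}\tau_t^{-1}$ both $O(1)$). Once this bookkeeping is settled, the SGD and linearization pieces follow the template of Lemma \ref{thm:4.6_liu} almost verbatim, with the only structural simplification being that the regression target $g_t$ is a fixed (policy-dependent) function of $(s,a)$ rather than a Bellman fixed point, so no self-referential contraction step is needed.
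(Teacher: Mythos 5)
Your overall route is essentially the paper's: the paper also reduces Algorithm \ref{algo:4} to the projected-SGD-plus-local-linearization analysis of \citep{liu2019neural} (packaged as Theorem \ref{thm:meta}), verifies the second-moment condition (\ref{condition:v}) on the regression target using exactly the hyperparameter condition $\tau_{t+1}^2(U_C^2+\tau_t^{-2})\le 1$ that you identify as the key bookkeeping obstacle, and then splits the error into the SGD excess risk $O(R_f^2 T_{\text{upd}}^{-1/2})$ plus linearization terms $O(R_f^{5/2}m_f^{-1/4}+R_f^3 m_f^{-1/2})$. The only structural difference is cosmetic: the paper uses a two-term decomposition through the approximate stationary point $f^0_{\theta^*}$ (the projection of the target onto $\mathcal{F}_{R_f,m_f}$), whereas you unfold the same content into three terms.

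One step in your part (iii) is stated incorrectly and needs repair, though it does not change the final bound. You claim that $f_{\theta_t}$ ``already lies in $\mathcal{F}_{R_f,m_f}$ by construction'' and that the realizability bias therefore vanishes. It does not: $\mathcal{F}_{R_f,m_f}$ is the \emph{linearized} class (ReLU indicators frozen at the initialization $\alpha_0$), while the target $g_t=\tau_{t+1}(C_t A_{\omega_t}+\tau_t^{-1}f_{\theta_t})$ is built from the actual networks $A_{\omega_t}$ and $f_{\theta_t}$, which are not members of that class. What the paper shows is that the \emph{linearization} $\tau_{t+1}(C_t A^0_{\omega_t}+\tau_t^{-1}f^0_{\theta_t})$ lies in $\mathcal{F}_{R_f,m_f}$, so the projection error $\lVert f^0_{\theta^*}-g_t\rVert_{2,\tilde\sigma_t}$ is bounded by the distance from $g_t$ to that linearized element, which in turn is controlled by the linearization gaps of $Q_{\omega_t}$ and $f_{\theta_t}$ via (\ref{thm:meta:eq2}), contributing an additional $O(R_f^3 m_f^{-1/2})$ (see (\ref{eq:82})--(\ref{short:result})). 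Since this term has the same order as your part (ii), your stated bound survives, but the argument as written asserts a false membership claim and silently drops a nonzero bias term; replacing ``realizability holds exactly'' with ``the bias equals the linearization error of the constituent networks'' closes the gap.
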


\noindent To prove Lemma \ref{lm:PI_error}, we first state the following useful result noindently proposed by \citep{liu2019neural}.
\begin{theorem}[\citep{liu2019neural}, Meta-Algorithm of Neural Networks]
\label{thm:meta}
    Consider a meta-algorithm with the following update:
    \begin{align}
        \alpha(t + 1/2) &\leftarrow \alpha(t) - \eta_{\text{upd}} \cdot (u_{\alpha(t)}(s, a) - v(s, a) - \mu \cdot u_{\alpha(t)}(s', a')) \cdot \nabla_{\alpha} u_{\alpha(t)}(s, a),     \label{eq:meta-algo 1}\\
        \alpha(t + 1) &\leftarrow \prod_{B_{\alpha}}(\alpha(t + 1/2)) = \mathop{\arg \min}_{\alpha \in B_{\alpha}} \lVert\alpha - \alpha(t + 1/2)\rVert_2, \label{eq:meta-algo 2}
    \end{align}
    where $\mu \in [0, 1)$ is a constant, $(s, a, s', a')$ is sampled from some stationary distribution $d$, $u_{\alpha}$ is parameterized as a two-layer neural network $\text{NN}(\alpha;m)$, and $v(s, a)$ satisfies
    \begin{align}
    \label{condition:v}
        \mathbb{E}_{d}[(v(s,a))^2] \le \bar{v}_1 \cdot \mathbb{E}_{d}[(u_{\alpha(0)}(s, a))^2] + \bar{v}_2 \cdot R_{u}^2 + \bar{v}_3,
    \end{align} 
    for some constants $\bar{v}_1, \bar{v}_2, \bar{v}_3 \ge 0$.
    We define the update operator $\gT u(s, a) = \mathbb{E}[v(s, a) + \mu \cdot u(s', a')|s' \sim \mathcal{P}(\cdot|s, a), a' \sim \pi(\cdot|s)]$, and define $\alpha^*$ as the \textit{approximate stationary point} (cf. (D.18) in \citep{liu2019neural}), which inherently have the property $u^0_{\alpha^*} = \prod_{\mathcal{F}_{R_u, m}}\mathcal{T}u^0_{\alpha^*}$, where $u_{\alpha^*}^0$ is the linearization of $u$ at $\alpha^*$.
    Suppose we run the above meta-algorithm in (\ref{eq:meta-algo 1})-(\ref{eq:meta-algo 2}) for $T$ iterations with $T \ge 64 / (1 - \mu)^2$ and set the step size $\eta_{\text{upd}} = T^{-1/2}$. Then, we have
    \begin{align}
        \label{thm:meta:eq1}
        \mathbb{E}_{\text{init,}d}[(u_{\bar{\alpha}}(s, a) - u_{\alpha*}^0(s, a))^2] &= O(R_u^2 T_{\text{upd}}^{-1/2} + R_u^{5/2} m^{-1/4} + R_u^3 m^{-1/2}), \\
        \label{thm:meta:eq2}
        \mathbb{E}_{\text{init,}d}[(u_{\alpha'}(s, a) - u_{\alpha'}^0(s, a))^2] &= O(R_u^3 m^{-1/2}),
    \end{align}
    where $\bar{\alpha} \coloneqq 1/T \cdot (\sum_{t=0}^{T-1} \alpha(t))$
    and $\alpha'$ is a parameter in $B_{\alpha}$.
\end{theorem}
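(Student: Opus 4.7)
The strategy is to treat the meta-update as a projected biased SGD on the \emph{linearized} network $u_\alpha^0(s,a) := u_{\alpha_0}(s,a) + \langle \alpha-\alpha_0, \nabla_\alpha u_{\alpha_0}(s,a)\rangle$, to control the linearization error separately, and then to stitch the two together. I would first establish (\ref{thm:meta:eq2}) by a standard NTK-style argument: when $\alpha\in B_\alpha$ only the ReLU activation pattern can differ from initialization, and the $i$-th unit flips on $(s,a)$ only when $|[\alpha_0]_i^\top(s,a)|\le \|[\alpha]_i-[\alpha_0]_i\|_2\|(s,a)\|_2$. Using the Gaussian initialization $[\alpha_0]_i \sim \mathcal{N}(0,I_d/d)$ together with Assumption~\ref{assump:reg}, the flipping probability is $O(\|[\alpha]_i-[\alpha_0]_i\|_2)$; summing $m$ contributions and using $\sum_i\|[\alpha]_i-[\alpha_0]_i\|_2^2 \le R_u^2$ together with the output scale $1/\sqrt m$ gives the $O(R_u^3/\sqrt m)$ bound. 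A parallel argument yields $\mathbb{E}\|\nabla_\alpha u_{\alpha(t)}(s,a) - \nabla_\alpha u_{\alpha_0}(s,a)\|_2^2 = O(R_u/\sqrt m)$, which is needed below.

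For (\ref{thm:meta:eq1}), I would decompose the actual update direction $g_t := (u_{\alpha(t)}(s,a) - v(s,a) - \mu u_{\alpha(t)}(s',a'))\nabla_\alpha u_{\alpha(t)}(s,a)$ as $g_t = g_t^0 + \Delta_t$, where $g_t^0$ is the same expression but with $u_{\alpha(t)}^0$ and the frozen feature $\nabla_\alpha u_{\alpha_0}$. Since $\Pi_{B_\alpha}$ is non-expansive and $\alpha^*\in B_\alpha$, expanding $\|\alpha(t+1)-\alpha^*\|_2^2 \le \|\alpha(t+1/2)-\alpha^*\|_2^2$ gives
\begin{align*}
\mathbb{E}\|\alpha(t+1)-\alpha^*\|_2^2 \le \ &\mathbb{E}\|\alpha(t)-\alpha^*\|_2^2 - 2\eta_{\text{upd}}\,\mathbb{E}\langle g_t^0,\alpha(t)-\alpha^*\rangle \\
&+ \eta_{\text{upd}}^2\,\mathbb{E}\|g_t\|_2^2 + 2\eta_{\text{upd}}\,\mathbb{E}|\langle\Delta_t,\alpha(t)-\alpha^*\rangle|.
\end{align*}
Condition (\ref{condition:v}) together with $\|\alpha(t)-\alpha_0\|_2\le R_u$ bounds $\mathbb{E}\|g_t\|_2^2 = O(R_u^2)$, and combining (\ref{thm:meta:eq2}) and the gradient-perturbation bound with Cauchy--Schwarz bounds the $\Delta_t$-term by $O(R_u^{5/2}m^{-1/4})$.

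The core of the argument is the one-point monotonicity
\[
\mathbb{E}\langle g_t^0,\alpha(t)-\alpha^*\rangle \ge (1-\mu)\,\mathbb{E}_d\bigl[(u_{\alpha(t)}^0-u_{\alpha^*}^0)^2\bigr],
\]
which is not a convex-SGD fact (the semi-gradient is not the gradient of any function). It follows by using linearity of $u_\alpha^0$ in $\alpha$ to rewrite $\langle\nabla_\alpha u_{\alpha_0}(s,a),\alpha(t)-\alpha^*\rangle = u_{\alpha(t)}^0(s,a) - u_{\alpha^*}^0(s,a)$, then substituting the fixed-point identity $u_{\alpha^*}^0 = \Pi_{\mathcal{F}_{R_u,m}}\mathcal{T}u_{\alpha^*}^0$ to eliminate $v(s,a)$ up to a projection residual orthogonal to the NTK feature, and finally dominating the cross term $\mathbb{E}_d[(u_{\alpha(t)}^0-u_{\alpha^*}^0)(s,a)\,(u_{\alpha(t)}^0-u_{\alpha^*}^0)(s',a')]$ by $\mu$ times the diagonal via Cauchy--Schwarz and stationarity of $d$. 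Telescoping the resulting recursion from $t=0$ to $T_{\text{upd}}-1$ with $\eta_{\text{upd}}=T_{\text{upd}}^{-1/2}$ and $\|\alpha(0)-\alpha^*\|_2^2 \le 4R_u^2$ (and using $T_{\text{upd}}\ge 64/(1-\mu)^2$ to absorb second-order terms) yields
\[
\frac{1}{T_{\text{upd}}}\sum_{t=0}^{T_{\text{upd}}-1} \mathbb{E}_d\bigl[(u_{\alpha(t)}^0-u_{\alpha^*}^0)^2\bigr] = O\bigl(R_u^2 T_{\text{upd}}^{-1/2} + R_u^{5/2} m^{-1/4}\bigr).
\]
Jensen's inequality in $\alpha$ moves the average inside to produce $u_{\bar\alpha}^0$, and one more application of (\ref{thm:meta:eq2}) at $\alpha=\bar\alpha$ replaces $u_{\bar\alpha}^0$ by $u_{\bar\alpha}$, contributing the extra $O(R_u^3/\sqrt m)$ term and completing (\ref{thm:meta:eq1}).

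\textbf{Main obstacle.} The delicate part is the one-point monotonicity step: because the TD-style semi-gradient is not a true gradient, convex-SGD machinery does not directly apply. I will extract the $(1-\mu)$ contraction modulus from the projected fixed-point identity and a sharp Cauchy--Schwarz on the bootstrap term, while simultaneously absorbing the linearization-error perturbations (both in $u$ and in $\nabla u$) without inflating the final rate beyond $O(R_u^{5/2} m^{-1/4})$. Getting the NTK constants and the $(1-\mu)$-dependence to align cleanly across all error sources is where the bookkeeping is heaviest.
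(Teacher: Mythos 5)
The paper does not actually prove this theorem itself — it is imported from \citep{liu2019neural} (whose proof builds on the neural TD analysis underlying that work) — and your proposal reconstructs essentially that same argument: the NTK-style linearization error bound via ReLU activation-pattern flips under the Gaussian initialization and Assumption~\ref{assump:reg}, the decomposition of the semi-gradient into its linearized part plus a perturbation, the projected recursion around the approximate stationary point with one-point monotonicity of modulus $(1-\mu)$, telescoping with $\eta_{\text{upd}} = T_{\text{upd}}^{-1/2}$ and $T_{\text{upd}} \ge 64/(1-\mu)^2$, and Jensen plus one more linearization step to pass from $u^0_{\bar{\alpha}}$ to $u_{\bar{\alpha}}$. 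One small correction: projecting $\mathcal{T}u^0_{\alpha^*}$ onto the convex class $\mathcal{F}_{R_u, m}$ gives the variational inequality $\langle \mathcal{T}u^0_{\alpha^*} - u^0_{\alpha^*},\, u - u^0_{\alpha^*}\rangle_{d} \le 0$ for all $u \in \mathcal{F}_{R_u, m}$, not exact orthogonality of the residual to the NTK features, but applying it with $u = u^0_{\alpha(t)}$ is precisely what your monotonicity step requires, so the argument is unaffected.
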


\begin{proof}[Proof of Lemma \ref{lm:PI_error}]
Now we are ready to prove Lemma \ref{lm:PI_error} as follows. To begin with, (\ref{eq:meta-algo 1})-(\ref{eq:meta-algo 2}) match the policy improvement update of {Neural PPO-Clip} if we put $u(s, a) = f(s, a)$, $v(s, a) = \tau_{t+1} (C_t(s, a) \cdot A_{
\omega_t}(s, a) + \tau_t^{-1} f_{\theta_t}(s, a))$, $\mu = 0$, $d = \tilde{\sigma}_t$, and $R_{u} = R_f$. For $\mathbb{E}_{\tilde{\sigma}_t}[(v(s,a))^2]$, we have
\begin{align}
    \mathbb{E}_{\tilde{\sigma}_t}[(v(s,a))^2] &\le 2\tau_{t+1}^2 (U_{C}^2 \cdot \mathbb{E}_{\tilde{\sigma}_t}[(A_{\omega_t}(s, a))^2] + \tau^{-2}_t \mathbb{E}_{\tilde{\sigma}_t}[(f_{\theta_t}(s, a))^2]) \\
    &\le 20 \mathbb{E}_{\tilde{\sigma}_t}[(f_{\theta_0}(s, a))^2] + 20 R_f^2.\label{eq:PI_error 1}
\end{align}
{Here, since $C_t$ and $\bar{C_t}$ are dependent only on the EMDA step size $\eta$ and the indicator function that depends on the sign of the advantage (either under the true advantage $A^{\pi_{\theta_t}}$ or the approximated advantage $A_{\omega_t}$), one can always find one common upper bound $U_C(T)$ for both $C_t$ and $\bar{C_t}$.
In particular, as shown in Corollary \ref{cor:PPO-Clip}, we set $U_{C} = \sum_{k=0}^{K-1} \eta$ for PPO-Clip, which is independent from the advantage function.}
{
The inequality in (\ref{eq:PI_error 1}) holds by the condition that $\tau_{t+1}^2 (U_{C}^2 + \tau_{t}^{-2}) \le 1$, $(a + b)^2 \le 2a^2 + 2b^2$, $\mathbb{E}_{\tilde{\sigma}_t}[(A_{\omega_t}(s, a))^2] \le 4 \mathbb{E}_{\tilde{\sigma}_t}[(Q_{\omega_t}(s, a))^2] $, and $\mathbb{E}_{\tilde{\sigma}_t}[(u_{\alpha_t}(s, a))^2] \le 2 \mathbb{E}_{\tilde{\sigma}_t}[(u_{\alpha_0}(s, a))^2] + 2 R_f^2$ which holds by using the Lipschitz property of neural networks where $u_{\alpha} = f_{\theta}, A_{\omega}$.
The condition $\tau_{t+1}^2 (U_{C}^2 + \tau_{t}^{-2}) \le 1$ can be satisfied by configuring proper $\{\tau_t\}$, as described momentarily in Appendix \ref{app:add:cor}.}
We also use that $\mathbb{E}_{\tilde{\sigma}_t}[Q_{\omega(0)}] = \mathbb{E}_{\tilde{\sigma}_t}[f_{\theta(0)}]$ because they share the same initialization. Thus, we have $\bar{v}_1 = \bar{v}_2 = 20$ and $\bar{v}_3 = 0$ in (\ref{condition:v}).

Due to that $\theta^*$ is the approximate stationary point, we have $f^0_{\theta^*} = \prod_{\mathcal{F}_{R_f, m_f}}\mathcal{T}f^0_{\theta^*} = \prod_{\mathcal{F}_{R_f, m_f}} \tau_{t+1} (C_t \circ A_{\omega_t} + \tau_t^{-1} f_{\theta_t})$. Thus, 
\begin{align}
    f^0_{\theta^*} = \mathop{\arg\min}_{f \in \mathcal{F}_{R_f, m_f}} \lVert f - \tau_{t+1} (C_t \circ A_{\omega_t} + \tau_t^{-1} f_{\theta_t})\rVert_{2, \tilde{\sigma}_t},
\end{align}
where $\lVert \cdot \rVert_{2, \tilde{\sigma}_t} = \mathbb{E}_{\text{init,}\tilde{\sigma}_t}[\lVert \cdot \rVert_{2}]^{1/2}$ is the $\tilde{\sigma}_t$-weighted $\ell_2$-norm.
Then, by the fact that $\tau_{t+1} (C_t(s, a) \cdot A_{\omega_t}^0(s, a) + \tau_t^{-1} f_{\theta_t}^0(s, a))\in\mathcal{F}_{R_f, m_f}$ and that $A_{\omega_t}^0(s, a) = Q_{\omega_t}^0(s, a) - \sum_{a\in\mathcal{A}} \pi(a|s)Q_{\omega_t}^0(s, a)$, we obtain
\begin{align}
    &\mathbb{E}_{\text{init,}\tilde{\sigma}_t}[(f^0_{\theta^*}(s, a) - \tau_{t+1} (C_t(s, a) \cdot A_{\omega_t}(s, a) + \tau_t^{-1} f_{\theta_t}(s, a)))^2] \\
    &\le \mathbb{E}_{\text{init,}\tilde{\sigma}_t}[(\tau_{t+1} (C_t(s, a)  A_{\omega_t}^0(s, a) + \tau_t^{-1} f_{\theta_t}^0(s, a)) - (\tau_{t+1} (C_t(s, a) A_{\omega_t}(s, a) + \tau_t^{-1} f_{\theta_t}(s, a))))^2] \\
    &\le 2\tau_{t+1}^2 U_{C}^2 \mathbb{E}_{\text{init,}\tilde{\sigma}_t}[((Q_{\omega_t}^0(s, a) - \sum_{a' \in\mathcal{A}} \pi(a'|s)Q_{\omega_t}^0(s, a')) - (Q_{\omega_t}(s, a) - \sum_{a'in\mathcal{A}} \pi(a'|s)Q_{\omega_t}(s, a')))^2] \nonumber \\
    &\qquad + 2\tau_{t+1}^2 \tau_{t}^{-2} \mathbb{E}_{\text{init,}\tilde{\sigma}_t}[(f_{\theta_t}^0(s, a) - f_{\theta_t}(s, a))^2] \\
    \label{eq:82}
    &\le 8 \tau_{t+1}^2 U_{C}^2 \mathbb{E}_{\text{init,}\tilde{\sigma}_t}[(Q_{\omega_t}^0(s, a) - Q_{\omega_t}(s, a))^2] + 2\tau_{t+1}^2 \tau_{t}^{-2} \mathbb{E}_{\text{init,}\tilde{\sigma}_t}[(f_{\theta_t}^0(s, a) - f_{\theta_t}(s, a))^2] \\
    \label{short:result}
    &= O(R_f^3 m_f^{-1/2}).
\end{align}
We obtain (\ref{eq:82}) as the same reason in (\ref{eq:PE_error 1})-(\ref{eq:PE_error 5}) in the proof of Lemma \ref{lm:PE_error}. The terms in (\ref{eq:82}) are both the designated form as the (\ref{thm:meta:eq2}), we leverage the (\ref{thm:meta:eq2}) in Theorem \ref{thm:meta} and obtain the result in (\ref{short:result}).

Last, we bound the error of our policy improvement, we have
\begin{align}
    &\mathbb{E}_{\text{init,}\tilde{\sigma}_t}[(f_{\bar{\theta}}(s, a) - \tau_{t+1} \cdot (C_t(s, a) \cdot A_{\omega_t}(s, a) + \tau_t^{-1} f_{\theta_t}(s, a)))^2] \\
    \label{eq1:final:meta}
    &\le 2\mathbb{E}_{\text{init,}\tilde{\sigma}_t}[(f_{\bar{\theta}}(s, a) - f^0_{\theta^*}(s, a))^2] \\
    \label{eq2:final:meta}
    &\qquad + 2 \mathbb{E}_{\text{init,}\tilde{\sigma}_t}[(f^0_{\theta^*}(s, a) - \tau_{t+1} (C_t(s, a) \cdot A_{\omega_t}(s, a) + \tau_t^{-1} f_{\theta_t}(s, a)))^2] \\
    \label{meta:result}
    &= O(R_f^2 T_{\text{upd}}^{-1/2} + R_f^{5/2} m_f^{-1/4} + R_f^3 m_f^{-1/2}),
\end{align}
where (\ref{eq1:final:meta}) is bounded as $O(R_f^2 T_{\text{upd}}^{-1/2} + R_f^{5/2} m_f^{-1/4} + R_f^3 m_f^{-1/2})$ by (\ref{thm:meta:eq1}) of Theorem \ref{thm:meta}, and (\ref{eq2:final:meta}) is bounded as $O(R_f^3 m_f^{-1/2})$ by the derivation of (\ref{short:result}). Thus, we obtain (\ref{meta:result}) and complete the proof.
\end{proof}

\begin{lemma}[Error Probability of Advantage]
\label{lm:EPA}
    {Given the policy $\pi_{\theta_t}$, the probability of the event that the advantage error is greater than $\epsilon_{\text{err}}$ can be bounded as}
    \begin{align}
        \mathbb{P}(|A_{\omega_t}(s, a) - A^{\pi_{\theta_t}}(s, a)| > \epsilon_{\text{err}}) \le \frac{\mathbb{E}_{\text{init,}\sigma_t}[(A_{\omega_t}(s, a) - A^{\pi_{\theta_t}}(s, a))^2]}{\epsilon_{\text{err}}^2}.\label{eq:Markov inequality}
    \end{align}
\end{lemma}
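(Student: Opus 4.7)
The plan is to recognize this as a direct application of Markov's inequality to the non-negative random variable $X := (A_{\omega_t}(s, a) - A^{\pi_{\theta_t}}(s, a))^2$, where the randomness is jointly over the neural network initialization and the state-action draw $(s,a) \sim \sigma_t$ (conditioned on the policy $\pi_{\theta_t}$, consistent with how the expectation $\mathbb{E}_{\text{init,}\sigma_t}[\cdot]$ is defined in Lemma \ref{lm:PE_error}).

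First, I would observe the equivalence of events
\begin{equation*}
\{|A_{\omega_t}(s, a) - A^{\pi_{\theta_t}}(s, a)| > \epsilon_{\text{err}}\} = \{(A_{\omega_t}(s, a) - A^{\pi_{\theta_t}}(s, a))^2 > \epsilon_{\text{err}}^2\},
\end{equation*}
which holds because squaring is monotone on nonnegative reals and $\epsilon_{\text{err}} > 0$. Next, since $X \geq 0$, Markov's inequality yields
\begin{equation*}
\mathbb{P}(X > \epsilon_{\text{err}}^2) \leq \frac{\mathbb{E}[X]}{\epsilon_{\text{err}}^2} = \frac{\mathbb{E}_{\text{init,}\sigma_t}[(A_{\omega_t}(s, a) - A^{\pi_{\theta_t}}(s, a))^2]}{\epsilon_{\text{err}}^2},
\end{equation*}
which is exactly the claimed bound. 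Combining these two displays completes the argument.

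There is essentially no obstacle here; the result is a one-line consequence of Markov's inequality (equivalently, Chebyshev applied to the squared error). The only thing worth being careful about is keeping the probability space consistent with the expectation on the right-hand side, namely that both are taken jointly over the neural network initialization and the state-action distribution $\sigma_t$ conditional on $\pi_{\theta_t}$, so that the inequality is being applied to a single well-defined nonnegative random variable. The quantitative content of the lemma comes entirely from Lemma \ref{lm:PE_error}, which is what will be used to make the right-hand side small when this lemma is invoked downstream.
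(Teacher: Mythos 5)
Your proof is correct and follows exactly the same route as the paper's: square both sides to convert the event into $\{(A_{\omega_t}-A^{\pi_{\theta_t}})^2>\epsilon_{\text{err}}^2\}$ and apply Markov's inequality to the nonnegative squared error, with the randomness taken jointly over the network initialization and $\sigma_t$ (the paper notes this same point in the remark following the lemma). No gaps.
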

\begin{proof}[Proof of Lemma \ref{lm:EPA}]
    By applying Markov's inequality, we have
    \begin{align}
        \mathbb{P}(|A_{\omega_t}(s, a) - A^{\pi_{\theta_t}}(s, a)| > \epsilon_{\text{err}}) &= \mathbb{P}(|A_{\omega_t}(s, a) - A^{\pi_{\theta_t}}(s, a)|^2 > \epsilon_{\text{err}}^2) \\
        &\le \frac{\mathbb{E}[(A_{\omega_t}(s, a) - A^{\pi_{\theta_t}}(s, a))^2]}{\epsilon_{\text{err}}^2}.
    \end{align}
\end{proof}
\noindent Notice that the randomness of the above event in (\ref{eq:Markov inequality}) comes from the state-action visitation distribution $\sigma_t$ and the initialization of the neural networks.

\begin{lemma}[Error Propagation]
\label{lm:ep}
    Let $\pi_{t+1}$ be the target policy obtained by EMDA with the true advantage. Suppose the policy improvement error satisfies 
    \begin{align}
    \label{lm:ep:eq1}
        \mathbb{E}_{\tilde{\sigma}_t}&[(f_{\theta_{t+1}}(s, a) - \tau_{t+1} \cdot (C_t(s, a) \cdot A_{\omega_t}(s, a) + \tau_t^{-1} f_{\theta_t}(s, a)))^2] \le \epsilon_{t+1},
    \end{align}
    and the policy evaluation error satisfies
    \begin{align}
    \label{lm:ep:eq2}
        \mathbb{E}_{\sigma_t}[(A_{\omega_t}(s, a) - A^{\pi_{\theta_t}}(s, a))^2] \le \epsilon_t'.
    \end{align}
    Then, the following holds, 
    \begin{align}
        |\mathbb{E}_{\nu^*}[\langle \log\pi_{\theta_{t+1}}(\cdot|s) - \log  \pi_{t+1}(\cdot|s), \pi^*(\cdot|s) - \pi_{\theta_t}(\cdot|s) \rangle]| \le \varepsilon_t + \varepsilon_{\text{err}}
        \label{lm:ep:eq3}
    \end{align}
    where $\varepsilon_t = C_{\infty} \tau_{t+1}^{-1} \phi^* \epsilon_{t+1}^{1/2} + U_{C} X^{1/2} \psi^* \epsilon_t'^{1/2}$ and $\varepsilon_{\text{err}} = \sqrt{2} U_{C} \epsilon_{\text{err}} \psi^*$, and $X = \left[(2 / \epsilon_{\text{err}}^2)(M' + (R_{\max} / (1 - \gamma))^2 - \epsilon_t'/2)\right]$, and $M' = 4\mathbb{E}_{\nu_t}[\max_{a} (Q_{\omega_0}(s, a))^2] + 4R_f^2$.
\end{lemma}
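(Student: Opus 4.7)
The plan is to exploit the closed-form expression from Proposition~\ref{pp:PI} to rewrite $\log\pi_{\theta_{t+1}}-\log\pi_{t+1}$, split the resulting integrand into a policy-improvement-error piece and an advantage-estimation-error piece by adding and subtracting $C_tA_{\omega_t}$, and then control each piece via Cauchy--Schwarz combined with a change of measure driven by the concentrability constants in Assumption~\ref{assump:con}.

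\textbf{Decomposition.} By Proposition~\ref{pp:PI} and the definition of $\pi_{t+1}$, one has $\log\pi_{\theta_{t+1}}(a|s)-\log\pi_{t+1}(a|s)=\tau_{t+1}^{-1}f_{\theta_{t+1}}(s,a)-\bar C_t(s,a)A^{\pi_{\theta_t}}(s,a)-\tau_t^{-1}f_{\theta_t}(s,a)+c(s)$ for some $s$-dependent normalizer $c(s)$. Because $\sum_a(\pi^*(a|s)-\pi_{\theta_t}(a|s))=0$, $c(s)$ drops out of the inner product, and adding and subtracting $C_tA_{\omega_t}$ further splits the integrand into
\[
I_1:=\tau_{t+1}^{-1}\bigl(f_{\theta_{t+1}}-\tau_{t+1}(C_tA_{\omega_t}+\tau_t^{-1}f_{\theta_t})\bigr),\quad I_2:=C_tA_{\omega_t}-\bar C_tA^{\pi_{\theta_t}}.
\]
The factor $I_1$ is exactly the object whose $L^2(\tilde\sigma_t)$-norm is controlled by $\epsilon_{t+1}$ via hypothesis~(\ref{lm:ep:eq1}), whereas $I_2$ isolates the advantage-estimation error.

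\textbf{Bounding the $I_1$ term.} I rewrite its inner product as $\mathbb{E}_{\nu^*\pi_0}\!\left[I_1(d\pi^*/d\pi_0-d\pi_{\theta_t}/d\pi_0)\right]$ and apply Cauchy--Schwarz. Both resulting $L^2(\nu^*\pi_0)$-norms are converted to $L^2(\tilde\sigma_t)$-norms at the cost of $\lVert\nu^*/\nu_t\rVert_\infty\le C_\infty$ (Assumption~\ref{assump:con}). The first factor is then at most $\tau_{t+1}^{-1}\sqrt{C_\infty\epsilon_{t+1}}$ by~(\ref{lm:ep:eq1}) and the second at most $\sqrt{C_\infty}\,\phi^*$ by the definition of $\phi^*$, giving the $C_\infty\tau_{t+1}^{-1}\phi^*\epsilon_{t+1}^{1/2}$ term of $\varepsilon_t$.

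\textbf{Bounding the $I_2$ term.} The natural change of measure writes the $I_2$-inner product as $\mathbb{E}_{\sigma_t}\!\left[I_2\bigl(d\sigma^*/d\sigma_t-d\nu^*/d\nu_t\bigr)\right]$, and Cauchy--Schwarz with the definition of $\psi^*$ reduces everything to controlling $\mathbb{E}_{\sigma_t}[I_2^2]$. I split this over the event $\mathcal{E}:=\{|A_{\omega_t}-A^{\pi_{\theta_t}}|\le\epsilon_{\mathrm{err}}\}$. On $\mathcal{E}^c$, the crude bound $|I_2|^2\le 2U_C^2(A_{\omega_t}^2+(A^{\pi_{\theta_t}})^2)$, together with $\lVert A^{\pi_{\theta_t}}\rVert_\infty\le R_{\max}/(1-\gamma)$, the Lipschitz-in-parameter bound $\mathbb{E}_{\nu_t}[\max_a A_{\omega_t}^2]\le M'$, and the Markov-type estimate $\mathbb{P}(\mathcal{E}^c)\le\epsilon_t'/\epsilon_{\mathrm{err}}^2$ from Lemma~\ref{lm:EPA}, delivers $\mathbb{E}_{\sigma_t}[I_2^2\mathds{1}_{\mathcal{E}^c}]\le U_C^2 X\epsilon_t'$. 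On $\mathcal{E}$ I use that the hinge-loss gradient has the form $g_{s,a}^{(k)}\propto A\cdot\mathds{1}\{\epsilon-\sgn(A)(\rho_{s,a}-1)>0\}$, so it depends on the advantage only through its linear factor and its sign. On $\mathcal{E}\cap\{|A^{\pi_{\theta_t}}|>\epsilon_{\mathrm{err}}\}$ the two signs coincide, the indicators of the approximate and true EMDA trajectories agree iteration by iteration, hence $C_t=\bar C_t$ and $|I_2|\le U_C|A_{\omega_t}-A^{\pi_{\theta_t}}|\le U_C\epsilon_{\mathrm{err}}$; on the residual set $\{|A^{\pi_{\theta_t}}|\le\epsilon_{\mathrm{err}}\}$ both $|C_tA_{\omega_t}|$ and $|\bar C_tA^{\pi_{\theta_t}}|$ are themselves $O(U_C\epsilon_{\mathrm{err}})$, yielding $\mathbb{E}_{\sigma_t}[I_2^2\mathds{1}_{\mathcal{E}}]\le 2U_C^2\epsilon_{\mathrm{err}}^2$. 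Taking square roots, summing, and multiplying by $\psi^*$ combines with the $I_1$-bound to give exactly $\varepsilon_t+\varepsilon_{\mathrm{err}}$.

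\textbf{Anticipated main obstacle.} The delicate step is the bound on $\mathcal{E}$: although $C_t$ and $\bar C_t$ are produced by two coupled $K$-step EMDA trajectories whose advantage-dependent indicators could \emph{a priori} differ at every iteration, the clean reduction $|I_2|\lesssim U_C|A_{\omega_t}-A^{\pi_{\theta_t}}|$ relies on the sign-only dependence of the hinge-loss indicator together with the ``incorrect signs occur only where magnitudes are tiny'' intuition from Section~\ref{subsection: understanding}. This is precisely the point where the hinge-loss reinterpretation of the clipping mechanism pays off technically.
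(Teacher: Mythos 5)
Your proposal follows essentially the same route as the paper's proof: the same decomposition into the policy-improvement term and the advantage-mismatch term $C_tA_{\omega_t}-\bar C_tA^{\pi_{\theta_t}}$, the same change-of-measure/Cauchy--Schwarz steps using $C_\infty$, $\phi^*$, $\psi^*$, the same Markov-inequality split on $\{|A_{\omega_t}-A^{\pi_{\theta_t}}|>\epsilon_{\mathrm{err}}\}$ via Lemma~\ref{lm:EPA}, and the same sub-case analysis on $|A^{\pi_{\theta_t}}|$ versus $\epsilon_{\mathrm{err}}$ to argue $C_t=\bar C_t$ when the signs agree. Apart from immaterial constant-factor differences in the bound on the event $\mathcal{E}$, this matches the paper's argument, and the subtlety you flag about the coupled EMDA indicator trajectories is handled in the paper at the same level of rigor you propose.
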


\begin{remarkapp}
\label{remark:varepislon}
    Notice that $\epsilon_{t+1}$ in (\ref{lm:ep:eq1}) and $\epsilon_t'$ in (\ref{lm:ep:eq2}) can be controlled by the width of neural networks and the number of iteration for each SGD and TD updates based on Lemma \ref{lm:PE_error} and \ref{lm:PI_error}. Therefore, $\varepsilon_t$ could be made sufficiently small per our requirement.
\end{remarkapp}

\begin{proof}[Proof of Lemma \ref{lm:ep}]
{For ease of exposition, let us first fix a policy $\pi_{\theta_t}$. Through the analysis, we will show that one can derive an upper bound (in the form of (\ref{lm:ep:eq3})) that holds regardless of the policy $\pi_{\theta_t}$. 
Recall that $C_t(s, a)= -\sum_{k=0}^{K^{(t)}-1} \eta g_{s,a}^{(k)}$, where $g_{s,a}^{(k)}$ is obtained in the EMDA subroutine and depends on the sign of the estimated advantage $A_{\omega_t}$. Similarly, we define $\bar{C_t}(s, a)$ as the counterpart of $C_t(s,a)$ by replacing $A_{\omega_t}$ with the true advantage $A^{\pi_{\theta_t}}$.}
We first simplify $\langle \log\pi_{\theta_{t+1}}(\cdot|s) - \log  \pi_{t+1}(\cdot|s), \pi^*(\cdot|s) - \pi_{\theta_t}(\cdot|s) \rangle$. The normalizing factor $Z$ of the policies $\pi_{\theta_{t+1}}$ and $\pi_{t+1}$ is state-dependent, {and the inner product between any state-dependent function and the policy difference $\pi^*(\cdot|s) - \pi_{\theta_t}(\cdot|s)$ is always zero.}
Thus, we have
\begin{align}
    \langle \log\pi_{\theta_{t+1}}(\cdot|s) &- \log  \pi_{t+1}(\cdot|s), \pi^*(\cdot|s) - \pi_{\theta_t}(\cdot|s) \rangle  \\
    &= \langle \tau_{t+1}^{-1} f_{\theta_{t+1}}(s, \cdot) - (\bar{C_t}(s, \cdot) \circ A^{\pi_{\theta_t}}(s, \cdot) + \tau_{t}^{-1} f_{\theta_{t}}(s, \cdot)), \pi^*(\cdot|s) - \pi_{\theta_t}(\cdot|s) \rangle.
\end{align}
Then, {we decompose the above equation into two terms:} (i) the error in the policy improvement and (ii) the error between the true advantage and the approximated advantage, i.e., 
\begin{align}
    \langle \tau_{t+1}^{-1} &f_{\theta_{t+1}}(s, \cdot) - (\bar{C_t}(s, \cdot) \circ A^{\pi_{\theta_t}}(s, \cdot) + \tau_{t}^{-1} f_{\theta_{t}}(s, \cdot)), \pi^*(\cdot|s) - \pi_{\theta_t}(\cdot|s) \rangle \label{eq:ep proof 1}\\
    &= \langle \tau_{t+1}^{-1} f_{\theta_{t+1}}(s, \cdot) - (C_t(s, \cdot) \circ A_{\omega_t}(s, \cdot) + \tau_{t}^{-1} f_{\theta_{t}}(s, \cdot)), \pi^*(\cdot|s) - \pi_{\theta_t}(\cdot|s) \rangle  \label{eq:ep proof 2}\\
    &\qquad + \langle C_t(s, \cdot) \circ A_{\omega_t}(s, \cdot) - \bar{C_t}(s, \cdot) \circ A^{\pi_{\theta_t}}(s, \cdot), \pi^*(\cdot|s) - \pi_{\theta_t}(\cdot|s) \rangle \label{eq:ep proof 3}
\end{align}
We first bound the expectation of (i) over $\nu^*$ as follows.
\begin{align}
    &|\mathbb{E}_{\nu^*}[\langle \tau_{t+1}^{-1} f_{\theta_{t+1}}(s, \cdot) - (C_t(s, \cdot) \circ A_{\omega_t}(s, \cdot) + \tau_{t}^{-1} f_{\theta_{t}}(s, \cdot)), \pi^*(\cdot|s) - \pi_{\theta_t}(\cdot|s) \rangle]|  \label{eq:ep proof 4}\\
    &= \left|\int_{\mathcal{S}} \langle \tau_{t+1}^{-1} f_{\theta_{t+1}}(s, \cdot) - (C_t(s, \cdot) \circ A_{\omega_t}(s, \cdot) + \tau_{t}^{-1} f_{\theta_{t}}(s, \cdot)), \pi^*(\cdot|s) - \pi_{\theta_t}(\cdot|s) \rangle \cdot \nu^*(s) ds \right|  \label{eq:ep proof 5}\\
    &= \left|\int_{\mathcal{S} \times \mathcal{A}} (\tau_{t+1}^{-1} f_{\theta_{t+1}}(s, a) - (C_t(s, a) A_{\omega_t}(s, a) + \tau_{t}^{-1} f_{\theta_{t}}(s, a)))  \left(\frac{\pi^*(a|s)}{\pi_0(a|s)} - \frac{\pi_{\theta_t}(a|s)}{\pi_0(a|s)}\right) \frac{\nu^*(s)}{\nu_t (s)} d \tilde{\sigma}_t(s, a) \right|  \label{eq:ep proof 6}\\
    &\le C_{\infty}\mathbb{E}_{\tilde{\sigma}_t}\left[(\tau_{t+1}^{-1} f_{\theta_{t+1}}(s, a) - (C_t(s, a) A_{\omega_t}(s, a) + \tau_{t}^{-1} f_{\theta_{t}}(s, a)))^2\right]^{1/2} \cdot \mathbb{E}_{\tilde{\sigma}_t}\left[\left|\frac{d \pi^*}{d\pi_0} - \frac{d \pi_{\theta_t}}{d \pi_0}\right|^2\right]^{1/2}  \label{eq:ep proof 7}\\
    &\le C_{\infty} \tau_{t+1}^{-1}\epsilon_{t+1}^{1/2} \phi^*_t, \label{eq:ep proof 8}
\end{align}
{where (\ref{eq:ep proof 6}) follows from the definition of $\tilde{\sigma}_t$, (\ref{eq:ep proof 7}) is obtained by Cauchy-Schwarz inequality and Assumption \ref{assump:con}, and the last inequality in (\ref{eq:ep proof 8}) holds by the condition in (\ref{lm:ep:eq1}) and that $\lVert \nu^* / \nu\rVert_{\infty}< C_{\infty}$.}

{Similarly, we consider the expectation of (ii) over $\nu^*$ as follows.}
\begin{align}
    &|\mathbb{E}_{\nu^*}[\langle C_t(s, \cdot) \circ A_{\omega_t}(s, \cdot) - \bar{C_t}(s, \cdot) \circ A^{\pi_{\theta_t}}(s, \cdot), \pi^*(\cdot|s) - \pi_{\theta_t}(\cdot|s) \rangle]|  \label{eq:ep proof 9}\\
    &= \left|\int_{\mathcal{S}} \langle C_t(s, \cdot) \circ A_{\omega_t}(s, \cdot) - \bar{C_t}(s, \cdot) \circ A^{\pi_{\theta_t}}(s, \cdot), \pi^*(\cdot|s) - \pi_{\theta_t}(\cdot|s) \rangle \nu^*(s) ds \right|  \label{eq:ep proof 10}\\
    &= \left|\int_{\mathcal{S} \times \mathcal{A}} (C_t(s, a) A_{\omega_t}(s, a) - \bar{C_t}(s, a)  A^{\pi_{\theta_t}}(s, a)) \left(\frac{\pi^*(a|s)}{\pi_{\theta_t}(a|s)} - \frac{\pi_{\theta_t}(a|s)}{\pi_{\theta_t}(a|s)}\right) \frac{\nu^*(s)}{\nu_t(s)} d\sigma_t(s, a)\right|  \label{eq:ep proof 11}\\
    &= \left|\int_{\mathcal{S} \times \mathcal{A}} (C_t(s, a) A_{\omega_t}(s, a) - \bar{C_t}(s, a)  A^{\pi_{\theta_t}}(s, a)) \left(\frac{\sigma^*(s, a)}{\sigma_t(s, a)} - \frac{\nu^*(s)}{\nu_t(s)}\right) d\sigma_t(s, a)\right|  \label{eq:ep proof 12}\\
    &\le \mathbb{E}_{\sigma_t}[(C_t(s, a) A_{\omega_t}(s, a) - \bar{C_t}(s, a)  A^{\pi_{\theta_t}}(s, a))^2]^{1/2} \cdot \mathbb{E}_{\sigma_t}\left[\left|\frac{d \sigma^*}{d \sigma_t} - \frac{d \nu^*}{d \nu_t}\right|^2\right]^{1/2}, \label{eq:ep proof 13}
\end{align}
where (\ref{eq:ep proof 13}) holds by the Cauchy-Schwarz inequality.
Next, we bound for the term $\mathbb{E}_{\sigma_t}[(C_t(s, a) A_{\omega_t}(s, a) - \bar{C_t}(s, a)  A^{\pi_{\theta_t}}(s, a))^2]$.
For ease of notation, let $D = (C_t(s, a) A_{\omega_t}(s, a) - \bar{C_t}(s, a)  A^{\pi_{\theta_t}}(s, a))^2$ and simply write $\mathbb{E}_{\text{init, }\sigma_t}$ as $\mathbb{E}$. 
{Also, we slightly abuse the notation by using $A_{\omega_t}$ as the random variable $A_{\omega_t}(s, a)$, whose randomness results from the state-action pairs sampled from $\sigma_t$ and the initialization of neural networks, and using $A^{\pi_{\theta_t}}$ as the random variable $A^{\pi_{\theta_t}}(s, a)$, whose randomness comes from the state-action pairs sampled from $\sigma_t$.} 
To establish the bound of $\mathbb{E}[D]$, we consider two different cases for $\mathbb{E}[D]$: one is that the error is greater than $\epsilon_{\text{err}}$, and the other is that the error is less than or equal to $\epsilon_{\text{err}}$. 
Specifically,
\begin{align}
    \mathbb{E}[D] &= \mathbb{E}[D \mid |A_{\omega_t} - A^{\pi_{\theta_t}}| > \epsilon_{\text{err}}] \cdot \mathbb{P}(|A_{\omega_t} - A^{\pi_{\theta_t}}| > \epsilon_{\text{err}}) \nonumber\\
    \label{lm:ep:proof:eq1}
    &\quad + \mathbb{E}[D \mid |A_{\omega_t} - A^{\pi_{\theta_t}}| \le \epsilon_{\text{err}}] \cdot \mathbb{P}(|A_{\omega_t} - A^{\pi_{\theta_t}}| \le \epsilon_{\text{err}})
\end{align}
Then, we upper bound the two terms in (\ref{lm:ep:proof:eq1}) separately. 
Regarding the first term in (\ref{lm:ep:proof:eq1}), we have
\begin{align}
    \mathbb{E}[&D \mid |A_{\omega_t} - A^{\pi_{\theta_t}}| > \epsilon_{\text{err}}] \cdot \mathbb{P}(|A_{\omega_t} - A^{\pi_{\theta_t}}| > \epsilon_{\text{err}}) \nonumber \\
    &\le 2 U_{C}^2 (\mathbb{E}_{\nu_t}[\lVert A_{\omega_t}(s,\cdot)\rVert_{\infty}^{2}] + (A^{\pi_{\theta_t}}_{\max})^2) \cdot \mathbb{P}(|A_{\omega_t} - A^{\pi_{\theta_t}}| > \epsilon_{\text{err}}),\label{eq:ep proof 14}
\end{align}
where (\ref{eq:ep proof 14}) holds by that $(a + b)^2 \le 2a^2 + 2b^2$.
Next, regarding the second term in (\ref{lm:ep:proof:eq1}), we further consider two cases based on whether the absolute value of $A^{\pi_{\theta_t}}$ is greater than $\epsilon_{\text{err}}$ or not. 
Specifically,
\begin{align}
    &\mathbb{E}[D \mid |A_{\omega_t} - A^{\pi_{\theta_t}}| \le \epsilon_{\text{err}}] \nonumber \\
    &= \mathbb{E}[D \mid |A_{\omega_t} - A^{\pi_{\theta_t}}| \le \epsilon_{\text{err}}, |A^{\pi_{\theta_t}}| > \epsilon_{\text{err}}] \cdot \mathds{1}\{|A^{\pi_{\theta_t}}| > \epsilon_{\text{err}}\} \nonumber \\
    &\quad + \mathbb{E}[D \mid |A_{\omega_t} - A^{\pi_{\theta_t}}| \le \epsilon_{\text{err}}, |A^{\pi_{\theta_t}}| \le \epsilon_{\text{err}}] \cdot \mathds{1}\{|A^{\pi_{\theta_t}}| \le \epsilon_{\text{err}}\} \label{eq:ep proof 15}\\
    &\le \mathbb{E}[D \mid |A_{\omega_t} - A^{\pi_{\theta_t}}| \le \epsilon_{\text{err}}, |A^{\pi_{\theta_t}}| > \epsilon_{\text{err}}] + \mathbb{E}[D \mid |A_{\omega_t} - A^{\pi_{\theta_t}}| \le \epsilon_{\text{err}}, |A^{\pi_{\theta_t}}| \le \epsilon_{\text{err}}] \label{eq:ep proof 16}\\
    &\le U_{C}^2 \cdot \mathbb{E}[(A_{\omega_t}(s, a) - A^{\pi_{\theta_t}}(s, a))^2] + 4 U_{C}^2 \epsilon_{\text{err}}^2 \label{eq:ep proof 17}
\end{align}
{where (\ref{eq:ep proof 15}) holds by the fact that we fix a policy $\pi_{\theta_t}$ and hence $A^{\pi_{\theta_t}}$ is determined, (\ref{eq:ep proof 16}) holds by that the indicator function is no larger than 1, the first term in (\ref{eq:ep proof 17}) holds by the fact that $A_{\omega_t}$ and $A^{\pi_{\theta_t}}$ have the same sign and hence $C_t$ is equal to $\bar{C}_t$, and the second term in (\ref{eq:ep proof 17}) follows from that $(a + b)^2 \le 2a^2 + 2b^2$.}
Then, by combining the above terms, we have
\begin{align}
    \mathbb{E}[D] &\le 2 U_{C}^2 (\mathbb{E}_{\nu_t}[\lVert A_{\omega_t}(s,\cdot)\rVert_{\infty}^{2}] + (A^{\pi_{\theta_t}}_{\max})^2) \cdot \mathbb{P}(|A_{\omega_t} - A^{\pi_{\theta_t}}| > \epsilon_{\text{err}}) \nonumber \\
    &\quad + [U_{C}^2 \cdot \mathbb{E}[(A_{\omega_t}(s, a) - A^{\pi_{\theta_t}}(s, a))^2] + 4 U_{C}^2 \epsilon_{\text{err}}^2] \cdot \mathbb{P}(|A_{\omega_t} - A^{\pi_{\theta_t}}| \le \epsilon_{\text{err}}) \\
    &= 2 U_{C}^2 (\mathbb{E}_{\nu_t}[\lVert A_{\omega_t}(s,\cdot)\rVert_{\infty}^{2}] + (A^{\pi_{\theta_t}}_{\max})^2) \cdot \mathbb{P}(|A_{\omega_t} - A^{\pi_{\theta_t}}| > \epsilon_{\text{err}}) \nonumber \\
    &\quad + [U_{C}^2 \cdot \mathbb{E}[(A_{\omega_t}(s, a) - A^{\pi_{\theta_t}}(s, a))^2] + 4 U_{C}^2 \epsilon_{\text{err}}^2] \cdot (1 - \mathbb{P}(|A_{\omega_t} - A^{\pi_{\theta_t}}| > \epsilon_{\text{err}}))
\end{align}
Recall that $\epsilon_t' = \mathbb{E}[(A_{\omega_t}(s, a) - A^{\pi_{\theta_t}}(s, a))^2]$. 
{As we could choose an $\epsilon_{\text{err}}$ small enough and use the neural network power to make $\epsilon_t'$ is also small by Lemma \ref{lm:PE_error} such that we have $2 U_{C}^2 (\mathbb{E}_{\nu_t}[\lVert A_{\omega_t}(s,\cdot)\rVert_{\infty}^{2}] + A^{\pi_{\theta_t}}_{\max}) > U_{C}^2 \epsilon_t' + 4 U_{C}^2 \epsilon_{\text{err}}^2$, then by Lemma \ref{lm:EPA} we have}
\begin{align}
\label{lm4:eq}
    \mathbb{E}[D] &\le 2 U_{C}^2 (\mathbb{E}_{\nu_t}[\lVert A_{\omega_t}(s,\cdot)\rVert_{\infty}^{2}] + (A^{\pi_{\theta_t}}_{\max})^2) \cdot \frac{\epsilon_t'}{\epsilon_{\text{err}}^2} + [U_{C}^2 \epsilon_t' + 4 U_{C}^2 \epsilon_{\text{err}}^2]\cdot(1 - \frac{\epsilon_t'}{\epsilon_{\text{err}}^2}).
\end{align}
Rearranging the terms in (\ref{lm4:eq}), we have
\begin{align}
    \mathbb{E}[D] &\le \epsilon_t' U_{C}^2 \cdot \left[\frac{2}{\epsilon_{\text{err}}^2}(M' + (A^{\pi_{\theta_t}}_{\max})^2 - \frac{\epsilon_t'}{2}) - 1\right] + 4 U_{C}^2 \epsilon_{\text{err}}^2 \\
    &\le \epsilon_t' U_{C}^2 \cdot \left[\frac{2}{\epsilon_{\text{err}}^2}(M' + (A^{\pi_{\theta_t}}_{\max})^2 - \frac{\epsilon_t'}{2})\right] + 4 U_{C}^2 \epsilon_{\text{err}}^2 
\end{align}
where $M' \coloneqq 4\mathbb{E}_{\nu_t}[\max_{a} (Q_{\omega_0}(s, a))^2] + 4R_f^2$. 
By introducing the notation $X = \left[(2 / \epsilon_{\text{err}}^2)(M' + (A^{\pi_{\theta_t}}_{\max})^2 - \epsilon_t'/2)\right]$ and combining all the above results, we have
\begin{align}
    |\mathbb{E}_{\nu^*}[\langle &\log\pi_{\theta_{t+1}}(\cdot|s) - \log  \pi_{t+1}(\cdot|s), \pi^*(\cdot|s) - \pi_{\theta_t}(\cdot|s) \rangle]| \\
    &\le C_{\infty} \tau_{t+1}^{-1}\epsilon_{t+1}^{1/2} \phi^*_t + (\epsilon_t' U_{C}^2 X + 4 U_{C}^2  \epsilon_{\text{err}}^2)^{1/2} \psi^*_t \\
    &\le \epsilon_{t+1}^{1/2} C_{\infty} \tau_{t+1}^{-1} \phi^*_t + \epsilon_t'^{1/2} U_{C} X^{1/2} \psi^*_t + 2 U_{C}  \epsilon_{\text{err}} \psi^*_t,\label{eq:ep proof 19} \\
    &< \epsilon_{t+1}^{1/2} C_{\infty} \tau_{t+1}^{-1} \phi^* + \epsilon_t'^{1/2} U_{C} X^{1/2} \psi^* + 2 U_{C}  \epsilon_{\text{err}} \psi^*,\label{eq:ep proof 20}
\end{align}
where (\ref{eq:ep proof 19}) follows from the inequality $\sqrt{a+b} \le \sqrt{a} + \sqrt{b}$ and that $\varepsilon_t = \epsilon_{t+1}^{1/2} C_{\infty} \tau_{t+1}^{-1} \phi^* + \epsilon_t'^{1/2} U_{C} X^{1/2} \psi^*$ and $\varepsilon_{\text{err}} = 2 U_{C}  \epsilon_{\text{err}} \psi^*$. The proof is complete.
\end{proof}

\begin{lemma}[Stepwise Energy $\ell_{\infty}$-Difference]
\label{lm:sed}
    \begin{align}
        \mathbb{E}_{\nu^*}[\lVert\tau_{t+1}^{-1} f_{\theta_{t+1}}(s,\cdot) - \tau_{t}^{-1}f_{\theta_{t}}(s,\cdot)\rVert_{\infty}^2] \le 2\varepsilon'_t + 2 U_{C}^2 M,
    \end{align}
    where $\varepsilon'_t = |\mathcal{A}| \cdot C_{\infty} \tau_{t+1}^{-2} \epsilon_{t+1}$ and $M = 4\mathbb{E}_{\nu^*}[\max_{a} (Q_{\omega_0}(s, a))^2] + 4 R_f^2$.
\end{lemma}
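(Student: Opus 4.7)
\textbf{Proof Proposal for Lemma \ref{lm:sed}.} The plan is to decompose $\tau_{t+1}^{-1} f_{\theta_{t+1}}(s,a) - \tau_t^{-1} f_{\theta_t}(s,a)$ using the EMDA target from Proposition \ref{pp:PI} as a pivot. Write
\[
\tau_{t+1}^{-1} f_{\theta_{t+1}}(s,a) - \tau_t^{-1} f_{\theta_t}(s,a) = E_{t+1}(s,a) + C_t(s,a)\,A_{\omega_t}(s,a),
\]
where $E_{t+1}(s,a) := \tau_{t+1}^{-1} f_{\theta_{t+1}}(s,a) - \bigl(C_t(s,a) A_{\omega_t}(s,a) + \tau_t^{-1} f_{\theta_t}(s,a)\bigr)$ captures the regression error of the neural approximation and $C_t A_{\omega_t}$ is the effective one-step update. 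Applying $(a+b)^2 \le 2a^2 + 2b^2$ pointwise in $a$, then taking $\max_a$ and expectation over $\nu^*$, reduces the task to bounding the two summands by $\varepsilon'_t$ and $U_C^2 M$ respectively.

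For the regression-error summand, I would use $\lVert E_{t+1}(s,\cdot)\rVert_\infty^2 \le \sum_{a} E_{t+1}(s,a)^2$. A change of measure from $\nu^*$ to $\nu_t$ costs a factor $C_\infty$ by Assumption \ref{assump:con}; then, since $\tilde{\sigma}_t(s,a) = \nu_t(s)\pi_0(a|s) = \nu_t(s)/|\mathcal{A}|$, we have $\mathbb{E}_{\nu_t}\bigl[\sum_a E_{t+1}(s,a)^2\bigr] = |\mathcal{A}|\,\mathbb{E}_{\tilde{\sigma}_t}[E_{t+1}(s,a)^2]$. Factoring out the $\tau_{t+1}^{-2}$ prefactor from $E_{t+1}$ and invoking the assumed policy-improvement MSE bound $\epsilon_{t+1}$ (as in the hypothesis of Lemma \ref{lm:ep}) yields exactly $|\mathcal{A}| C_\infty \tau_{t+1}^{-2}\epsilon_{t+1} = \varepsilon'_t$.

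For the effective-update summand, use $|C_t(s,a)|\le U_C$ to pull out $U_C^2$ and reduce to bounding $\mathbb{E}_{\nu^*}[\lVert A_{\omega_t}(s,\cdot)\rVert_\infty^2]$. Since $V_{\omega_t}(s)$ is a convex combination of $\{Q_{\omega_t}(s,a)\}_a$, we have $|A_{\omega_t}(s,a)|\le 2\max_{a'}|Q_{\omega_t}(s,a')|$, so $\lVert A_{\omega_t}(s,\cdot)\rVert_\infty^2 \le 4\max_{a'}Q_{\omega_t}(s,a')^2$. Splitting $Q_{\omega_t} = Q_{\omega_0} + (Q_{\omega_t}-Q_{\omega_0})$, applying $(a+b)^2\le 2a^2+2b^2$, and exploiting the Lipschitz/linearization control on $Q_{\omega_t}-Q_{\omega_0}$ afforded by the projection $\omega_t\in B_Q$ together with the shared initialization $Q_{\omega_0}\equiv f_{\theta_0}$ used in the proof of Lemma \ref{lm:PI_error} yields precisely $4\,\mathbb{E}_{\nu^*}[\max_a Q_{\omega_0}(s,a)^2]+4R_f^2 = M$.

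The main obstacle is the change-of-measure bookkeeping in the first summand: one must route the expectation from $\nu^*$ (the measure under which the $\ell_\infty$ statement lives) through $\nu_t$ (losing only $C_\infty$) to $\tilde{\sigma}_t$ (where the policy-improvement MSE bound is controlled), recognizing that the inflation factor $|\mathcal{A}|$ in $\varepsilon'_t$ arises from converting $\lVert\cdot\rVert_\infty^2 \le \sum_a(\cdot)^2$ against the uniform $\pi_0$ hidden inside $\tilde{\sigma}_t$. A secondary subtlety is matching $R_f$ rather than $R_Q$ in $M$, which relies on the paper's shared-initialization convention between the policy network $f$ and the $Q$ network, so that the fluctuation of $Q_{\omega_t}$ around $Q_{\omega_0}$ can be controlled by the policy network's projection radius used elsewhere in the analysis.
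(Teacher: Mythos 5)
Your proposal matches the paper's proof essentially step for step: the same pivot decomposition through the EMDA target $C_t\circ A_{\omega_t}+\tau_t^{-1}f_{\theta_t}$ followed by $(a+b)^2\le 2a^2+2b^2$, the same routing of the regression term from $\nu^*$ through $\nu_t$ (costing $C_\infty$) to $\tilde{\sigma}_t$ (costing $|\mathcal{A}|$ via the uniform $\pi_0$) with the $\tau_{t+1}^{-2}$ prefactor extracted before invoking the MSE bound $\epsilon_{t+1}$, and the same $U_C^2\cdot 4\max_a Q_{\omega_t}(s,a)^2$ bound with a split around the initialization for the update term. The only immaterial discrepancy is a constant: strictly applying $(a+b)^2\le 2a^2+2b^2$ to $Q_{\omega_t}=Q_{\omega_0}+(Q_{\omega_t}-Q_{\omega_0})$ yields $2M$ rather than $M$, but the paper's own final inequality commits the same looseness, so your argument is at the same level of rigor as the original.
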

\begin{remarkapp}
    As described in Remark \ref{remark:varepislon}, $\epsilon_{t+1}$ can be sufficiently small due to Lemma \ref{lm:PI_error}. Similarly, $\varepsilon_t'$ can also be made arbitrarily small.
\end{remarkapp}

\begin{proof}[Proof of Lemma \ref{lm:sed}]
We first find an explicit bound for $\lVert \tau_{t+1}^{-1} f_{\theta_{t+1}}(s,\cdot) - \tau_{t}^{-1}f_{\theta_{t}}(s,\cdot)\rVert_{\infty}^2$. 
Note that
\begin{align}
\label{lm:sed:eq1}
    \lVert \tau_{t+1}^{-1} f_{\theta_{t+1}}(s,\cdot) - \tau_{t}^{-1}f_{\theta_{t}}(s,\cdot)\rVert_{\infty}^2 &\le 2 \lVert\tau_{t+1}^{-1} f_{\theta_{t+1}}(s,\cdot) - \tau_{t}^{-1}f_{\theta_{t}}(s,\cdot) - C_t(s, \cdot)\circ A_{\omega_t}(s,\cdot)\rVert_{\infty}^2 \\
    &\qquad + 2 \lVert C_t(s, \cdot)\circ A_{\omega_t}(s,\cdot)\rVert_{\infty}^{2}. \nonumber
\end{align}
Next, we consider the expectation of (\ref{lm:sed:eq1}) over $\nu^*$:
For the first term in (\ref{lm:sed:eq1}), we have
\begin{align}
    \mathbb{E}_{\nu^*}&[\lVert\tau_{t+1}^{-1} f_{\theta_{t+1}}(s,\cdot) - \tau_{t}^{-1}f_{\theta_{t}}(s,\cdot) - C_t(s, \cdot)\circ A_{\omega_t}(s,\cdot)\rVert_{\infty}^2] \\
    &= \int_{\mathcal{S}} \lVert\tau_{t+1}^{-1} f_{\theta_{t+1}}(s,\cdot) - \tau_{t}^{-1}f_{\theta_{t}}(s,\cdot) - C_t(s, \cdot)\circ A_{\omega_t}(s,\cdot)\rVert_{\infty}^2 \nu^*(s) ds \\
    &= \int_{\mathcal{S} \times \mathcal{A}} \frac{1}{\pi_0(a|s) }\cdot (\tau_{t+1}^{-1} f_{\theta_{t+1}}(s,a) - \tau_{t}^{-1}f_{\theta_{t}}(s,a) - C_t(s, a)\cdot A_{\omega_t}(s,a))^2 \frac{\nu^*(s)}{\nu_t(s)} d\tilde{\sigma_t}(s, a) \\
    &< |\mathcal{A}| \cdot C_{\infty} \tau_{t+1}^{-2} \epsilon_{t+1},\label{lm:sed:proof eq2}
\end{align}
where (\ref{lm:sed:proof eq2}) holds by the condition in (\ref{lm:ep:eq1}), the definition of the concentrability coefficient, and the fact that $\pi_0$ is a uniform policy.
Furthermore, we bound $ \mathbb{E}_{\nu^*}[\lVert C_t(s, \cdot)\circ A_{\omega_t}(s,\cdot)\rVert_{\infty}^{2}]$, we have
\begin{align}
    \mathbb{E}_{\nu^*}[\lVert C_t(s, \cdot)\circ A_{\omega_t}(s,\cdot)\rVert_{\infty}^{2}] &\le U_{C}^2 \cdot \mathbb{E}_{\nu^*}[\lVert A_{\omega_t}(s,\cdot)\rVert_{\infty}^{2}] \\
    &= U_{C}^2 \cdot \mathbb{E}_{\nu^*}[\lVert Q_{\omega_t}(s,\cdot) - \sum_{a} \pi_{\theta_t}(a|s) Q_{\omega_t}(s,a)\rVert_{\infty}^{2}] \\
    &= U_{C}^2 \cdot \mathbb{E}_{\nu^*}[\lVert Q_{\omega_t}(s,\cdot) - \mathbb{E}_{a \sim \pi_{\theta_t}}[ Q_{\omega_t}(s,a)]\rVert_{\infty}^{2}] \\
    &\le 2 U_C(T)^2 \mathbb{E}_{\nu^*}[\lVert Q_{\omega_t}(s,\cdot)\rVert_{\infty}^2] + 2U_C(T)^2\mathbb{E}_{\nu^*}[\mathbb{E}_{a \sim \pi_{\theta_t}}[ (Q_{\omega_t}(s,a))]^2] \\
    &\le 2 U_C(T)^2 \mathbb{E}_{\nu^*}[\lVert Q_{\omega_t}(s,\cdot)\rVert_{\infty}^2] + 2 U_C(T)^2 \mathbb{E}_{\nu^*}[\lVert Q_{\omega_t}(s,\cdot)\rVert_{\infty}^2] \label{eq:Q_t}\\
    &\le U_{C}^2 \cdot 4\mathbb{E}_{\nu^*}[\lVert Q_{\omega_t}(s,\cdot)\rVert_{\infty}^2] \\
    &\le 4 U_{C}^2 \cdot [\mathbb{E}_{\nu^*}[\max_{a} (Q_{\omega_0}(s, a))^2] + R_f^2],\label{lm:sed:proof eq3}
\end{align}
where (\ref{eq:Q_t}) holds by using Jensen's inequality and leveraging the $\ell_{\infty}$-norm instead of the expectation $\mathbb{E}_{a\sim\pi_{\theta_t}}[\cdot]$, and the last inequality in (\ref{lm:sed:proof eq3}) holds by the 1-Lipschitz property of neural networks with respect to the weights. 
By setting $\varepsilon_t' = |\mathcal{A}| \cdot C_{\infty} \tau_{t+1}^{-2} \epsilon_{t+1}$ and $M = 4\mathbb{E}_{\nu^*}[\max_{a} (Q_{\omega_0}(s, a))^2] + 4R_f^2$, we complete the proof of Lemma \ref{lm:sed}.
\end{proof}

\begin{lemma}[Stepwise KL Difference]
\label{lm:OSD}
The KL difference is as follows,
\begin{align}
    &\text{KL}(\pi^*(\cdot|s) \rVert \pi_{\theta_{t+1}}(\cdot|s)) - \text{KL}(\pi^*(\cdot|s) \rVert \pi_{\theta_t}(\cdot|s)) \\
    &\le \langle \log\pi_{\theta_{t+1}}(\cdot|s) - \log  {\pi}_{t+1}(\cdot|s), \pi_{\theta_t}(\cdot|s) -  \pi^*(\cdot|s) \rangle  - \langle \bar{C_t}(s, \cdot) \circ A^{\pi_{\theta_t}}(s, \cdot), \pi^*(\cdot|s) - \pi_{\theta_t}(\cdot|s) \rangle \nonumber \\
    &\qquad  - \frac{1}{2} \lVert\pi_{\theta_{t+1}}(\cdot|s) - \pi_{\theta_t}(\cdot|s)\rVert_1^2 - \langle \log\pi_{\theta_{t+1}}(\cdot|s) - \log  \pi_{\theta_t}(\cdot|s), \pi_{\theta_t}(\cdot|s) - \pi_{\theta_{t+1}}(\cdot|s) \rangle
\end{align}
\end{lemma}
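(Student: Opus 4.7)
The plan is to combine a standard Bregman three-point identity for the KL divergence with a careful decomposition that introduces the EMDA target policy $\pi_{t+1}$, and then apply Pinsker's inequality to discard a nonnegative term in a useful direction.

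First, I would invoke the three-point identity for KL, i.e.\ for any three distributions $x, y, z$ on $\mathcal{A}$ we have
\begin{align*}
\text{KL}(x \| z) - \text{KL}(x \| y) = \text{KL}(y \| z) + \langle \log y - \log z,\, x - y \rangle.
\end{align*}
Applying this with $x = \pi^*(\cdot|s)$, $y = \pi_{\theta_{t+1}}(\cdot|s)$, $z = \pi_{\theta_t}(\cdot|s)$ yields
\begin{align*}
\text{KL}(\pi^* \| \pi_{\theta_{t+1}}) - \text{KL}(\pi^* \| \pi_{\theta_t}) = -\text{KL}(\pi_{\theta_{t+1}} \| \pi_{\theta_t}) - \langle \log \pi_{\theta_{t+1}} - \log \pi_{\theta_t},\, \pi^* - \pi_{\theta_{t+1}} \rangle,
\end{align*}
where I have suppressed the state argument for brevity.

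Second, I would split the inner product by writing $\pi^* - \pi_{\theta_{t+1}} = (\pi^* - \pi_{\theta_t}) + (\pi_{\theta_t} - \pi_{\theta_{t+1}})$. The piece involving $\pi_{\theta_t} - \pi_{\theta_{t+1}}$ already matches the last term of the claimed bound (with a sign flip). For the remaining piece $-\langle \log \pi_{\theta_{t+1}} - \log \pi_{\theta_t},\, \pi^* - \pi_{\theta_t} \rangle$ I would insert the target policy by writing $\log \pi_{\theta_{t+1}} - \log \pi_{\theta_t} = (\log \pi_{\theta_{t+1}} - \log \pi_{t+1}) + (\log \pi_{t+1} - \log \pi_{\theta_t})$. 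By the closed-form expression for the target policy under the true advantage (the analogue of Proposition~\ref{pp:PI} with $A_{\omega_t}$ replaced by $A^{\pi_{\theta_t}}$), we have $\log \pi_{t+1}(\cdot|s) - \log \pi_{\theta_t}(\cdot|s) = \bar{C}_t(s,\cdot) \circ A^{\pi_{\theta_t}}(s,\cdot) + c(s)$ for some action-independent constant $c(s)$. Since $\pi^*(\cdot|s)$ and $\pi_{\theta_t}(\cdot|s)$ both sum to one, the constant cancels when paired with their difference, so
\begin{align*}
\langle \log \pi_{t+1} - \log \pi_{\theta_t},\, \pi^* - \pi_{\theta_t} \rangle = \langle \bar{C}_t \circ A^{\pi_{\theta_t}},\, \pi^* - \pi_{\theta_t} \rangle.
\end{align*}
This exactly produces the second term in the claimed bound, while the leftover $\langle \log \pi_{\theta_{t+1}} - \log \pi_{t+1},\, \pi_{\theta_t} - \pi^* \rangle$ recovers the first term.

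Third, I would handle the $-\text{KL}(\pi_{\theta_{t+1}} \| \pi_{\theta_t})$ summand by Pinsker's inequality, $\text{KL}(\pi_{\theta_{t+1}} \| \pi_{\theta_t}) \ge \tfrac{1}{2}\lVert \pi_{\theta_{t+1}} - \pi_{\theta_t}\rVert_1^2$, which gives the remaining negative quadratic term in the statement. Assembling the three contributions yields the claimed inequality.

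The main obstacle I anticipate is purely bookkeeping: keeping track of the signs and of which side of each inner product is $\pi^* - \pi_{\theta_t}$ versus $\pi_{\theta_t} - \pi^*$, and verifying that the state-dependent normalizing constants introduced by the energy-based parameterization (both in $\pi_{\theta_t}$ and in the target $\pi_{t+1}$) really do drop out when paired with differences of probability vectors. Once this accounting is done, the lemma follows directly from the three-point identity, the $\bar{C}_t \circ A^{\pi_{\theta_t}}$ closed form, and Pinsker's inequality, with no further estimates required.
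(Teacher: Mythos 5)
Your proposal is correct and follows essentially the same route as the paper: the paper's first two display lines are exactly the three-point identity you invoke (written out by expanding $\langle \log(\pi_{\theta_t}/\pi_{\theta_{t+1}}), \pi^*\rangle$), the paper then performs the same add-and-subtract of $\bar{C}_t(s,\cdot)\circ A^{\pi_{\theta_t}}(s,\cdot)$ and the same splitting of the probability difference, and it likewise finishes with Pinsker's inequality and the Proposition~\ref{pp:PI} identity $\log \pi_{t+1} = \log \pi_{\theta_t} + \bar{C}_t \circ A^{\pi_{\theta_t}}$ (up to a state-dependent constant). Your explicit remark that the normalizing constant $c(s)$ cancels against differences of probability vectors is a point the paper handles only implicitly, but it does not change the argument.
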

\begin{proof}[Proof of Lemma \ref{lm:OSD}]
We directly expand the one-step KL divergence difference as
\begin{align}
    \text{KL}(\pi^*(&\cdot|s) \rVert \pi_{\theta_{t+1}}(\cdot|s)) - \text{KL}(\pi^*(\cdot|s) \rVert \pi_{\theta_t}(\cdot|s)) = \left\langle \log \frac{\pi_{\theta_{t}}(\cdot|s)}{\pi_{\theta_{t+1}}(\cdot|s)}, \pi^*(\cdot|s) \right\rangle \\
    &= \left\langle \log \frac{\pi_{\theta_{t+1}}(\cdot|s)}{\pi_{\theta_t}(\cdot|s)}, \pi_{\theta_{t+1}}(\cdot|s) - \pi^*(\cdot|s)  \right\rangle - \text{KL}(\pi_{\theta_{t+1}}(\cdot|s) \rVert \pi_{\theta_t}(\cdot|s)) \\
    &= \left\langle \log \frac{\pi_{\theta_{t+1}}(\cdot|s)}{\pi_{\theta_t}(\cdot|s)} - \bar{C_t}(s, \cdot) \circ A^{\pi_{\theta_t}}(s, \cdot), \pi_{\theta_{t}}(\cdot|s) - \pi^*(\cdot|s) \right\rangle \\
    &\qquad - \langle \bar{C_t}(s, \cdot) \circ A^{\pi_{\theta_t}}(s, \cdot), \pi^*(\cdot|s) - \pi_{\theta_{t}}(\cdot|s) \rangle - \text{KL}(\pi_{\theta_{t+1}}(\cdot|s) \rVert \pi_{\theta_t}(\cdot|s)) \nonumber \\
    &\qquad - \left\langle \log \frac{\pi_{\theta_{t+1}}(\cdot|s)}{\pi_{\theta_t}(\cdot|s)}, \pi_{\theta_{t}}(\cdot|s) - \pi_{\theta_{t+1}}(\cdot|s) \right\rangle. \nonumber
\end{align}
Then, by Pinsker's inequality, we have
\begin{align}
    \text{KL}(\pi^*(&\cdot|s) \rVert \pi_{\theta_{t+1}}(\cdot|s)) - \text{KL}(\pi^*(\cdot|s) \rVert \pi_{\theta_t}(\cdot|s)) \\
    & = \left\langle \log \frac{\pi_{\theta_{t+1}}(\cdot|s)}{\pi_{\theta_t}(\cdot|s)} - \bar{C_t}(s, \cdot) \circ A^{\pi_{\theta_t}}(s, \cdot), \pi_{\theta_{t}}(\cdot|s) - \pi^*(\cdot|s) \right\rangle \\
    &\qquad - \langle \bar{C_t}(s, \cdot) \circ A^{\pi_{\theta_t}}(s, \cdot), \pi^*(\cdot|s) - \pi_{\theta_{t}}(\cdot|s) \rangle - \text{KL}(\pi_{\theta_{t+1}}(\cdot|s) \rVert \pi_{\theta_t}(\cdot|s)) \nonumber \\
    &\qquad - \left\langle \log \frac{\pi_{\theta_{t+1}}(\cdot|s)}{\pi_{\theta_t}(\cdot|s)}, \pi_{\theta_{t}}(\cdot|s) - \pi_{\theta_{t+1}}(\cdot|s) \right\rangle \nonumber \\
    &\le \langle \log\pi_{\theta_{t+1}}(\cdot|s) - \log  \pi_{\theta_t}(\cdot|s) - \bar{C_t}(s, \cdot) \circ A^{\pi_{\theta_t}}(s, \cdot), \pi_{\theta_t}(\cdot|s) -  \pi^*(\cdot|s) \rangle \label{eq:OSD proof 1}\\
    &\qquad - \langle \bar{C_t}(s, \cdot) \circ A^{\pi_{\theta_t}}(s, \cdot), \pi^*(\cdot|s) - \pi_{\theta_t}(\cdot|s) \rangle - \frac{1}{2} \lVert\pi_{\theta_{t+1}}(\cdot|s) - \pi_{\theta_t}(\cdot|s)\rVert_1^2 \nonumber \\
    &\qquad - \langle \log\pi_{\theta_{t+1}}(\cdot|s) - \log  \pi_{\theta_t}(\cdot|s), \pi_{\theta_t}(\cdot|s) - \pi_{\theta_{t+1}}(\cdot|s) \rangle. \nonumber
\end{align}
Finally, by Proposition \ref{pp:PI}, we have $\log \pi_{t+1}(\cdot|s) = \log  \pi_{\theta_t}(\cdot|s) + \bar{C_t}(s, \cdot) \circ A^{\pi_{\theta_t}}(s, \cdot)$ and then apply this to the first term in (\ref{eq:OSD proof 1}). The proof is complete.
\end{proof}
\begin{lemma}[Performance Difference Using Advantage]
\label{lm:PDL}
Recall that $\mathcal{L}(\pi) = \mathbb{E}_{\nu^*}[V^{\pi}(s)]$. We have 
\begin{align}
    \mathcal{L}(\pi^*) - \mathcal{L}(\pi) = (1 - \gamma) ^ {-1} \cdot \mathbb{E}_{\nu^*}[\langle A^{\pi}(s, \cdot), \pi^*(\cdot|s) - \pi(\cdot|s)\rangle].
\end{align}
\end{lemma}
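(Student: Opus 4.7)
The plan is to invoke the classical performance-difference telescoping argument of Kakade and Langford. First I would fix a start state $s_0$ and expand $V^{\pi^*}(s_0) - V^{\pi}(s_0) = \mathbb{E}_{\pi^*}\bigl[\sum_{t=0}^{\infty}\gamma^t r(s_t,a_t)\mid s_0\bigr] - V^{\pi}(s_0)$ by adding and subtracting the telescoping series $\mathbb{E}_{\pi^*}[\sum_{t\ge 0}(\gamma^t V^{\pi}(s_t) - \gamma^{t+1} V^{\pi}(s_{t+1}))\mid s_0] = V^{\pi}(s_0)$. Each summand then collapses to the Bellman residual $r(s_t,a_t) + \gamma V^{\pi}(s_{t+1}) - V^{\pi}(s_t)$, whose conditional expectation over $s_{t+1}\sim \mathcal{P}(\cdot|s_t,a_t)$ equals $A^{\pi}(s_t,a_t)$. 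This yields the per-start-state identity $V^{\pi^*}(s_0) - V^{\pi}(s_0) = \mathbb{E}_{\pi^*}\bigl[\sum_{t=0}^{\infty}\gamma^t A^{\pi}(s_t,a_t)\mid s_0\bigr]$.

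Next I would convert the time-indexed infinite sum into an expectation over the discounted state visitation $d^{\pi^*}_{s_0}$ via the identity $\sum_t \gamma^t \mathbb{P}(s_t=s|s_0,\pi^*) = (1-\gamma)^{-1} d^{\pi^*}_{s_0}(s)$, giving $V^{\pi^*}(s_0) - V^{\pi}(s_0) = (1-\gamma)^{-1}\mathbb{E}_{s\sim d^{\pi^*}_{s_0}}[\langle A^{\pi}(s,\cdot),\pi^*(\cdot|s)\rangle]$. Averaging over $s_0 \sim \mu$ and using $\mathbb{E}_{s_0\sim \mu}[d^{\pi^*}_{s_0}] = \nu^*$ rewrites the right side with an expectation under $\nu^*$, producing the form of $\mathcal{L}(\pi^*) - \mathcal{L}(\pi)$ appearing on the left-hand side of the lemma. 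Finally, to symmetrize the inner product, I would invoke the identity $\langle A^{\pi}(s,\cdot), \pi(\cdot|s)\rangle = 0$, which follows immediately from $V^{\pi}(s) = \mathbb{E}_{a\sim \pi(\cdot|s)}[Q^{\pi}(s,a)]$ together with $A^{\pi} = Q^{\pi} - V^{\pi}$; this lets us freely replace $\pi^*(\cdot|s)$ by $\pi^*(\cdot|s) - \pi(\cdot|s)$ inside the bracket.

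I do not expect any serious obstacles: the telescoping step is purely algebraic, and the only technicality is justifying the interchange of infinite summation and expectation in the first display, which is guaranteed by $\gamma < 1$, the uniform bound $|r| \le R_{\max}$, and the resulting uniform bound $\lVert V^{\pi}\rVert_\infty \le R_{\max}/(1-\gamma)$, giving absolute summability of the per-step advantages. Combining the three steps yields the desired identity.
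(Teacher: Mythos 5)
Your route is genuinely different from the paper's. The paper does not re-derive the performance difference lemma: it imports the $Q$-function version, $\mathcal{L}(\pi^*)-\mathcal{L}(\pi)=(1-\gamma)^{-1}\,\mathbb{E}_{\nu^*}[\langle Q^{\pi}(s,\cdot),\pi^*(\cdot|s)-\pi(\cdot|s)\rangle]$, as a cited result (Lemma \ref{lm:PDQ}, i.e.\ Lemma 5.1 of \citealp{liu2019neural}) and then only proves the one-line reduction to the advantage version using $\mathbb{E}_{\nu^*}[\langle V^{\pi}(s),\pi^*(\cdot|s)-\pi(\cdot|s)\rangle]=0$. Your final symmetrization step is exactly that observation, so that part matches; the difference is that you attempt to prove the underlying performance-difference identity from scratch via the Kakade--Langford telescoping argument.

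That attempt has a genuine gap at the step where you identify the left-hand side with $\mathcal{L}(\pi^*)-\mathcal{L}(\pi)$. After averaging your per-start-state identity over $s_0\sim\mu$, what you obtain is
\begin{align}
\mathbb{E}_{s_0\sim\mu}\bigl[V^{\pi^*}(s_0)-V^{\pi}(s_0)\bigr]=(1-\gamma)^{-1}\,\mathbb{E}_{s\sim\nu^*}\bigl[\langle A^{\pi}(s,\cdot),\pi^*(\cdot|s)\rangle\bigr], \nonumber
\end{align}
since the identity $\mathbb{E}_{s_0\sim\mu}[d^{\pi^*}_{s_0}]=\nu^*$ requires the trajectories to start from $\mu$. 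But the paper defines $\mathcal{L}(\pi)=\mathbb{E}_{\nu^*}[V^{\pi}(s)]$, so the left-hand side of the lemma is the value gap averaged over $\nu^*$, not over $\mu$, and in general $\mathbb{E}_{\nu^*}[V^{\pi^*}-V^{\pi}]\neq\mathbb{E}_{\mu}[V^{\pi^*}-V^{\pi}]$. If you instead start the telescoping from $s_0\sim\nu^*$ to get the correct left-hand side, the visitation measure on the right becomes $d^{\pi^*}_{\nu^*}$ rather than $\nu^*$, and these differ under the discounted-visitation definition of $\nu^*$ used here. So your argument establishes the standard Kakade--Langford identity (value gap at the initial distribution $\mu$), which is not literally the statement of the lemma; closing that mismatch is precisely the content of the cited Lemma \ref{lm:PDQ} that the paper takes as a black box, and it cannot be obtained merely by relabeling the left-hand side.
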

Before proving Lemma \ref{lm:PDL}, we first state the following property.
\begin{lemma}[\citep{liu2019neural}, Lemma 5.1]
\label{lm:PDQ}
\begin{align}
    \mathcal{L}(\pi^*) - \mathcal{L}(\pi) = (1 - \gamma) ^ {-1} \cdot \mathbb{E}_{\nu^*}[\langle Q^{\pi}(s, \cdot), \pi^*(\cdot|s) - \pi(\cdot|s)\rangle].
\end{align}
\end{lemma}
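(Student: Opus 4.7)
The plan is to derive Lemma~\ref{lm:PDL} as a direct corollary of the already-stated Lemma~\ref{lm:PDQ}. The only gap between the two identities is the substitution of $A^{\pi}$ for $Q^{\pi}$ inside the inner product, and since $A^{\pi}(s,a) = Q^{\pi}(s,a) - V^{\pi}(s)$, I expect the extra $V^{\pi}(s)$ term to simply vanish once we exploit the fact that $\pi^{*}(\cdot|s)$ and $\pi(\cdot|s)$ are both probability distributions over $\mathcal{A}$.

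Concretely, I would first fix an arbitrary state $s$ and rewrite the action-wise inner product using the definition of the advantage function, splitting it into a $Q^{\pi}$ part and a $V^{\pi}(s)$ part. Since $V^{\pi}(s)$ does not depend on $a$, it can be pulled out of the inner product as a scalar, leaving $V^{\pi}(s)\cdot\langle \mathbf{1}, \pi^{*}(\cdot|s) - \pi(\cdot|s)\rangle$. Because both policies sum to one on $\mathcal{A}$, this term is identically zero. The display would look like
\begin{align}
    \langle A^{\pi}(s,\cdot), \pi^{*}(\cdot|s) - \pi(\cdot|s)\rangle
    &= \langle Q^{\pi}(s,\cdot), \pi^{*}(\cdot|s) - \pi(\cdot|s)\rangle - V^{\pi}(s)\cdot\bigl(\textstyle\sum_{a}\pi^{*}(a|s) - \sum_{a}\pi(a|s)\bigr) \nonumber \\
    &= \langle Q^{\pi}(s,\cdot), \pi^{*}(\cdot|s) - \pi(\cdot|s)\rangle.
\end{align}
Taking the expectation over $s \sim \nu^{*}$ on both sides, multiplying by the factor $(1-\gamma)^{-1}$, and invoking Lemma~\ref{lm:PDQ} on the right-hand side immediately yields the claimed identity.

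There is effectively no substantive obstacle here: the derivation is a one-line algebraic observation about state-only functions interacting with zero-sum action distributions, followed by an appeal to the pre-stated $Q^{\pi}$-based performance difference. The only point requiring any care is keeping the scope of the inner product clear, namely that the inner product is over the finite action set $\mathcal{A}$ at a fixed $s$, so that $V^{\pi}(s)$ is genuinely a constant with respect to the summation variable $a$ and can be factored out cleanly.
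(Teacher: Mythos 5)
You have proved the wrong statement. The lemma at issue is Lemma~\ref{lm:PDQ}, the $Q^{\pi}$-form of the performance difference identity, whereas your argument establishes Lemma~\ref{lm:PDL}, the $A^{\pi}$-form, by taking Lemma~\ref{lm:PDQ} as given. As a proof of the target it is therefore circular: your final step of ``invoking Lemma~\ref{lm:PDQ} on the right-hand side'' presupposes precisely the identity to be shown. (The cancellation you carry out --- pulling the state-only function $V^{\pi}(s)$ out of the inner product and using $\sum_{a}\pi^{*}(a|s)=\sum_{a}\pi(a|s)=1$ --- is correct, and in fact coincides essentially verbatim with the paper's proof of Lemma~\ref{lm:PDL}; but that is a different lemma from the one you were asked to prove.)

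What is missing is the entire substantive content of Lemma~\ref{lm:PDQ}. Note that the paper itself does not prove it: it imports the result from \citep{liu2019neural} (their Lemma~5.1), which is in turn a form of the Kakade--Langford performance difference lemma. A genuine proof cannot proceed by the zero-sum inner-product trick alone; it requires a trajectory-level telescoping argument. One expands, for each state $s$,
\begin{align}
V^{\pi^*}(s) - V^{\pi}(s) = \mathbb{E}_{\pi^*}\Big[\sum_{t=0}^{\infty} \gamma^{t} A^{\pi}(s_t, a_t) \,\Big|\, s_0 = s\Big],
\end{align}
obtained by telescoping the Bellman equation along trajectories generated by $\pi^*$, then uses the definition of the discounted visitation distribution $\nu^*$ to convert the sum over time into a $(1-\gamma)^{-1}$-weighted expectation over $\nu^*$, and finally rewrites $\langle A^{\pi}(s,\cdot), \pi^*(\cdot|s)\rangle = \langle Q^{\pi}(s,\cdot), \pi^*(\cdot|s) - \pi(\cdot|s)\rangle$ via $V^{\pi}(s) = \langle Q^{\pi}(s,\cdot), \pi(\cdot|s)\rangle$ (this last step is the same observation you made, applied in the opposite direction). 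There is also a bookkeeping point your note elides: in this paper $\mathcal{L}(\pi) = \mathbb{E}_{\nu^*}[V^{\pi}(s)]$ takes its outer expectation over $\nu^*$ itself rather than over the initial distribution $\mu$, so the telescoping must be reconciled with that choice of measure --- which is exactly the part handled in \citep{liu2019neural}. None of this machinery appears in your proposal, so the gap is not a technical slip but the whole argument.
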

\begin{proof}[Proof of Lemma \ref{lm:PDL}]
As the value function $V^{\pi}(\cdot)$ is state-dependent, we have
\begin{align}
    \mathbb{E}_{\nu^*}[\langle V^{\pi}(s), \pi^*(\cdot|s) - \pi(\cdot|s)\rangle] &= \mathbb{E}_{\nu^*}[V^{\pi}(s) \cdot \sum_{a \in \mathcal{A}} (\pi^*(a|s) - \pi(a|s))] \\
    &= \mathbb{E}_{\nu^*}\left[V^{\pi}(s) \cdot \left(\sum_{a \in \mathcal{A}} \pi^*(a|s) - \sum_{a \in \mathcal{A}}\pi(a|s)\right)\right] = 0.\label{eq:performance difference 1}
\end{align}
Therefore, by (\ref{eq:performance difference 1}) and Lemma \ref{lm:PDQ}, we have
\begin{align}
    \mathcal{L}(\pi^*) - \mathcal{L}(\pi)  &= (1 - \gamma) ^ {-1} \cdot\mathbb{E}_{\nu^*}[\langle Q^{\pi}(s, \cdot) - V^{\pi}(s), \pi^*(\cdot|s) - \pi(\cdot|s)\rangle] \\
    &= (1 - \gamma) ^ {-1} \cdot \mathbb{E}_{\nu^*}[\langle A^{\pi}(s, \cdot), \pi^*(\cdot|s) - \pi(\cdot|s)\rangle].
\end{align}
\end{proof}
\subsection{Proof of Theorem \ref{thm:main}}

By taking expectation of the KL difference in Lemma \ref{lm:OSD} over $\nu^*$, we obtain
\begin{align}
    &\mathbb{E}_{\nu^*}[\text{KL}(\pi^*(\cdot|s) || \pi_{\theta_{t+1}}(\cdot|s)) - \text{KL}(\pi^*(\cdot|s) || \pi_{\theta_t}(\cdot|s))] \\ 
    &\le \varepsilon_t + \varepsilon_{\text{err}}
    - \mathbb{E}_{\nu^*}[\langle \bar{C}_t(s,\cdot) \circ A^{\pi_{\theta_t}}(s, \cdot), \pi^*(\cdot | s) - \pi_{\theta_t}(\cdot|s)\rangle] - \frac{1}{2} \mathbb{E}_{\nu^*}[\lVert\pi_{\theta_{t+1}}(\cdot|s) - \pi_{\theta_{t}}(\cdot|s)\rVert_{1}^{2}] \nonumber \\
    &\quad -\mathbb{E}_{\nu^*}[\langle \tau_{t+1}^{-1} f_{\theta_{t+1}}(s, \cdot) - \tau_{t}^{-1} f_{\theta_{t}}(s, \cdot), \pi_{\theta_{t}}(\cdot|s) - \pi_{\theta_{t+1}}(\cdot|s)\rangle] \\
    &\le \varepsilon_t + \varepsilon_{\text{err}}
    - \mathbb{E}_{\nu^*}[\langle \bar{C}_t(s,\cdot) \circ A^{\pi_{\theta_t}}(s, \cdot), \pi^*(\cdot | s) - \pi_{\theta_t}(\cdot|s)\rangle] - \frac{1}{2} \mathbb{E}_{\nu^*}[\lVert\pi_{\theta_{t+1}}(\cdot|s) - \pi_{\theta_{t}}(\cdot|s)\rVert_{1}^{2}] \nonumber \\
    &\quad + \mathbb{E}_{\nu^*}[\lVert\tau_{t+1}^{-1} f_{\theta_{t+1}}(s, \cdot) - \tau_{t}^{-1} f_{\theta_{t}}(s, \cdot)\rVert_{\infty} \cdot \lVert\pi_{\theta_{t+1}}(\cdot|s) - \pi_{\theta_{t}}(\cdot|s)\rVert_{1}] \\
    &\le \varepsilon_t + \varepsilon_{\text{err}}
    - \mathbb{E}_{\nu^*}[\langle \bar{C}_t(s,\cdot) \circ A^{\pi_{\theta_t}}(s, \cdot), \pi^*(\cdot |s) - \pi_{\theta}(\cdot|s) \rangle] \nonumber \\
    &\quad + \frac{1}{2} \mathbb{E}_{\nu^*}[\lVert\tau_{t+1}^{-1} f_{\theta_{t+1}}(s, \cdot) - \tau_{t}^{-1} f_{\theta_{t}}(s, \cdot)\rVert_{\infty}^{2}],
\end{align}
where the first inequality follows from Lemma \ref{lm:OSD} and Lemma \ref{lm:ep}, the second inequality holds by the Hölder's inequality, and the last inequality holds by the fact that $2xy - x^2 \le y^2$ and merging the last two terms. Then, by Lemma \ref{lm:sed} and rearranging the terms, we obtain that 
\begin{align}
\label{proof:eq:each_t}
    \mathbb{E}_{\nu^*}&[\langle \bar{C}_t(s,\cdot) \circ A^{\pi_{\theta_t}}(s, \cdot), \pi^*(\cdot | s) - \pi_{\theta_t}(\cdot|s)\rangle] \nonumber \\
    &\le \mathbb{E}_{\nu^*}[\text{KL}(\pi^*(\cdot|s) \rVert \pi_{\theta_{t}}(\cdot|s)) - \text{KL}(\pi^*(\cdot|s) \rVert \pi_{\theta_{t+1}}(\cdot|s))] + \varepsilon_t + \varepsilon_{\text{err}} + \varepsilon_t' + U_{C}^2 M.
\end{align}
By the first condition of (\ref{suff:1}), we have $L_{C}  \mathbb{E}_{\nu^*}[\langle A^{\pi_{\theta_t}}(s, \cdot), \pi^*(\cdot|s) - \pi_{\theta_t}(\cdot|s)\rangle] \le \mathbb{E}_{\nu^*}[\langle \bar{C}_t(s,\cdot) \circ A^{\pi_{\theta_t}}(s, \cdot), \pi^*(\cdot | s) - \pi_{\theta_t}(\cdot|s)\rangle]$. 
By obtaining the performance difference via Lemma \ref{lm:PDL}, we have
\begin{align}
\label{app:proof:thm:main:eq_t}
    &(1 - \gamma) L_{C} (\mathcal{L}(\pi^*) - \mathcal{L}(\pi_{\theta_t}))\nonumber \\
    &\le \mathbb{E}_{\nu^*}[\text{KL}(\pi^*(\cdot|s) \rVert \pi_{\theta_{t}}(\cdot|s)) - \text{KL}(\pi^*(\cdot|s) \rVert \pi_{\theta_{t+1}}(\cdot|s))] + \varepsilon_t + \varepsilon_{\text{err}} + \varepsilon_t' + U_{C}^2 M.
\end{align}
Then, by taking the telescoping sum of (\ref{app:proof:thm:main:eq_t}) from $t = 0$ to $T-1$, we have
\begin{align}
    &(1 - \gamma) L_{C} \sum_{t=0}^{T-1} (\mathcal{L}(\pi^*) - \mathcal{L}(\pi_{\theta_t})) &\\
    &\le \mathbb{E}_{\nu^*}[\text{KL}(\pi^*(\cdot|s) \rVert \pi_{\theta_{0}}(\cdot|s))] - \mathbb{E}_{\nu^*}[\text{KL}(\pi^*(\cdot|s) \rVert \pi_{\theta_{T}}(\cdot|s))] + \sum_{t=0}^{T-1} (\varepsilon_t + \varepsilon_{\text{err}} + \varepsilon_t') + T U_{C}^2 M. &
\end{align}
By the facts that (i) $\mathbb{E}_{\nu^*}[\text{KL}(\pi^*(\cdot|s) \rVert \pi_{\theta_{0}}(\cdot|s))] \le \log |\mathcal{A}|$, (ii) KL divergence is nonnegative, (iii) $\sum_{t=0}^{T-1} (\mathcal{L}(\pi^*) - \mathcal{L}(\pi_{\theta_t})) \ge T \cdot \min_{0\le t \le T} \{\mathcal{L}(\pi^*) - \mathcal{L}(\pi_{\theta_t})\}$, we have
\begin{align}
\label{proof:thm:main:pre_result}
    \min_{0\le t \le T} \{\mathcal{L}(\pi^*) - \mathcal{L}(\pi_{\theta_t})\} \le \frac{\log |\mathcal{A}| + \sum_{t=0}^{T-1} (\varepsilon_t + \varepsilon_t') + T (\varepsilon_{\text{err}} + M U_{C}^2)}{T L_{C} (1 - \gamma)}.
\end{align}
Since we have $\varepsilon_{\text{err}} = 2 U_{C} \epsilon_{\text{err}} \psi^*$ and the condition of (\ref{suff:2}), we know that if we set $\epsilon_{\text{err}} = U_C(T)$ and $T$ to be sufficiently large, {$\epsilon_{\text{err}}$ shall be sufficiently small and hence satisfy the condition required by (\ref{lm4:eq}).} 
Thus, by plugging $\epsilon_{\text{err}} = U_C(T)$ into (\ref{proof:thm:main:pre_result}), we have $\varepsilon_{\text{err}} = 2 U_C(T)^2 \psi^*$ and $\varepsilon_t = \epsilon_{t+1}^{1/2} C_{\infty} \tau_{t+1}^{-1} \phi^* + \epsilon_t'^{1/2} U_{C} \left[\left[(2 / U_C(T)^2)(M + (A^{\pi_{\theta_t}}_{\max})^2 - \epsilon_t'/2)\right]\right]^{1/2} \psi^* = \epsilon_{t+1}^{1/2} C_{\infty} \tau_{t+1}^{-1} \phi^* + \epsilon_t'^{1/2} U_{C} Y^{1/2} \psi^*$, where $Y = 2M + 2(R_{\max} / (1 - \gamma))^2 - \epsilon_t' \le 2M + 2(R_{\max} / (1 - \gamma))^2$. Finally, we have
\begin{align}
    \min_{0\le t \le T} \{\mathcal{L}(\pi^*) - \mathcal{L}(\pi_{\theta_t})\} \le \frac{\log |\mathcal{A}| + \sum_{t=0}^{T-1} (\varepsilon_t + \varepsilon_t') + T U_{C}^2 (2 \psi^* + M)}{T L_{C} (1 - \gamma)}.\label{eq:rate proof 1}
\end{align}
By the condition (\ref{suff:2}), $U_C(T)^2$ can always cancel out $T$ in the numerator of (\ref{eq:rate proof 1}).
Moreover, in the denominator of (\ref{eq:rate proof 1}), $L_C(T) = \omega(T^{-1})$ is large enough to attain convergence, and we complete the proof.
\hfill \qedsymbol

\begin{remarkapp}
    {As mentioned in Remark \ref{remark:choice}, the choices of $\eta$ and $\{\tau_t\}$ would affect the convergence rate and need to be configured properly for Neural PPO-Clip with different classifiers. As will be shown in Appendix \ref{app:add:cor}, this fact can be further explained through the bounds $U_C(T)$ and $L_C(T)$ obtained in (\ref{eq:U_C, L_C for PPO-Clip}) and (\ref{eq:U_C, L_C for NeuralHPO-sub}).}
\end{remarkapp}

\section{Additional Corollaries and Proofs}
\label{app:add:cor}

\subsection{Proof of Corollary \ref{cor:PPO-Clip}}
\label{proof:Cor:PPO-Clip}
For ease of exposition, we restate the corollary as follows.
\begin{corollarystar}[Global Convergence of {Neural PPO-Clip} with Convergence Rate]
    Consider Neural PPO-Clip with the standard PPO-Clip classifier $\rho_{s, a}(\theta) - 1$ and the objective function $L^{(t)}(\theta)$ in each iteration $t$ as 
    \begin{align}
         \mathbb{E}_{\nu_t}[\langle \pi_{\theta_t}(\cdot|s), |A^{\pi_{\theta_t}}(s, \cdot)| \circ \ell (\sgn(A^{\pi_{\theta_t}}(s, \cdot)), \rho_{s, \cdot}(\theta) - 1, \epsilon) \rangle].
    \end{align}
    (i) If we specify the EMDA step size $\eta = T^{-\alpha}$ where $\alpha \in [1/2, 1)$ and the temperature parameter $\tau_t = T^{\alpha} / (Kt)$. Recall that $K$ is the maximum number of EMDA iterations. Let the neural networks' widths $m_f = \Omega(R_f^{10} \phi^{*8} K^8 C_{\infty}^8 T^{12} + R_f^{10} K^8 T^8 C_{\infty}^4 |\mathcal{A}|^4)$, $m_Q = \Omega(R_Q^{10} \psi^{*8} Y^4 T^8)$, and the SGD and TD updates $T_{\text{upd}} = \Omega(R_f^4 \phi^{*4} K^4 C_{\infty}^4 T^6 + R_Q^4 \psi^{*4} Y^2 T^4 + R_f^4 T^4 K^4 C_{\infty}^2 |\mathcal{A}|^2)$, we have
    \begin{align}
        \min_{0\le t \le T} \{\mathcal{L}(\pi^*) - \mathcal{L}(\pi_{\theta_t})\} \le \frac{\log |\mathcal{A}| + K^2 (2 \psi^* + M) + O(1)}{T^{\alpha} (1 - \gamma)},
    \end{align}
     {Hence, Neural PPO-Clip has $O(T^{-\alpha})$ convergence rate.}
    (ii) Furthermore, let the $\alpha = 1/2$, we obtain the fastest convergence rate, which is $O(1 / \sqrt{T})$.
\end{corollarystar}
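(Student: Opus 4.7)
}
The plan is to invoke Theorem \ref{thm:main} after (a) computing explicit $L_C(T)$ and $U_C(T)$ for the standard PPO-Clip classifier $\rho_{s,a}(\theta)-1$, (b) verifying the technical conditions required by the supporting lemmas under the specified $\eta$ and $\{\tau_t\}$, and (c) choosing the neural network widths and $T_{\text{upd}}$ so that the stochastic error terms in the bound are dominated. The core structural observation is that the EMDA coefficient $\bar{C}_t(s,a)$ for PPO-Clip takes the form $\bar{C}_t(s,a) = -\sum_{k=0}^{K-1} \eta\, g_{s,a}^{(k)}/A^{\pi_{\theta_t}}(s,a)$, and differentiating the hinge-reformulated loss (\ref{eq:hingeobject}) shows $g_{s,a}^{(k)} = -|A^{\pi_{\theta_t}}(s,a)|\,\mathrm{sgn}(A^{\pi_{\theta_t}}(s,a))\cdot \mathds{1}\{\text{unclipped}\}/\pi_{\theta_t}(a|s)$ (up to the state-weight $\nu_t(s)\pi_{\theta_t}(a|s)$ which cancels). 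Thus $|\bar{C}_t(s,a)|\cdot |A^{\pi_{\theta_t}}(s,a)| \le K\eta\cdot |A^{\pi_{\theta_t}}(s,a)| = KT^{-\alpha}\cdot |A^{\pi_{\theta_t}}(s,a)|$, and as long as at least one EMDA step is unclipped the lower bound is of the same order. This yields $U_C(T) = \Theta(T^{-\alpha})$ and $L_C(T) = \Theta(T^{-\alpha})$, so for $\alpha\in[1/2,1)$ conditions (\ref{suff:1}) and (\ref{suff:2}) hold.

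Next, I would verify that the prescribed temperature schedule $\tau_t = T^\alpha/(Kt)$ meets the technical constraint $\tau_{t+1}^2(U_C^2 + \tau_t^{-2}) \le 1$ used in the proof of Lemma \ref{lm:PI_error}: indeed $\tau_{t+1}^2\tau_t^{-2} = t^2/(t+1)^2 \le 1$ and $\tau_{t+1}^2 U_C^2 = T^{2\alpha}/(K(t+1))^2\cdot K^2 T^{-2\alpha} = 1/(t+1)^2 \le 1$, so the sum is bounded by $2$, which can be absorbed into the $\bar v_i$ constants without affecting asymptotics. Then, using Lemmas \ref{lm:PE_error} and \ref{lm:PI_error}, the errors $\epsilon_{t+1}$ and $\epsilon_t'$ are $O(R_f^2 T_{\text{upd}}^{-1/2} + R_f^{5/2}m_f^{-1/4} + R_f^3 m_f^{-1/2})$ and $O(R_Q^2 T_{\text{upd}}^{-1/2} + R_Q^{5/2}m_Q^{-1/4} + R_Q^3 m_Q^{-1/2})$, respectively. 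Substituting these into the definitions $\varepsilon_t = C_\infty \tau_{t+1}^{-1}\phi^*\epsilon_{t+1}^{1/2} + Y^{1/2}\psi^*\epsilon_t'^{1/2}$ and $\varepsilon_t' = |\mathcal{A}|C_\infty\tau_{t+1}^{-2}\epsilon_{t+1}$, the stated widths $m_f, m_Q$ and iteration budget $T_{\text{upd}}$ are calibrated precisely so that each summand $\varepsilon_t + \varepsilon_t' = O(T^{-1})$, hence $\sum_{t=0}^{T-1}(\varepsilon_t+\varepsilon_t') = O(1)$.

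Finally, I plug these into (\ref{thm:main:eq}): the numerator becomes $\log|\mathcal{A}| + O(1) + T\cdot K^2 T^{-2\alpha}(2\psi^*+M)$, and the denominator is $T\cdot \Theta(T^{-\alpha})(1-\gamma)$. Dividing yields
\begin{equation*}
    \min_{0\le t\le T}\{\mathcal{L}(\pi^*)-\mathcal{L}(\pi_{\theta_t})\} \le \frac{\log|\mathcal{A}| + K^2(2\psi^*+M) + O(1)}{T^\alpha(1-\gamma)},
\end{equation*}
which gives part (i). For part (ii), the function $\alpha \mapsto U_C(T)^2/L_C(T) = T^{-\alpha}$ is minimized over $[1/2,1)$ at $\alpha=1/2$, producing the $O(1/\sqrt{T})$ rate.

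The main obstacle I anticipate is establishing the lower bound $L_C(T) = \Omega(T^{-\alpha})$ rigorously. The upper bound $U_C(T) \le K\eta$ follows trivially by bounding the hinge gradient by $|A^{\pi_{\theta_t}}|$, but the lower bound requires arguing that the clipping indicator $\mathds{1}\{\text{unclipped}\}$ is active for a non-vanishing fraction of EMDA steps. This is where the interpretation developed in Section \ref{subsection: understanding} is essential: because a single EMDA step perturbs $\rho_{s,a}$ by at most $O(\eta) = O(T^{-\alpha})$ while the clipping width is the constant $\epsilon$, for any $T$ large enough no state-action pair saturates the clip during the $K$ EMDA steps, so every one of the $K$ gradient contributions is nonzero and $\bar{C}_t(s,a)|A^{\pi_{\theta_t}}(s,a)| = K\eta\cdot|A^{\pi_{\theta_t}}(s,a)|$ exactly. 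This observation, together with the precision bookkeeping of $\varepsilon_t,\varepsilon_t'$ under the prescribed widths, closes the argument.
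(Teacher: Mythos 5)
Your proposal follows essentially the same route as the paper's proof: compute the hinge-loss gradient for the ratio classifier, read off $U_C(T)=K\eta$ and $L_C(T)=\eta$ with $\eta=T^{-\alpha}$, verify the temperature condition $\tau_{t+1}^2(U_C^2+\tau_t^{-2})\le 1$, and calibrate $m_f$, $m_Q$, $T_{\text{upd}}$ via Lemmas \ref{lm:PE_error} and \ref{lm:PI_error} so that $\sum_{t}(\varepsilon_t+\varepsilon_t')=O(1)$ before invoking Theorem \ref{thm:main}. The only substantive deviation is your final paragraph: the claim that no clipping ever activates for large $T$ is both unnecessary and not quite justified (one EMDA step rescales $\tilde{\theta}_{s,a}$ by a factor $e^{\eta/\pi_{\theta_t}(a|s)}$, not $1+O(\eta)$, since the gradient of the ratio classifier carries a $1/\pi_{\theta_t}(a|s)$ factor, so the perturbation of $\rho_{s,a}$ is not uniformly $O(\eta)$); the paper instead relies only on the observation that the \emph{first} EMDA step starts from $\rho_{s,a}=1$ and is therefore never clipped, which already yields $L_C=\eta$ and suffices because $U_C/L_C=K$ is a constant that does not affect the rate.
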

\begin{proof}[Proof of Corollary \ref{cor:PPO-Clip}]
We find the lower and upper bounds $L_C(T), U_C(T)$ for PPO-Clip. We first consider the derivative $g_{s, a}$ of the objective with the true advantage function $A^{\pi_{\theta_t}}$. 
\begin{align}
    g_{s, a} = \left.\frac{\partial L(\theta)}{\partial \theta}\right|_{\theta = \tilde{\theta}_{s, a}} = -A^{\pi_{\theta_t}}(s, a) \cdot \mathds{1}\left\{\left(\frac{\tilde{\theta}_{s, a}}{\pi_{\theta_t}(a|s)} - 1\right) \cdot \sgn (A^{\pi_{\theta_t}}(s, a)) < \epsilon \right\}.
\end{align}

Then, we check the sufficient conditions (\ref{suff:1}) and (\ref{suff:2}). Recall that $K$ is the maximum number of EMDA iteration for each $t$. We sum up the gradients with $\eta$ and rearrange the terms into $\bar{C_t}(s, a)$. Then, we have the upper bound as
\begin{align}
    \bar{C_t}(s ,a) \cdot |A^{\pi_{\theta_t}}(s, a)| \le \left[\sum_{k=0}^{K^{(t)}-1} \eta\right] \cdot |A^{\pi_{\theta_t}}(s, a)| \le K \eta \cdot |A^{\pi_{\theta_t}}(s, a)|.
\end{align}
Regarding the lower bound, as we know that under PPO-Clip, the first step of EMDA shall always make an update, i.e., it will never be clipped, and hence we have
\begin{align}
    \eta \cdot |A^{\pi_{\theta_t}}(s, a)| \le \bar{C_t}(s ,a) \cdot |A^{\pi_{\theta_t}}(s, a)|.
\end{align}
Lastly, by setting $\eta = T^{-\alpha}$ and {selecting the temperature as $\tau_{t} = T^{\alpha} / (K t)$ to satisfy the condition $\tau_{t+1}^2 (U_{C}^2 + \tau_{t}^{-2}) \le 1$ that we use in (\ref{eq:PI_error 1})}, we obtain 
\begin{align}
    \omega(T^{-1}) = T^{-1/2} |A^{\pi_{\theta_t}}(s, a)| \le \bar{C_t}(s, a) \cdot |A^{\pi_{\theta_t}}(s, a)| \le K T^{-1/2} \cdot |A^{\pi_{\theta_t}}(s, a)| = O(T^{-1/2}).\label{eq:U_C, L_C for PPO-Clip}
\end{align}
We have checked the sufficient conditions of Theorem \ref{thm:main}. Thus, we obtain,
\begin{align}
    \min_{0\le t \le T} \{\mathcal{L}(\pi^*) - \mathcal{L}(\pi_{\theta_t})\} \le \frac{\log |\mathcal{A}| + \sum_{t=0}^{T-1} (\varepsilon_t + \varepsilon_t') + K^2 (2 \psi^* + M)}{T^{\alpha} (1 - \gamma)}.
\end{align}
Then, we show the minimum widths and the number of iterations of SGD and TD updates to attain convergence. We must force the summation of errors $\varepsilon_t, \varepsilon_t'$ to be $O(1)$. By Lemma \ref{lm:PE_error}, \ref{lm:PI_error}, where $\epsilon_{t+1} = O(R_f^2 T_{\text{upd}}^{-1/2} + R_f^{5/2} m_f^{-1/4} + R_f^3 m_f^{-1/2}), \epsilon_t' = O(R_Q^2 T_{\text{upd}}^{-1/2} + R_Q^{5/2} m_Q^{-1/4} + R_Q^3 m_Q^{-1/2})$, we have
\begin{align}
    C_{\infty} \tau_{t+1}^{-1} \phi^* \epsilon_{t+1}^{1/2} = &O(C_{\infty} Kt T^{-1/2} \phi^* \cdot (R_f^2 T_{\text{upd}}^{-1/2} + R_f^{5/2}m_f^{-1/4})^{1/2}), \\
    Y^{1/2} \psi^* \epsilon_t'^{1/2} = &O(Y^{1/2} \psi^* (R_Q^2 T_{\text{upd}}^{-1/2} + R_Q^{5/2} m_Q^{-1/4})^{1/2}) \\
    |\mathcal{A}|C_{\infty} \tau_{t+1}^2 \epsilon_{t+1} = &O(|\mathcal{A}|C_{\infty} K^2 t^2 T^{-1} (R_f^2 T_{\text{upd}}^{-1/2} + R_f^{5/2}m_f^{-1/4})),
\end{align}
when $m_f = \Omega(R_f^2)$ and $m_Q = \Omega(R_Q^2)$. Then, by taking $m_f = \Omega(R_f^{10} \phi^{*8} K^8 C_{\infty}^8 T^{12}), m_Q = \Omega(R_Q^{10} \psi^{*8} Y^4 T^8),$ and $T_{\text{upd}} = \Omega(R_f^4 \phi^{*4} K^4 C_{\infty}^4 T^6 + R_Q^4 \psi^{*4} Y^2 T^4)$, we have
\begin{equation}
\label{eq:cor1:vart}
\varepsilon_t = C_{\infty} \tau_{t+1}^{-1} \phi^* \epsilon_{t+1}^{1/2} + Y^{1/2} \psi^* \epsilon_t'^{1/2} = O(T^{-1}).
\end{equation}
Moreover, we further put $m_f = \Omega(R_f^{10} T^8 K^8 C_{\infty}^4 |\mathcal{A}|^4)$ and $T_{\text{upd}} = \Omega(R_f^4 T^4 K^4 C_{\infty}^2 |\mathcal{A}|^2)$, we have
\begin{equation}
\label{eq:cor1:vartprime}
    \varepsilon_t' = |\mathcal{A}|C_{\infty} \tau_{t+1}^2 \epsilon_{t+1} = O(T^{-1}).
\end{equation}
Last, we add up the lower bound of each term of $m_f, m_Q$, and $T_{\text{upd}}$, and then sum the errors in (\ref{eq:cor1:vart}) and (\ref{eq:cor1:vartprime}) for all $t$ from $0$ to $T-1$, we obtain
\begin{equation}
    \min_{0\le t \le T} \{\mathcal{L}(\pi^*) - \mathcal{L}(\pi_{\theta_t})\} \le \frac{\log |\mathcal{A}| + K^2 (2 \psi^* + M) + O(1)}{T^{\alpha} (1 - \gamma)},
\end{equation}
which completes the proof and obtains the $O(T^{-\alpha})$ convergence rate.

Furthermore, if we set $\alpha = 1/2$, $\eta$ will be $1 / \sqrt{T}$, and we plug into the result above, we have the $O(1 / \sqrt{T})$ convergence rate.
\end{proof}

\subsection{Convergence Rate of Neural PPO-Clip With an Alternative Classifier}

\begin{corollary}[Global Convergence of {Neural PPO-Clip} with subtraction classifier with Convergence Rate]
\label{cor:sub}
    Consider Neural PPO-Clip with the subtraction classifier $\pi_{\theta}(a|s) - \pi_{\theta_t}(a|s)$ (\textit{termed Neural PPO-Clip-sub}) and the objective function $L^{(t)}(\theta)$ in each iteration $t$ as 
    \begin{align}
         \mathbb{E}_{\sigma_t}[|A^{\pi_{\theta_t}}(s, a)| \cdot \ell (\sgn(A^{\pi_{\theta_t}}(s, a)), \pi_{\theta}(a|s) - \pi_{\theta_t}(a|s), \epsilon)].
    \end{align}
    We specify the EMDA step size $\eta = 1 / \sqrt{T}$ and the temperature parameter $\tau_t = \sqrt{T} / (Kt)$. Recall that $K$ is the maximum number of EMDA iterations. Let the neural networks' widths $m_f = \Omega(R_f^{10} \phi^{*8} K^8 C_{\infty}^8 T^{12} + R_f^{10} K^8 T^8 C_{\infty}^4 |\mathcal{A}|^4)$, $m_Q = \Omega(R_Q^{10} \psi^{*8} Y^4 T^8)$, and the SGD and TD updates $T_{\text{upd}} = \Omega(R_f^4 \phi^{*4} K^4 C_{\infty}^4 T^6 + R_Q^4 \psi^{*4} Y^2 T^4 + R_f^4 T^4 K^4 C_{\infty}^2 |\mathcal{A}|^2)$, we have
    \begin{align}
        \min_{0\le t \le T} \{\mathcal{L}(\pi^*) - \mathcal{L}(\pi_{\theta_t})\} \le \frac{\log |\mathcal{A}| + K^2 (2 \psi^* + M) + O(1)}{\sqrt{T} (1 - \gamma)},
    \end{align}
     Hence, we provide the $O(1 / \sqrt{T})$ convergence rate of Neural PPO-Clip-sub.
\end{corollary}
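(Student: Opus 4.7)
The plan is to follow the same template as the proof of Corollary \ref{cor:PPO-Clip} in Appendix \ref{proof:Cor:PPO-Clip}: identify tight functions $L_C(T)$ and $U_C(T)$ that sandwich $\bar{C}_t(s,a)\cdot|A^{\pi_{\theta_t}}(s,a)|$, verify the sufficient conditions \eqref{suff:1} and \eqref{suff:2} of Theorem \ref{thm:main}, then fix $\eta$, $\{\tau_t\}$, the neural widths $m_f,m_Q$, and the inner-loop count $T_{\text{upd}}$ so that the error terms $\varepsilon_t,\varepsilon_t'$ telescope to an $O(1)$ quantity. The only structural difference between PPO-Clip and PPO-Clip-sub lies in the derivative of the classifier: $\partial(\pi_\theta(a|s)-\pi_{\theta_t}(a|s))/\partial\tilde\theta_{s,a}=1$, replacing the $1/\pi_{\theta_t}(a|s)$ factor that appeared when differentiating $\rho_{s,a}(\theta)-1$. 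Consequently the EMDA inner gradient becomes
\begin{equation}
g_{s,a}^{(k)} \;=\; -A^{\pi_{\theta_t}}(s,a)\cdot\mathds{1}\!\left\{(\tilde\theta^{(k)}_{s,a}-\pi_{\theta_t}(a|s))\cdot\sgn(A^{\pi_{\theta_t}}(s,a))<\epsilon\right\}.
\end{equation}

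\textbf{Bounding $\bar{C}_t$ and verifying the sufficient conditions.} Summing the contributions of the inner iterations and dividing by $A^{\pi_{\theta_t}}(s,a)$ as prescribed by Algorithm \ref{algo:2} gives $\bar{C}_t(s,a)=\eta\sum_{k=0}^{K-1}\mathds{1}\{\cdot\}$. Since the EMDA subroutine is initialized at $\tilde\theta^{(0)}=\pi_{\theta_t}$, the clipping argument $(\tilde\theta^{(0)}_{s,a}-\pi_{\theta_t}(a|s))=0<\epsilon$ holds trivially, so the indicator equals one at $k=0$ and at most one at every later iteration. This yields
\begin{equation}
\eta\cdot|A^{\pi_{\theta_t}}(s,a)| \;\le\; \bar{C}_t(s,a)\cdot|A^{\pi_{\theta_t}}(s,a)| \;\le\; K\eta\cdot|A^{\pi_{\theta_t}}(s,a)|. \label{eq:U_C, L_C for NeuralHPO-sub}
\end{equation}
Choosing $\eta=T^{-1/2}$ therefore produces $L_C(T)=T^{-1/2}=\omega(T^{-1})$ and $U_C(T)=KT^{-1/2}=O(T^{-1/2})$, exactly matching the bounds obtained for standard PPO-Clip in \eqref{eq:U_C, L_C for PPO-Clip}. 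The temperature schedule $\tau_t=\sqrt{T}/(Kt)$ is then picked to enforce the auxiliary inequality $\tau_{t+1}^2(U_C^2+\tau_t^{-2})\le 1$ used in the derivation of (\ref{eq:PI_error 1}) inside the proof of Lemma \ref{lm:PI_error}.

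\textbf{Closing the bound and main obstacle.} With $L_C,U_C$ identical to those used for Corollary \ref{cor:PPO-Clip} and with the same $\tau_t$ schedule, the remainder of the proof is essentially a transcription: Lemmas \ref{lm:PE_error} and \ref{lm:PI_error} give $\epsilon_t',\epsilon_{t+1}=O(R^2 T_{\text{upd}}^{-1/2}+R^{5/2}m^{-1/4}+R^3m^{-1/2})$, and plugging the stated widths $m_f=\Omega(R_f^{10}\phi^{*8}K^8 C_\infty^8 T^{12}+R_f^{10}K^8T^8 C_\infty^4|\mathcal{A}|^4)$, $m_Q=\Omega(R_Q^{10}\psi^{*8}Y^4T^8)$, and $T_{\text{upd}}=\Omega(R_f^4\phi^{*4}K^4C_\infty^4T^6+R_Q^4\psi^{*4}Y^2T^4+R_f^4T^4K^4C_\infty^2|\mathcal{A}|^2)$ into $\varepsilon_t=C_\infty\tau_{t+1}^{-1}\phi^*\epsilon_{t+1}^{1/2}+Y^{1/2}\psi^*\epsilon_t'^{1/2}$ and $\varepsilon_t'=|\mathcal{A}|C_\infty\tau_{t+1}^{-2}\epsilon_{t+1}$ forces $\varepsilon_t,\varepsilon_t'=O(T^{-1})$, so $\sum_{t=0}^{T-1}(\varepsilon_t+\varepsilon_t')=O(1)$. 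Applying Theorem \ref{thm:main} with $TU_C^2=K^2$ and $TL_C(1-\gamma)=\sqrt{T}(1-\gamma)$ yields the advertised $O(1/\sqrt{T})$ min-iterate rate. The only delicate point is the justification of the lower bound $L_C=\eta$: one must argue that the subtraction classifier's clipping indicator is active at $k=0$ regardless of the sign of $A^{\pi_{\theta_t}}(s,a)$, which follows from the observation above; a secondary care point is checking that the $\sigma_t$-weighting in the new objective does not insert a $\pi_{\theta_t}(a|s)$ factor into the per-state EMDA gradient (it gets absorbed into the state-dependent step size exactly as it did in Corollary \ref{cor:PPO-Clip}), so that the bound \eqref{eq:U_C, L_C for NeuralHPO-sub} is not degraded by vanishing policy probabilities.
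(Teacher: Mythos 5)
Your proposal follows the paper's own proof of Corollary \ref{cor:sub} essentially verbatim: you derive the same EMDA gradient with the indicator $\mathds{1}\{(\tilde\theta_{s,a}-\pi_{\theta_t}(a|s))\cdot\sgn(A^{\pi_{\theta_t}}(s,a))<\epsilon\}$, obtain the same sandwich $\eta\,|A|\le \bar{C}_t\,|A|\le K\eta\,|A|$ (with the same justification that the first EMDA step is never clipped), verify conditions (\ref{suff:1})--(\ref{suff:2}) with $L_C=T^{-1/2}$ and $U_C=KT^{-1/2}$, and reuse the width and iteration choices from Corollary \ref{cor:PPO-Clip} to make $\sum_t(\varepsilon_t+\varepsilon_t')=O(1)$ before invoking Theorem \ref{thm:main}. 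The argument is correct and matches the paper's route.
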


\begin{proof}[Proof of Corollary \ref{cor:sub}]
    Similar to Corollary \ref{cor:PPO-Clip}, we derive the gradient of our objective with the true advantage function $A^{\pi_{\theta_t}}(s, a)$. Specifically, we have
    \begin{align}
        g_{s, a} = \left.\frac{\partial L(\theta)}{\partial \theta}\right|_{\theta = \tilde{\theta}_{s, a}} = -A^{\pi_{\theta_t}}(s, a) \cdot \mathds{1}\left\{\left(\tilde{\theta}_{s, a} - \pi_{\theta_t}(a|s)\right) \cdot \sgn (A^{\pi_{\theta_t}}(s, a)) < \epsilon \right\}.
    \end{align}
    Thus, similar to \ref{proof:Cor:PPO-Clip}, we have
    \begin{align}
        \eta \cdot |A^{\pi_{\theta_t}}(s, a)| \le C_t(s ,a) \cdot |A^{\pi_{\theta_t}}(s, a)| \le K \eta \cdot |A^{\pi_{\theta_t}}(s, a)|.
    \end{align}
    We also set $\eta = 1 / \sqrt{T}$ and {pick $\tau_{t} = \sqrt{T} / (K t)$ to satisfy the condition $\tau_{t+1}^2 (U_{C}^2 + \tau_{t}^{-2}) \le 1$ that we use in (\ref{eq:PI_error 1})}. Accordingly, we obtain 
    \begin{align}
        \omega(T^{-1}) = T^{-1/2} |A^{\pi_{\theta_t}}(s, a)| \le C_t(s, a) \cdot |A^{\pi_{\theta_t}}(s, a)| \le K T^{-1/2} \cdot |A^{\pi_{\theta_t}}(s, a)| = O(T^{-1/2}).\label{eq:U_C, L_C for NeuralHPO-sub}
    \end{align}
    We have checked the sufficient condition of Theorem \ref{thm:main}. Therefore, by plugging in $L_C(T)$ and $U_C(T)$, we obtain
    \begin{align}
        \min_{0\le t \le T} \{\mathcal{L}(\pi^*) - \mathcal{L}(\pi_{\theta_t})\} \le \frac{\log |\mathcal{A}| + \sum_{t=0}^{T-1} (\varepsilon_t + \varepsilon_t') + K^2 (2 \psi^* + M )}{\sqrt{T} (1 - \gamma)}.
    \end{align}
    Similar to the proof of Corollary \ref{proof:Cor:PPO-Clip}, we set the same minimum widths and number of iterations to attain convergence, which directly implies 
    \begin{equation}
        \min_{0\le t \le T} \{\mathcal{L}(\pi^*) - \mathcal{L}(\pi_{\theta_t})\} \le \frac{\log |\mathcal{A}| + K^2 (2 \psi^* + M) + O(1)}{\sqrt{T} (1 - \gamma)}.
    \end{equation}
    Then, we complete the proof and obtain the $O(1/\sqrt{T})$ convergence rate of PPO-Clip with a subtraction classifier.
\end{proof}

\section{Tabular PPO-Clip and Proof}
\label{app:mini-batch thm}

\subsection{Supporting Lemmas for the Proof of Theorem~\ref{thm:mini-batch}}
\label{Supporting Lemmas for mini-batch}

For completeness, we state the state-wise policy improvement Lemma in \citep{kakade2002} and provide the proof.
\begin{lemma}
\label{prop:second}
    Given policies $\pi_{1}$ and $\pi_{2}$, $V^{\pi_{1}}(s)\geq V^{\pi_{2}}(s)$ for all $s\in \cS$ if the following holds:
    \begin{equation}
        (\pi_{1}(a|s)-\pi_{2}(a|s))A^{\pi_{2}}(s,a)\geq0,\ \forall(s,a)\in\mathcal{S}\times\mathcal{A}.
        \label{eq:prop 2 condition}
    \end{equation}
\end{lemma}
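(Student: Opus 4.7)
\textbf{Proof plan for Lemma \ref{prop:second}.} The natural tool here is the Performance Difference Lemma (PDL) of Kakade and Langford, which is the identity
\begin{equation*}
V^{\pi_1}(s) - V^{\pi_2}(s) \;=\; \frac{1}{1-\gamma}\,\mathbb{E}_{s' \sim \nu^{\pi_1}_{s}}\!\Bigl[\sum_{a \in \mathcal{A}} \pi_1(a \mid s')\,A^{\pi_2}(s',a)\Bigr],
\end{equation*}
where $\nu^{\pi_1}_{s}$ is the discounted state visitation distribution induced by $\pi_1$ starting from the deterministic initial state $s$ (i.e.\ the specialization of $\nu_{\pi_1}$ to $\mu = \delta_s$). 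The plan is to apply this identity and then argue that the inner sum is non-negative at every state $s'$, which gives the desired state-wise bound.

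The first step is to turn the pointwise hypothesis (\ref{eq:prop 2 condition}) into a statement about $\pi_1$ alone. Fix any state $s' \in \mathcal{S}$ and sum the hypothesis over actions to obtain
\begin{equation*}
\sum_{a} \pi_1(a \mid s')\,A^{\pi_2}(s',a) \;\geq\; \sum_{a} \pi_2(a \mid s')\,A^{\pi_2}(s',a) \;=\; 0,
\end{equation*}
where the final equality uses the defining property $\mathbb{E}_{a \sim \pi_2(\cdot \mid s')}[A^{\pi_2}(s',a)] = 0$ of the advantage function. Hence the integrand appearing in the PDL is non-negative at every $s'$, and so (as the distribution $\nu^{\pi_1}_{s}$ is non-negative and $\gamma \in (0,1)$) the expectation is non-negative. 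This yields $V^{\pi_1}(s) \geq V^{\pi_2}(s)$ for the arbitrary $s$ fixed at the outset.

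The only real technical content is the PDL itself, which is a standard telescoping calculation: starting from $V^{\pi_1}(s) - V^{\pi_2}(s)$, one inserts and subtracts the expected Bellman backup of $V^{\pi_2}$ under $\pi_1$ and unrolls the trajectory under $\pi_1$; the cross terms collapse, leaving precisely the discounted sum of advantages of $\pi_2$ weighted by $\pi_1$'s visitation. This identity is essentially Lemma~\ref{lm:PDQ} / Lemma~\ref{lm:PDL} already invoked earlier in the paper (with $\mu = \delta_s$), so the proof of Lemma~\ref{prop:second} reduces to the two-line argument of summing (\ref{eq:prop 2 condition}) over $a$ and invoking the PDL. I do not expect any real obstacle; the one subtle point is making sure one uses the starting-state version of the PDL (so that the conclusion is state-wise rather than only in expectation over $\mu$).
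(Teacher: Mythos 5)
Your proposal is correct and follows essentially the same route as the paper: invoke the performance difference lemma with starting state $s$, sum the hypothesis (\ref{eq:prop 2 condition}) over actions to get $\sum_{a}\pi_1(a\mid s')A^{\pi_2}(s',a)\ge \sum_{a}\pi_2(a\mid s')A^{\pi_2}(s',a)=0$, and conclude that the PDL integrand is pointwise non-negative. The paper's proof is exactly this two-line argument, so there is nothing to add.
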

\begin{proof}[Proof of Lemma \ref{prop:second}]
    By the performance difference lemma \cite{kakade2002}, we have
    \begin{equation}
        V^{\pi_1}(s) - V^{\pi_2}(s) = \frac{1}{1 - \gamma} \sum_{s' \in \mathcal{S}} d^{\pi_1}_s(s') \sum_{a \in \mathcal{A}} \pi_1(a|s') A^{\pi_2}(s', a).
    \end{equation}
    Also, since we have $\sum_{a \in \mathcal{A}} \pi_2(a|s) A^{\pi_2}(s, a) = 0$ holds for any $s \in \mathcal{S}$, if $\sum_{a \in \mathcal{A}} (\pi_1(a|s) - \pi_2(a|s)) A^{\pi_2}(s, a) \ge 0$ holds for any $(s, a) \in \mathcal{S} \times \mathcal{A}$, then $\sum_{a \in \mathcal{A}} \pi_1(a|s) A^{\pi_2}(s, a) \ge 0$. Hence, we will obtain $V^{\pi_1}(s) \ge V^{\pi_2}(s)$ for all $s \in \mathcal{S}$.
\end{proof}

Notably, Lemma \ref{prop:second} offers a useful insight that policy improvement can be achieved by simply adjusting the action distribution based solely on the \textit{sign of the advantage} of the state-action pairs, regardless of their magnitude. We provide the proof in Appendix \ref{Supporting Lemmas for mini-batch}. Interestingly, one can draw an analogy between (\ref{eq:prop 2 condition}) in Lemma \ref{prop:second} and learning a linear binary classifier: (i) \textit{Features}: The state-action representation can be viewed as the feature vector of a training sample; (ii)
\textit{Labels}: The sign of $A^{\pi_2}(s,a)$ resembles a binary label; (iii) \textit{Classifiers}: $\pi_{1}(a|s)-\pi_{2}(a|s)$ serves as the 
prediction of a linear classifier.
{We provide the intuition behind using $\pi_{1}(a|s)-\pi_{2}(a|s)$ as a classifier. Let's fix $\pi_{2}$ and let $\pi_{1}$ be the improved policy. If the sign of $A^{\pi_2}(s,a) \ge 0$, which implies that the action $a$ has a positive effect on the total return, it is desired to slightly tune up the probability of acting in action $a$. Thus, the update $\pi_{1}$ must have a greater probability on action $a$ in order to obtain the sufficient condition of the state-wise policy improvement, i.e., $(\pi_{1}(a|s)-\pi_{2}(a|s))A^{\pi_{2}}(s,a) \ge 0$.}
{Next, we substantiate this insight and rethink PPO-Clip via hinge loss.}

As described in Section \ref{section:HPO}, one major component of the proof of Theorem \ref{thm:mini-batch} is the state-wise policy improvement property of PPO-Clip.
For ease of exposition, we introduce the following definition regarding the partial ordering over policies.
\begin{definition}[Partial ordering over policies]
    Let $\pi_{1}$ and $\pi_{2}$ be two policies. 
    Then, $\pi_{1}\geq\pi_{2}$, called \textit{$\pi_{1}$ improves upon $\pi_{2}$}, if and only if $V^{\pi_{1}}(s)\geq V^{\pi_{2}}(s),\ \forall s\in\mathcal{S}$. Moreover, we say $\pi_1>\pi_2$, called \textit{$\pi_{1}$ strictly improves upon $\pi_{2}$}, if and only if $\pi_{1}\geq\pi_{2}$ and there exists at least one state $s$ such that $V^{\pi_{1}}(s)> V^{\pi_{2}}(s)$.
\end{definition}

\begin{lemma}[Sufficient condition of state-wise policy improvement]
\label{prop:first}
    Given any two policies $\pi_{1}$ and $\pi_{2}$, we have ${\pi_{1}}\geq {\pi_{2}}$ if the following condition holds:
    \begin{equation}
        \sum_{a\in\mathcal{A}}\pi_{1}(a|s)A^{\pi_{2}}(s,a)\geq0,\ \forall s\in\mathcal{S}.
    \end{equation}
\end{lemma}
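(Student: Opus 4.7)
The plan is to prove Lemma~\ref{prop:first} as an essentially immediate corollary of the performance difference lemma of Kakade and Langford, together with the hypothesis of the statement. This is the same tool already used in the proof of Lemma~\ref{prop:second}, and here the hypothesis is actually stated in the exact form that appears on the right-hand side of the performance difference identity, so the argument is more direct than for Lemma~\ref{prop:second}: we no longer need the intermediate step of converting a pointwise (in $a$) hypothesis into a summed form.

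First I would fix an arbitrary $s \in \mathcal{S}$ and invoke the performance difference lemma to write
\begin{equation}
    V^{\pi_1}(s) - V^{\pi_2}(s) = \frac{1}{1-\gamma} \sum_{s' \in \mathcal{S}} d^{\pi_1}_{s}(s') \sum_{a \in \mathcal{A}} \pi_1(a|s') A^{\pi_2}(s', a),
\end{equation}
where $d^{\pi_1}_{s}$ denotes the discounted state visitation distribution under $\pi_1$ starting from $s$. By hypothesis, for every $s' \in \mathcal{S}$ the inner sum $\sum_{a \in \mathcal{A}} \pi_1(a|s') A^{\pi_2}(s', a)$ is nonnegative. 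Since $d^{\pi_1}_{s}(s') \geq 0$ and $(1-\gamma)^{-1} > 0$, the entire right-hand side is nonnegative, which yields $V^{\pi_1}(s) \geq V^{\pi_2}(s)$. Since $s$ was arbitrary, we conclude $\pi_1 \geq \pi_2$ in the partial ordering sense just defined.

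There is essentially no technical obstacle: the only thing to keep in mind is that the hypothesis $\sum_{a} \pi_1(a|s) A^{\pi_2}(s,a) \geq 0$ is strictly weaker than the pointwise condition $(\pi_1(a|s)-\pi_2(a|s))A^{\pi_2}(s,a) \geq 0$ used in Lemma~\ref{prop:second}, because one has $\sum_{a} \pi_2(a|s) A^{\pi_2}(s,a) = 0$ so the pointwise condition always implies the summed condition. Hence Lemma~\ref{prop:first} is in fact a mild strengthening of Lemma~\ref{prop:second}, and its proof follows the same route but skips the reduction step. If one wanted to be fully self-contained and not appeal to the performance difference lemma as a black box, the alternative would be to iterate the Bellman-style identity $V^{\pi_1}(s) - V^{\pi_2}(s) = \sum_{a} \pi_1(a|s) A^{\pi_2}(s,a) + \gamma \sum_{a,s'} \pi_1(a|s) P(s'|s,a)(V^{\pi_1}(s') - V^{\pi_2}(s'))$ and unroll to obtain the same expression; under the hypothesis the first summand is $\geq 0$ for every $s$, and a standard contraction/unrolling argument propagates nonnegativity to $V^{\pi_1}(s) - V^{\pi_2}(s)$.
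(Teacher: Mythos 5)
Your proof is correct and follows exactly the paper's route: the paper proves Lemma~\ref{prop:first} by appealing to the same performance difference identity used in the proof of Lemma~\ref{prop:second}, observing that the hypothesis makes each inner sum nonnegative and the visitation weights are nonnegative. Your additional remark that the summed hypothesis is weaker than the pointwise one of Lemma~\ref{prop:second} is also consistent with how the paper organizes these two lemmas.
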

\begin{proof}[Proof of Lemma \ref{prop:first}]
This is the same result of the proof of Lemma \ref{prop:second}.
\end{proof}
Next, we present two critical properties that hold under PPO-Clip for every sample path.
\begin{lemma}[Strict improvement and strict positivity of policy under PPO-Clip with direct tabular parameterization]
\label{lemma:policy improvement mini-batch}
In any iteration $t$, suppose $\pi^{(t)}$ is strictly positive in all state-action pairs, i.e., $\pi^{(t)}(a\rvert s)>0$, for all $(s,a)$. 
Under PPO-Clip in Algorithm \ref{algo:HPO-AM}, $\pi^{(t+1)}$ satisfies that (i) $\pi^{(t+1)}>\pi^{(t)}$ and (ii) $\pi^{(t+1)}(a\rvert s)>0$, for all $(s,a)$.
\end{lemma}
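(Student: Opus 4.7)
The plan is to establish (ii) by a straightforward induction on the EMDA inner steps and (i) by writing down the closed form of the EMDA iterate at each batch state in terms of the advantage sign and then invoking the state-wise improvement principle of Lemma \ref{prop:first}. For (ii), I would induct on the EMDA inner iteration $k\in\{0,\dots,K-1\}$. The base case $\tilde\theta^{(0)}=\pi^{(t)}$ is strictly positive by hypothesis; in the inductive step, the update $\tilde\theta^{(k+1)}_{s,\cdot}=(w_s\circ\tilde\theta^{(k)}_{s,\cdot})/\langle w_s,\tilde\theta^{(k)}_{s,\cdot}\rangle$ multiplies coordinate-wise by the strictly positive vector $w_s=(e^{-\eta g_{s,a}^{(k)}})_{a\in\mathcal{A}}$ and then renormalizes on the simplex, so strict positivity is preserved at every $(s,a)$ and every $k$. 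States $s\notin\mathcal{D}^{(t)}$ are not touched by the EMDA inner loop and inherit strict positivity directly from $\pi^{(t)}$.

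For (i), Assumption \ref{assumption:distinct states} ensures that each $s\in\mathcal{D}^{(t)}$ appears with a unique sampled action $a^\star(s)$, so differentiating the per-iteration PPO-Clip loss $\hat{L}^{(t)}(\theta)$ derived from (\ref{eq:hingeobject}) gives $g_{s,a}^{(k)}=0$ for $a\neq a^\star(s)$ while at the active coordinate
\[
g_{s,a^\star(s)}^{(k)} = -\frac{A^{(t)}(s,a^\star(s))}{|\mathcal{D}^{(t)}|\,\pi^{(t)}(a^\star(s)|s)}\cdot\mathds{1}\{\text{unclipped at step }k\}.
\]
Consequently $w_{s,a^\star(s)}^{(k)}=e^{-\eta g_{s,a^\star(s)}^{(k)}}$ is $\ge 1$ when $A^{(t)}(s,a^\star(s))>0$ and $\le 1$ when $A^{(t)}(s,a^\star(s))<0$, while $w_{s,a}^{(k)}=1$ for all other $a$. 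Collapsing the $K$ inner iterations and writing $W_s:=\prod_{k=0}^{K-1}w_{s,a^\star(s)}^{(k)}$ and $Z_s:=W_s\pi^{(t)}(a^\star(s)|s)+1-\pi^{(t)}(a^\star(s)|s)$, the closed form is $\pi^{(t+1)}(a^\star(s)|s)=W_s\pi^{(t)}(a^\star(s)|s)/Z_s$ and $\pi^{(t+1)}(a|s)=\pi^{(t)}(a|s)/Z_s$ for $a\neq a^\star(s)$.

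Using $\sum_a\pi^{(t)}(a|s)A^{(t)}(s,a)=0$ to eliminate the contribution of the non-sampled actions, a short algebraic manipulation then yields
\[
\sum_a \pi^{(t+1)}(a|s)\,A^{(t)}(s,a) = \frac{(W_s-1)\,\pi^{(t)}(a^\star(s)|s)\,A^{(t)}(s,a^\star(s))}{Z_s} \;\ge\; 0,
\]
since $W_s-1$ has the same sign as $A^{(t)}(s,a^\star(s))$ by the previous paragraph. For $s\notin\mathcal{D}^{(t)}$ the sum trivially vanishes, so Lemma \ref{prop:first} delivers $\pi^{(t+1)}\ge\pi^{(t)}$. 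For the strict part, at $k=0$ we have $\rho_{s,a}(\tilde\theta^{(0)})=1$ and hence $\epsilon-\sgn(A^{(t)})(\rho-1)=\epsilon>0$, so the clipping indicator is active at the very first inner step; any batch state with $A^{(t)}(s,a^\star(s))\neq 0$ therefore satisfies $W_s\neq 1$, making the displayed quantity strictly positive at that state and yielding $V^{\pi^{(t+1)}}(s)>V^{\pi^{(t)}}(s)$ for at least one $s$ via the performance-difference identity.

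The main obstacle I anticipate lies in the state-wise improvement step: under Assumption \ref{assumption:distinct states} EMDA only ``sees'' one sampled action per state, while the remaining $|\mathcal{A}|-1$ coordinates are redistributed purely by simplex normalization, so without any information about $A^{(t)}(s,a)$ for $a\neq a^\star(s)$ the action-wise criterion of Lemma \ref{prop:second} fails in general. The key reconciliation combines the identity $\sum_a\pi^{(t)}(a|s)A^{(t)}(s,a)=0$ with the sign coupling between $W_s-1$ and $A^{(t)}(s,a^\star(s))$ provided by the exponentiated-gradient structure; this cancels the unknown advantages and collapses the weighted sum to a single correctly-signed term that fits the hypothesis of Lemma \ref{prop:first}. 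This is precisely the technical payoff of the authors' stated preference (in Section 4.1) for EMDA over, for example, a projected subgradient scheme.
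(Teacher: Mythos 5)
Your proof is correct and follows essentially the same route as the paper's: the closed-form multiplicative EMDA update, the sign coupling between the accumulated exponential factor and $\sgn(A^{(t)}(s,a))$, the baseline identity $\sum_a \pi^{(t)}(a|s)A^{(t)}(s,a)=0$, and Lemma \ref{prop:first} for state-wise improvement, with strictness coming from the fact that the first EMDA step is never clipped. The only cosmetic difference is that you invoke Assumption \ref{assumption:distinct states} to reduce to a single active coordinate $a^\star(s)$ per batch state and write the explicit $W_s/Z_s$ form, whereas the paper's inequality (\ref{eq:sufficient condition mini-batch}) handles all coordinates at once; both arguments are equivalent here.
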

\begin{proof}[Proof of Lemma \ref{lemma:policy improvement mini-batch}]
Consider the $t$-th iteration of PPO-Clip (cf. Algorithm \ref{algo:HPO-AM}) and the corresponding update from $\pi^{(t)}$ to $\pi^{(t+1)}$.
Regarding (ii), recall from Algorithm \ref{algo:EMD} that $K^{(t)}$ denotes the number of iterations undergone by the EMDA subroutine for the update from $\pi^{(t)}$ to $\pi^{(t+1)}$ and that $K^{(t)}$ is designed to be finite.
Therefore, it is easy to verify that $\pi^{(t+1)}(a\rvert s)>0$ for all $(s,a)$ by the exponentiated gradient update scheme of EMDA and the strict positivity of $\pi^{(t)}$.

Next, for ease of exposition, for each $k\in \{0,1,\cdots,K^{(t)}\}$ and for each state-action pair $(s,a)$, let $\widetilde{\theta}^{(k)}_{s,a}$ denote the policy parameter after $k$ EMDA iterations.
Regarding (i), recall that we define $g^{(k)}_{s,a}:=\frac{\partial \mathcal{L}(\theta)}{\partial \theta_{s,a}}\big\rvert_{\theta=\widetilde{\theta}^{(k)}_{s}}$ and $w_s^{(k)}:=(e^{-\eta g^{(k)}_{s,1}},\cdots,e^{-\eta g^{(k)}_{s,\lvert \mathcal{A}\rvert}})$.
Note that as the weights in the loss function only affects the effective step sizes of EMDA, we simply set the weights of PPO-Clip to be one, without loss of generality.
By EMDA in Algorithm \ref{algo:EMD}, for every $(s,a)\in \cD^{(t)}$, we have
\begin{equation}
\label{eq:pi_t+1 and pi_t}
    \pi^{(t+1)}(a\rvert s)=\frac{\prod_{k=0}^{K^{(t)}-1}\exp({-\eta}g_{s,a}^{(k)})}{{\prod_{k=0}^{K^{(t)}-1}}\langle w_s^{(k)},\widetilde{\theta}_s^{(k)}\rangle}\cdot\pi^{(t)}(a\rvert s).
\end{equation}
Note that $g^{(k)}_{s,a}$ can be written as
\begin{align}
\label{eq:g_sa^k}
    g^{(k)}_{s,a}=
    \begin{cases}
        -\frac{1}{\pi^{(t)}(a\rvert s)}\sgn({A}^{(t)}(s,a))&, \text{if }\big(\frac{\widetilde{\theta}_{s,a}^{(k)}}{\pi^{(t)}(a\rvert s)}-1\big)\sgn({A}^{(t)}(s,a))< \epsilon, (s,a)\in \cD^{(t)}\\ 
        0&, \text{otherwise }    
    \end{cases}
\end{align}
By (\ref{eq:pi_t+1 and pi_t})-(\ref{eq:g_sa^k}), it is easy to verify that for those $(s,a)\in \cD^{(t)}$ with positive advantage, we must have $\prod_{k=0}^{K^{(t)}-1}\exp({-\eta}g_{s,a}^{(k)})>1$.
Similarly, for those $(s,a)\in \cD^{(t)}$ with negative advantage, we have $\prod_{k=0}^{K^{(t)}-1}\exp({-\eta}g_{s,a}^{(k)})<1$.
Now we are ready to check the condition of strict policy improvement given by Lemma \ref{prop:first}: For each $s\in \cS$, we have
\begin{align}
\label{eq:sufficient condition mini-batch}
    \sum_{a\in \cA}\pi^{(t+1)}(a\rvert s)A^{(t)}(a\rvert s)=\frac{1}{\prod_{k=0}^{K^{(t)}-1}\langle w_s^{(k)},\widetilde{\theta}_s^{(k)}\rangle}\sum_{a\in \cA}\Big(\prod_{k=0}^{K^{(t)}-1}\exp({-\eta}g_{s,a}^{(k)})\Big)\pi^{(t)}(a\rvert s)A^{(t)}(a\rvert s)>0.
\end{align}
Hence, we conclude that the strict state-wise policy improvement property indeed holds, i.e., $\pi^{(t+1)}>\pi^{(t)}$.
\end{proof}

Note that Lemma \ref{lemma:policy improvement mini-batch} implies that the limits $V^{(\infty)}(s)$, $Q^{(\infty)}(s,a)$, $A^{(\infty)}(s,a)$ exist, for every sample path: By the strict policy improvement shown in Lemma \ref{lemma:policy improvement mini-batch}, we know that the sequence of state values is point-wise monotonically increasing, i.e., $V^{(t+1)}(s)\geq V^{(t)}(s),\ \forall s\in\mathcal{S}$. 
Moreover, by the bounded reward and the discounted setting, we have $-\frac{R_{\max}}{1-\gamma}\leq V^{(t)}(s)\leq \frac{R_{\max}}{1-\gamma}$. 
The above monotone increasing property and boundedness imply convergence, i.e., $V^{(t)}(s)\rightarrow V^{(\infty)}(s)$, for each sample path.
Similarly, we also know that $Q^{(t)}(s,a)\rightarrow Q^{(\infty)}(s,a)$, and thus $A^{(t)}(s,a)\rightarrow A^{(\infty)}(s,a)$.
As a result, we can define the three sets $I_{s}^{+}$, $I_{s}^{0}$ and $I_{s}^{-}$ as
\begin{align}
    I_{s}^{+} & :=\{a\in\mathcal{A}|A^{(\infty)}(s,a)>0\}, \\
    I_{s}^{0} & :=\{a\in\mathcal{A}|A^{(\infty)}(s,a)=0\}, \\
    I_{s}^{-} & :=\{a\in\mathcal{A}|A^{(\infty)}(s,a)<0\}.
\end{align}
Note that for each sample path, the sets $I_{s}^{+}$, $I_{s}^{0}$ and $I_{s}^{-}$ are well-defined, based on the limit $A^{(\infty)}(s,a)$.

\begin{lemma}
\label{lemma:pi convergence mini-batch}
Conditioned on the event that each state-action pair occurs infinitely often in $\{\cD^{(t)}\}$, if $I_s^{+}$ is not an empty set, then we have $\sum_{a\in I_{s}^{-}}\pi^{(t)}(a\rvert s)\rightarrow 0$, as $t\rightarrow \infty$. 
\end{lemma}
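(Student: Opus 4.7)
My plan is to argue by contradiction and track the evolution of the log-odds between a positive-advantage action and a negative-advantage action at state $s$. Suppose $\sum_{a \in I_s^-} \pi^{(t)}(a|s) \not\to 0$. Since $I_s^-$ is finite, there must exist some $a^- \in I_s^-$ and $\delta > 0$ with $\pi^{(t)}(a^-|s) \ge \delta$ holding infinitely often. I would fix such an $a^-$ together with any $a^+ \in I_s^+$ (which exists by hypothesis), and use convergence of $A^{(t)}$ and continuity of $\sgn$ on the nonzero reals to pick a threshold $T_0$ beyond which the estimated advantages $A^{(t)}(s, a^\pm)$ carry the correct signs.

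The next step is to show that $L^{(t)} := \log \pi^{(t)}(a^+|s) - \log \pi^{(t)}(a^-|s)$ grows without bound. If $s \notin \cD^{(t)}$, EMDA never touches state $s$ and so $L^{(t+1)} = L^{(t)}$. If $s \in \cD^{(t)}$, I would exploit the per-iteration exponentiated gradient update of EMDA in Algorithm \ref{algo:EMD}: because the state-wise normalization constant multiplies all actions at $s$ equally, it cancels in a log-ratio, and the change of $\log \tilde{\theta}^{(k)}_{s, a^+} - \log \tilde{\theta}^{(k)}_{s, a^-}$ over one inner iteration reduces to $-\eta g^{(k)}_{s, a^+} + \eta g^{(k)}_{s, a^-}$. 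Using the closed form (\ref{eq:g_sa^k}) together with the sign-matching for $t \ge T_0$, each of these two contributions is either $0$ (when the clipping indicator is off) or equal to $\eta/\pi^{(t)}(a^\pm|s) \ge \eta$; in particular at $k = 0$ neither term is clipped, since $\tilde{\theta}^{(0)} = \pi^{(t)}$ gives the clipping quantity $0 < \epsilon$. Hence the first inner step alone yields an increment of at least $\eta/\pi^{(t)}(a^+|s) + \eta/\pi^{(t)}(a^-|s) \ge 2\eta$, and summing the non-negative increments over all $K^{(t)}$ inner iterations gives $L^{(t+1)} \ge L^{(t)} + 2\eta$ whenever $t \ge T_0$ and $s \in \cD^{(t)}$.

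Finally, combining the above with Assumption \ref{assumption:infinite visit}, state $s$ appears in $\cD^{(t)}$ infinitely often with probability one, whence $L^{(t)} \to \infty$ a.s. Since $\pi^{(t)}(a^+|s) \le 1$ forces $\log \pi^{(t)}(a^+|s) \le 0$, this implies $\pi^{(t)}(a^-|s) \le e^{-L^{(t)}} \to 0$, contradicting the choice of $a^-$. Therefore $\pi^{(t)}(a|s) \to 0$ for every $a \in I_s^-$, and finiteness of $I_s^-$ yields the desired convergence of the sum.

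The main obstacle will be the EMDA-sweep analysis in the middle paragraph: one has to verify that the log-odds stays monotone across the entire inner sweep even though the clipping indicator may asymmetrically switch off either of the two gradient components at intermediate $k$. The key observation that makes this tractable is that the EMDA normalizer is state-dependent only and cancels in the log-odds between two actions at the same state, reducing the whole sweep to summing non-negative exponent contributions, of which at least the $k = 0$ contribution is strictly positive and of size $\ge 2\eta$ uniformly in $\pi^{(t)}$.
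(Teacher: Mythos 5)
Your strategy is essentially the paper's: track the evolution of the odds between a fixed $a^+\in I_s^+$ and an $a^-\in I_s^-$ (the paper works with the ratio $\pi^{(t)}(a^-|s)/\pi^{(t)}(a^+|s)$, which is just $e^{-L^{(t)}}$), use convergence of $A^{(t)}$ to fix a finite time after which the signs are correct, observe that the state-wise EMDA normalizer cancels so the log-odds is non-decreasing over each inner sweep, and invoke infinite visitation to accumulate infinitely many increments bounded below by a positive constant.

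There is one flawed step. You claim the first inner iteration yields an increment of at least $\eta/\pi^{(t)}(a^+|s)+\eta/\pi^{(t)}(a^-|s)\ge 2\eta$ whenever $s\in\cD^{(t)}$. This requires \emph{both} $(s,a^+)$ and $(s,a^-)$ to lie in $\cD^{(t)}$ simultaneously, which Assumption \ref{assumption:distinct states} explicitly rules out (each batch contains at most one action per state); moreover, $g^{(k)}_{s,a}$ vanishes unless the specific pair $(s,a)$ is in the batch, so ``$s$ in the batch'' does not guarantee either term is nonzero (the batch could contain $(s,a''')$ for a third action, in which case the log-odds is unchanged). The repair is exactly the paper's case split on \emph{which} of the two pairs is in $\cD^{(t)}$: the log-odds never decreases after $T_0$, and it increases by at least $\eta/\pi^{(t)}(a^+|s)\ge\eta$ on the infinitely many iterations with $(s,a^+)\in\cD^{(t)}$ (note the conditioning event gives infinite visitation of the \emph{pair} $(s,a^+)$, which is what you actually need, not just of the state $s$). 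With that correction your argument closes; the contradiction framing and the unused $\delta$ are harmless but superfluous, since the log-odds divergence directly gives $\pi^{(t)}(a^-|s)\le e^{-L^{(t)}}\to 0$ for every $a^-\in I_s^-$.
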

\begin{proof}[Proof of Lemma \ref{lemma:pi convergence mini-batch}]
We discuss each state separately as it is sufficient to show that for each state $s$, given some fixed $a'\in I_{s}^{+}$, for any $a''\in I_{s}^{-}$, we have $\frac{\pi^{(t)}(a''\rvert s)}{\pi^{(t)}(a'\rvert s)}\rightarrow 0$, as $t\rightarrow \infty$.
For ease of exposition, we reuse some of the notations from the proof of Lemma \ref{lemma:policy improvement mini-batch}.
Recall that we let $K^{(t)}$ denote the number of iterations undergone by the EMDA subroutine for the update from $\pi^{(t)}$ to $\pi^{(t+1)}$, and $K^{(t)}$ is designed to be finite.
For each $k\in \{0,1,\cdots,K^{(t)}\}$ and for each state-action pair $(s,a)$, let $\widetilde{\theta}^{(k)}_{s,a}$ denote the policy parameter after $k$ EMDA iterations.
Recall from Algorithm \ref{algo:EMD} that $g^{(k)}_{s,a}:=\frac{\partial \mathcal{L}(\theta)}{\partial \theta_{s,a}}\big\rvert_{\theta=\widetilde{\theta}^{(k)}_{s}}$ and $w_s^{(k)}:=(e^{-\eta g^{(k)}_{s,1}},\cdots,e^{-\eta g^{(k)}_{s,\lvert \mathcal{A}\rvert}})$.
Define $\Delta_*:=\min_{a\in I_{s}^{+}\cup I_{s}^{-}}\lvert A^{(\infty)}(s,a)\rvert>0$ (and here $\Delta_*$ is a random variable as $A^{(\infty)}(s,a)$ is defined with respect to each sample path).
By the definition of $I_s^{+}$, $I_s^{-}$ and $\Delta_*$, we know that for each sample path, there must exist finite $T_{+}$ and $T_{-}$ such that: (i) for every $a\in I_{s}^{+}$, $A^{(t)}(s,a)\geq \frac{\Delta_*}{2}$, for all $t>T_{+}$, and (ii) for every $a\in I_{s}^{-}$, $A^{(t)}(s,a)\leq -\frac{\Delta_*}{2}$, for all $t>T_{-}$.
Under Assumption \ref{assumption:distinct states}, at each iteration $t$ with $t>\max\{T_{+},T_{-}\}$, there are three possible cases regarding the state-action pairs $(s,a')$ and $(s,a'')$:

\vspace{-2mm}
\begin{itemize}
    \item \textbf{Case 1:} $(s,a')\in \cD^{(t)}$, $(s,a'')\notin \cD^{(t)}$\\
    By the EMDA subroutine and (\ref{eq:pi_t+1 and pi_t}), we have
    \begin{align}
    \label{eq:case 1}
        \frac{\pi^{(t+1)}(a''\rvert s)}{\pi^{(t+1)}(a'\rvert s)}=\frac{\pi^{(t)}(a''\rvert s)}{\pi^{(t)}(a'\rvert s)}\cdot \prod_{k=0}^{K^{(t)}-1}\exp({\eta}g_{s,a'}^{(k)})\leq \frac{\pi^{(t)}(a''\rvert s)}{\pi^{(t)}(a'\rvert s)}\cdot \underbrace{\exp(-\eta)}_{<1},
    \end{align}
    where the last inequality holds by (\ref{eq:g_sa^k}), $a'\in I_{s}^{+}$, and $\pi^{(t)}(a'\rvert s)\leq 1$.
    \item \textbf{Case 2:} $(s,a')\notin \cD^{(t)}$, $(s,a'')\in \cD^{(t)}$\\
    By the EMDA subroutine, we have $-g_{s,a''}^{(0)}<0$ and $-g_{s,a''}^{(k)}\leq 0$ for all $k\in \{1,\cdots, K^{(t)}\}$. Therefore, we have
    \begin{align}
    \label{eq:case 2}
        \frac{\pi^{(t+1)}(a''\rvert s)}{\pi^{(t+1)}(a'\rvert s)}<\frac{\pi^{(t)}(a''\rvert s)}{\pi^{(t)}(a'\rvert s)}.
    \end{align}    
    \item \textbf{Case 3:} $(s,a')\notin \cD^{(t)}$, $(s,a'')\notin \cD^{(t)}$\\
    Under EMDA, as neither $(s,a')$ nor $(s,a'')$ is in $\notin \cD^{(t)}$, the action probability ratio between these two actions remains unchanged (despite that the values of $\pi^{(t)}(a''\rvert s)$ and $\pi^{(t)}(a''\rvert s)$ can still change if there is an action $a'''$ such that $a'''\neq a'$, $a'''\neq a''$, and $(s,a''')\in \cD^{(t)}$), i.e.,
    \begin{align}
    \label{eq:case 3}
        \frac{\pi^{(t+1)}(a''\rvert s)}{\pi^{(t+1)}(a'\rvert s)}=\frac{\pi^{(t)}(a''\rvert s)}{\pi^{(t)}(a'\rvert s)}.
    \end{align}
\end{itemize}
Conditioned on the event that each state-action pair occurs infinitely often in $\{\cD^{(t)}\}$, we know Case 1 and (\ref{eq:case 3}) must occur infinitely often. 
By (\ref{eq:case 1})-(\ref{eq:case 3}),  we conclude that $\frac{\pi^{(t)}(a''\rvert s)}{\pi^{(t)}(a'\rvert s)}\rightarrow 0$, as $t\rightarrow \infty$, for every $a''\in I_{s}^{-}$.
\end{proof}

\begin{lemma}
\label{lemma:epsilon lower bound mini-batch}
Conditioned on the event that each state-action pair occurs infinitely often in $\{\cD^{(t)}\}$, if $I_s^{+}$ is not an empty set, then there exists some constant $c>0$ such that $\sum_{a\in I_{s}^{-}}\pi^{(t)}(a\rvert s)\geq c$, for infinitely many $t$.
\end{lemma}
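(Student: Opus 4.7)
The plan is to combine two ingredients: (i) an EMDA-induced uniform lower bound on $\pi^{(t)}(a^{+}|s)$ for any fixed $a^{+}\in I_s^{+}$, valid at every $t$ for which $(s,a^{+})\in\cD^{(t-1)}$; and (ii) the identity $\sum_{a}\pi^{(t)}(a|s)A^{(t)}(s,a)=V^{(t)}(s)-V^{(t)}(s)=0$, which ties $\sum_{a\in I_s^{-}}\pi^{(t)}(a|s)$ to $\sum_{a\in I_s^{+}}\pi^{(t)}(a|s)$. Put together, these give $\sum_{a\in I_s^{-}}\pi^{(t)}(a|s)\ge c>0$ for infinitely many $t$.

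For (i), since $I_s^{+}$ is nonempty, fix some $a^{+}\in I_s^{+}$ and choose $T^{*}$ large enough that $A^{(t)}(s,a^{+})\ge \Delta_{*}/2$ for all $t\ge T^{*}$. By Assumption \ref{assumption:infinite visit}, the event $(s,a^{+})\in\cD^{(t-1)}$ holds for infinitely many $t\ge T^{*}$. At every such $t$, Assumption \ref{assumption:distinct states} forces $(s,a^{+})$ to be the only sample at state $s$ in $\cD^{(t-1)}$, so across all EMDA iterations the only nonzero gradient coordinate at $s$ is $g^{(k)}_{s,a^{+}}$. The first iteration $k=0$ is never clipped (the clipping condition $0<\epsilon$ fails trivially), so writing $p_{+}:=\pi^{(t-1)}(a^{+}|s)$ one has $\widetilde{\theta}^{(1)}_{s,a^{+}}=\frac{e^{\eta/p_{+}}p_{+}}{e^{\eta/p_{+}}p_{+}+(1-p_{+})}$. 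A short calculus check gives $\min_{p\in(0,1]}p\,e^{\eta/p}=e\eta$ (attained at $p=\eta$ when $\eta\le 1$), whence $\widetilde{\theta}^{(1)}_{s,a^{+}}\ge c_{1}:=e\eta/(e\eta+1)>0$. Because at state $s$ only $a^{+}$ is ever updated, and the weight $w_{s,a^{+}}=e^{-\eta g^{(k)}_{s,a^{+}}}$ is always $\ge 1$ (equal to $1$ exactly when clipped), the sequence $\{\widetilde{\theta}^{(k)}_{s,a^{+}}\}_{k}$ is non-decreasing, so $\pi^{(t)}(a^{+}|s)\ge c_{1}$.

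For (ii), partition $\mathcal{A}=I_{s}^{+}\cup I_{s}^{0}\cup I_{s}^{-}$ and further enlarge $t$ so that $\sgn(A^{(t)}(s,a))=\sgn(A^{(\infty)}(s,a))$ for $a\in I_s^{+}\cup I_s^{-}$ and $|A^{(t)}(s,a)|\le \delta$ for $a\in I_s^{0}$, with $\delta$ chosen momentarily. Starting from $\sum_{a}\pi^{(t)}(a|s)A^{(t)}(s,a)=0$, bounding the positive part from below by $(\Delta_{*}/2)\pi^{(t)}(a^{+}|s)$, the $I_s^{0}$ part by $\delta$, and the negative part by $M_{A}\sum_{a\in I_s^{-}}\pi^{(t)}(a|s)$ with $M_{A}:=2R_{\max}/(1-\gamma)$, yields
\[(\Delta_{*}/2)\,\pi^{(t)}(a^{+}|s)\le M_{A}\sum_{a\in I_s^{-}}\pi^{(t)}(a|s)+\delta.\]
Taking $\delta:=\Delta_{*}c_{1}/4$ and using $\pi^{(t)}(a^{+}|s)\ge c_{1}$ from (i), this rearranges to $\sum_{a\in I_s^{-}}\pi^{(t)}(a|s)\ge \Delta_{*}c_{1}/(4M_{A})=:c>0$, valid for infinitely many $t$.

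The main obstacle, and the only genuinely non-obvious step, is the uniform-in-$p_{+}$ lower bound $c_{1}>0$ in (i). The key observation is that when $p_{+}$ is tiny, the exponentiated-gradient weight $e^{\eta/p_{+}}$ grows so fast that a single un-clipped EMDA step already drives the probability away from zero, captured precisely by the inequality $p\,e^{\eta/p}\ge e\eta$ on $(0,1]$. This is exactly what counterbalances the ratio contraction of Lemma \ref{lemma:pi convergence mini-batch} and delivers the contradiction needed to conclude $I_s^{+}=\emptyset$ in Theorem \ref{thm:mini-batch}.
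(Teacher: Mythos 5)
Your proposal is correct and follows essentially the same route as the paper's proof: the key step in both is the one-step EMDA bound $\pi^{(t)}(a^{+}|s)\,e^{\eta/\pi^{(t)}(a^{+}|s)}\ge e\eta$, yielding the uniform lower bound $e\eta/(e\eta+1)$ on $\pi^{(t)}(a^{+}|s)$ whenever $(s,a^{+})$ was just sampled, combined with the identity $\sum_{a}\pi^{(t)}(a|s)A^{(t)}(s,a)=0$ to transfer that mass to a lower bound on $\sum_{a\in I_s^{-}}\pi^{(t)}(a|s)$. The constants are chosen slightly differently ($\Delta_*/2$ and $\delta=\Delta_*c_1/4$ versus the paper's $3\Delta/4$ and $\chi\Delta/4$), but the argument is the same.
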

\begin{proof}[Proof of Lemma \ref{lemma:epsilon lower bound mini-batch}]
For each $(s,a)$, define $\sT_{s,a}:=\{t: (s,a)\in \cD^{(t)}\}$ to be the index set that collects the time indices at which $(s,a)$ is contained in the mini-batch.
Given that each state-action pair occurs infinitely often, we know $\sT_{s,a}$ is a (countably) infinite set.

For ease of exposition, define a positive constant $\chi$ as
\begin{equation}
    \chi:=\frac{e\cdot \eta}{e\cdot \eta+1}<1.
\end{equation}
Define $\Delta:=\min_{a\in I_{s}^{+}}A^{(\infty)}(s,a)>0$ (and here $\Delta$ is a random variable as $A^{(\infty)}(s,a)$ is defined with respect to each sample path).
By the definition of $I_s^{+}$ and $\Delta$, we know that there must exist a finite $T^{(+)}$ such that for every $a\in I_{s}^{+}$, $A^{(t)}(s,a)\geq \frac{3\Delta}{4}$, for all $t>T^{(+)}$.
Similarly, by the definition of $I_s^{0}$, there must exist a finite $T^{(0)}$ such that for every $a\in I_{s}^{0}$, $\lvert A^{(t)}(s,a)\rvert \leq \frac{\chi\Delta}{4}$, for all $t>T^{(0)}$.
We also define $T^*:=\max\{T^{(+)}, T^{(0)}\}$.

We reuse some of the notations from the proof of Lemma \ref{lemma:policy improvement mini-batch}.
Recall that we let $K^{(t)}$ denote the number of iterations undergone by the EMDA subroutine for the update from $\pi^{(t)}$ to $\pi^{(t+1)}$, and $K^{(t)}$ is a finite positive integer.
For ease of exposition, for each $k\in \{0,1,\cdots,K^{(t)}\}$ and for each state-action pair $(s,a)$, let $\widetilde{\theta}^{(k)}_{s,a}$ denote the policy parameter after $k$ EMDA iterations.
Recall that we define $g^{(k)}_{s,a}:=\frac{\partial \mathcal{L}(\theta)}{\partial \theta_{s,a}}\big\rvert_{\theta=\widetilde{\theta}^{(k)}_{s}}$ and $w_s^{(k)}:=(e^{-\eta g^{(k)}_{s,1}},\cdots,e^{-\eta g^{(k)}_{s,\lvert \mathcal{A}\rvert}})$.
If $I_{s}^{+}$ is not an empty set, then we can select an arbitrary action $a'\in I_{s}^{+}$.
For any $t$ with $t>T^{(+)}$ and $t\in \sT_{s,a'}$, by (\ref{eq:pi_t+1 and pi_t}) we have
\begin{align}
    \pi^{(t+1)}(a'\rvert s)& =\frac{\prod_{k=0}^{K^{(t)}-1}\exp({-\eta}g_{s,a'}^{(k)})}{{\prod_{k=0}^{K^{(t)}-1}}\langle w_s^{(k)},\widetilde{\theta}_s^{(k)}\rangle}\cdot\pi^{(t)}(a'\rvert s)\label{eq:pi_t+1 1}\\
    &\geq \frac{\pi^{(t)}(a'\rvert s) \exp(-\eta g_{s,a'}^{(0)})}{\pi^{(t)}(a'\rvert s) \exp(-\eta g_{s,a'}^{(0)})+1}\label{eq:pi_t+1 2}\\
    &\geq \frac{\pi^{(t)}(a'\rvert s) \exp(\eta/\pi^{(t)}(a'\rvert s))}{\pi^{(t)}(a'\rvert s) \exp(\eta/ \pi^{(t)}(a'\rvert s))+1}\label{eq:pi_t+1 3}\\
    &\geq \frac{e\cdot\eta}{e\cdot\eta+1}=\chi, \label{eq:pi_t+1 4}
\end{align}
where (\ref{eq:pi_t+1 2}) holds due to the fact that $\widetilde{\theta}_{s,a}^{(k)}$ is non-decreasing with $k$ under Assumption \ref{assumption:distinct states} and that $K^{(t)}\geq 1$, (\ref{eq:pi_t+1 3}) follows from (\ref{eq:g_sa^k}) and that $a'\in I_{s}^{+}$, and (\ref{eq:pi_t+1 4}) holds by that the function $q(z)=z\cdot \exp(\eta/z)$ has a unique minimizer at $z=\eta$ with minimum value $e\cdot\eta$.
For all $t$ that satisfies $(t-1)\in \sT_{s,a}$ and $t>T^*$, we have
\begin{align}
    \sum_{a\in I_{s}^{-}}\pi^{(t)}(a|s)&\geq\frac{\sum_{a\in I_{s}^{+}}\pi^{(t)}(a|s)A^{(t)}(s,a) + \sum_{a\in I_{s}^{0}}\pi^{(t)}(a|s)A^{(t)}(s,a)}{\max_{a\in I_{s}^{-}} \lvert A^{(t)}(s,a)\rvert}\label{eq:lower bound mini-batch case 1-2}\\
    &\geq\frac{\chi(3\Delta/4) - 1\cdot (\chi\Delta/4) }{\frac{2R_{\max}}{1-\gamma}}\label{eq:lower bound mini-batch case 1-3}\\
    &=\frac{\chi\Delta }{\frac{4R_{\max}}{1-\gamma}},\label{eq:lower bound mini-batch case 1-4}
\end{align}
where (\ref{eq:lower bound mini-batch case 1-2}) follows from that $\sum_{a\in \cA}\pi^{(t)}(a\rvert s)=0$ and $A^{(t)}(s,a)<0$ for all $a\in I_{s}^{-}$, and (\ref{eq:lower bound mini-batch case 1-3}) follows from the definition of $T^{(+)}, T^{(0)}$ as well as the boundedness of rewards.
Since $\sT_{s,a}$ is a countably infinite set, we know $ \sum_{a\in I_{s}^{-}}\pi^{(t)}(a|s)\geq \frac{\chi\Delta}{\frac{4R_{\max}}{1-\gamma}}$, for infinitely many $t$.

\end{proof}

\subsection{Proof of Theorem~\ref{thm:mini-batch}}
Now we are ready to show Theorem~\ref{thm:mini-batch}. 
For ease of exposition, we restate Theorem~\ref{thm:mini-batch} as follows.
\begin{theoremstar}[Global Convergence of PPO-Clip]
Under PPO-Clip, we have $V^{(t)}(s)\rightarrow V^{\pi^*}(s)\text{ as }t\rightarrow\infty,\ \forall s\in\mathcal{S}$, with probability one.
\end{theoremstar}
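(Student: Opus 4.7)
The plan is to combine the supporting lemmas already proved above in a contradiction argument, and then invoke the Bellman optimality equation. The high-level structure follows the outline sketched immediately after Theorem \ref{thm:mini-batch}: first establish that the sample-path limits $V^{(\infty)}$, $Q^{(\infty)}$, and $A^{(\infty)}$ exist; next show that $I_s^+=\emptyset$ for every state $s$ on a probability-one event; and finally translate this into optimality of the limiting value function.

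For existence of the limits I would invoke Lemma \ref{lemma:policy improvement mini-batch}, which (starting from the strictly positive uniform policy $\pi^{(0)}$) inductively ensures that every $\pi^{(t)}$ is strictly positive and that $V^{(t+1)}(s)\geq V^{(t)}(s)$ for all $s\in\cS$. Together with the universal bound $|V^{(t)}(s)|\leq R_{\max}/(1-\gamma)$ from the bounded-reward, discounted setting, monotone convergence yields $V^{(\infty)}(s):=\lim_{t\to\infty} V^{(t)}(s)$ along every sample path. Propagating this limit through the Bellman expectation equation then gives $Q^{(\infty)}(s,a)$ and hence $A^{(\infty)}(s,a)$, so that the index sets $I_s^+$, $I_s^0$, $I_s^-$ are well-defined sample-path-wise.

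Next, I would condition on the probability-one event guaranteed by Assumption \ref{assumption:infinite visit} that every $(s,a)$ appears infinitely often in $\{\cD^{(\tau)}\}$, and rule out $I_s^+\neq\emptyset$ by contradiction. If such an $s$ existed, Lemma \ref{lemma:pi convergence mini-batch} would give $\sum_{a\in I_s^-}\pi^{(t)}(a|s)\to 0$, while Lemma \ref{lemma:epsilon lower bound mini-batch} would give $\sum_{a\in I_s^-}\pi^{(t)}(a|s)\geq c>0$ for infinitely many $t$; these two conclusions are mutually incompatible. Hence $A^{(\infty)}(s,a)\leq 0$ for every $(s,a)\in\cS\times\cA$ almost surely. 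Combining $Q^{(\infty)}(s,a)\leq V^{(\infty)}(s)$ for all $a$ with the identity $V^{(\infty)}(s)=\sum_a \pi^{(\infty)}(a|s)Q^{(\infty)}(s,a)$ forces $V^{(\infty)}(s)=\max_a Q^{(\infty)}(s,a)$, so substituting $Q^{(\infty)}(s,a)=R(s,a)+\gamma\mathbb{E}_{s'\sim\mathcal{P}(\cdot|s,a)}[V^{(\infty)}(s')]$ shows that $V^{(\infty)}$ satisfies the Bellman optimality equation. By uniqueness of its fixed point, $V^{(\infty)}=V^{\pi^*}$, which is the desired conclusion.

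The hard part is already packaged inside Lemmas \ref{lemma:pi convergence mini-batch} and \ref{lemma:epsilon lower bound mini-batch}, which carefully track the exponentiated-gradient updates of EMDA across iterations in which $(s,a)$ does or does not appear in the mini-batch $\cD^{(t)}$ and where Assumption \ref{assumption:distinct states} is essential to prevent state-level interactions. Given those two lemmas, the remaining contradiction argument is clean, and the existence and optimality steps reduce to bookkeeping around monotone convergence and the Bellman equations.
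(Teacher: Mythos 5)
Your proposal matches the paper's proof essentially verbatim: the paper likewise derives existence of the sample-path limits from the monotone improvement in Lemma \ref{lemma:policy improvement mini-batch} plus boundedness, and then obtains the contradiction between Lemma \ref{lemma:pi convergence mini-batch} and Lemma \ref{lemma:epsilon lower bound mini-batch} on the probability-one event of Assumption \ref{assumption:infinite visit} to conclude $I_s^+=\emptyset$ for all $s$. Your final Bellman-optimality step is actually spelled out more explicitly than in the paper (which leaves it implicit); the only nit is that you invoke $\pi^{(\infty)}$, whose existence is not guaranteed, but the identity $V^{(\infty)}(s)=\max_a Q^{(\infty)}(s,a)$ follows anyway by taking limits of $\sum_a\pi^{(t)}(a|s)Q^{(t)}(s,a)\le\max_a Q^{(t)}(s,a)$ without assuming the policy converges.
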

\begin{proof}
We establish that $\pi^{(t)}$ converges to an optimal policy by showing that $I_{s}^{+}$ is an empty set for all $s$.
Under Assumption~\ref{assumption:infinite visit}, the analysis below is presumed to be conditioned on the event that each state-action pair occurs infinitely often in $\{\cD^{(t)}\}$.
The proof proceeds by contradiction as follows:
Suppose $I_{s}^{+}$ is non-empty.
From Lemma~\ref{lemma:pi convergence mini-batch}, we have that $\sum_{a\in I_{s}^{-}}\pi^{(t)}(a\rvert s)\rightarrow 0$, as $t\rightarrow \infty$. 
However, Lemma \ref{lemma:epsilon lower bound mini-batch} suggests that there exists some constant $c>0$ such that $\sum_{a\in I_{s}^{-}} \pi^{(t)}(a\rvert s)\geq c$ infinitely often.
This leads to a contraction, and thus completes the proof.  
\end{proof}

\section{{Global Convergence of Tabular PPO-Clip With Alternative Classifiers}}
\label{app:secondthm}
\begin{theorem}
\label{thm:second}
    Theorem \ref{thm:mini-batch} also holds under the following algorithms: 
    (i) {PPO-Clip} with the classifier $\log(\pi_{\theta}(a|s))-\log(\pi(a|s))$ (termed {PPO-Clip-log});
    (ii) {PPO-Clip} with the classifier $\sqrt{\rho_{s,a}(\theta)}-1$ (termed {PPO-Clip-root}).
\end{theorem}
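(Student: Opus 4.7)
The plan is to reuse the three-step skeleton of the proof of Theorem \ref{thm:mini-batch} --- strict state-wise policy improvement (Lemma \ref{lemma:policy improvement mini-batch}), the limit $\sum_{a\in I_s^-}\pi^{(t)}(a|s)\to 0$ whenever $I_s^+\neq\emptyset$ (Lemma \ref{lemma:pi convergence mini-batch}), and the infinitely-often lower bound on $\sum_{a\in I_s^-}\pi^{(t)}(a|s)$ (Lemma \ref{lemma:epsilon lower bound mini-batch}) --- essentially verbatim. A careful inspection of that proof reveals that the classifier enters only through two elementary properties of the EMDA gradient $g_{s,a}^{(k)}$: (i) for $(s,a)\in\cD^{(t)}$ with the clipping indicator active, $\sgn(g_{s,a}^{(k)})=-\sgn(A^{(t)}(s,a))$, and (ii) at the first EMDA step the indicator is always active and $|g_{s,a}^{(0)}|$ admits a positive lower bound of order $1/\pi^{(t)}(a|s)$. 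I would therefore verify (i) and (ii) for each alternative classifier and then import the contradiction argument unchanged.

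For PPO-Clip-log, the direct computation under $\pi(a|s)=\theta_{s,a}$ yields
\begin{equation*}
g_{s,a}^{(k)} = -\frac{\sgn(A^{(t)}(s,a))}{\widetilde{\theta}_{s,a}^{(k)}}\cdot\mathds{1}\bigl\{\sgn(A^{(t)}(s,a))\bigl(\log\widetilde{\theta}_{s,a}^{(k)}-\log\pi^{(t)}(a|s)\bigr)<\epsilon\bigr\}
\end{equation*}
for $(s,a)\in\cD^{(t)}$ and $0$ otherwise. Since $\widetilde{\theta}_{s,a}^{(0)}=\pi^{(t)}(a|s)$, the indicator is trivially active at $k=0$ (its argument equals $0<\epsilon$) and $|g_{s,a}^{(0)}|=1/\pi^{(t)}(a|s)$. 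These are precisely the two facts that the proof of Theorem \ref{thm:mini-batch} exploits, so the analogues of Lemmas \ref{lemma:policy improvement mini-batch}--\ref{lemma:epsilon lower bound mini-batch}, including the constant $\chi=e\eta/(e\eta+1)$, carry over without any change.

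For PPO-Clip-root, the analogous calculation gives
\begin{equation*}
g_{s,a}^{(k)} = -\frac{\sgn(A^{(t)}(s,a))}{2\sqrt{\widetilde{\theta}_{s,a}^{(k)}\,\pi^{(t)}(a|s)}}\cdot\mathds{1}\bigl\{\sgn(A^{(t)}(s,a))\bigl(\sqrt{\widetilde{\theta}_{s,a}^{(k)}/\pi^{(t)}(a|s)}-1\bigr)<\epsilon\bigr\},
\end{equation*}
so $|g_{s,a}^{(0)}|=1/(2\pi^{(t)}(a|s))$, still with the correct sign. The sign-based arguments in the proofs of Lemmas \ref{lemma:policy improvement mini-batch} and \ref{lemma:pi convergence mini-batch} are unaffected. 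The one step requiring a small adaptation is the scalar optimisation in the proof of Lemma \ref{lemma:epsilon lower bound mini-batch}: the function $q(z)=z\exp(\eta/z)$ whose minimum supplies $\chi$ is replaced by $\widetilde q(z)=z\exp(\eta/(2z))$, which has a unique minimiser on $(0,1]$ at $z=\eta/2$ with value $e\eta/2$. The resulting constant $\chi=(e\eta/2)/(e\eta/2+1)>0$ is still uniform in $t$, so the contradiction argument closes identically.

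The main obstacle I anticipate is essentially bookkeeping rather than conceptual: I must confirm that for every subsequent iterate $k\ge 1$ the gradient either retains the correct sign or vanishes, so that the termwise monotonicity used in Cases~1--3 of Lemma \ref{lemma:pi convergence mini-batch} is preserved. This is automatic from the explicit formulas above, because for both alternative classifiers the indicator can only transition from $1$ to $0$ as $\widetilde{\theta}_{s,a}^{(k)}$ drifts away from $\pi^{(t)}(a|s)$, and it never flips the sign of $g_{s,a}^{(k)}$.
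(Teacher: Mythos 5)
Your proposal is correct and follows essentially the same route as the paper's proof: both compute the explicit EMDA gradients for the two classifiers, observe that the sign structure (and the activeness of the clipping indicator at $k=0$) is all that the three supporting lemmas use, and adjust only the lower-bound constant for the root classifier via the minimization of $z\exp(\eta/(2z))$ at $z=\eta/2$, yielding the same $\chi=(e\eta/2)/(e\eta/2+1)$ the paper obtains. No gaps.
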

\begin{proof}[Proof of Theorem \ref{thm:second}]
We show that Theorem~\ref{thm:mini-batch} can be extended to these two alternative classifiers by following the proof procedure of Theorem~\ref{thm:mini-batch}. Specifically, we extend the supporting lemmas (cf. Lemma \ref{lemma:policy improvement mini-batch}, Lemma \ref{lemma:pi convergence mini-batch}, and Lemma \ref{lemma:epsilon lower bound mini-batch}) as follows:
\begin{itemize}
    \item To extend Lemma \ref{lemma:policy improvement mini-batch} to the alternative classifiers, we can reuse (\ref{eq:pi_t+1 and pi_t}) and rewrite (\ref{eq:g_sa^k HPO-AM-log}) for each classifier. That is, for PPO-Clip-log, we have
    \begin{align}
    \label{eq:g_sa^k HPO-AM-log}
    g^{(k)}_{s,a}=
    \begin{cases}
        -\frac{1}{\widetilde{\theta}_{s,a}^{(k)}}\sgn({A}^{(t)}(s,a))&, \text{if }\log\big(\frac{\widetilde{\theta}_{s,a}^{(k)}}{\pi^{(t)}(a\rvert s)}\big)\sgn({A}^{(t)}(s,a))< \epsilon, (s,a)\in \cD^{(t)}\\ 
        0&, \text{otherwise }    
    \end{cases}
    \end{align}
    On the other hand, for PPO-Clip-root, we have
    \begin{align}
    \label{eq:g_sa^k HPO-AM-root}
    g^{(k)}_{s,a}=
    \begin{cases}
        -\frac{1}{2\sqrt{\widetilde{\theta}_{s,a}^{(k)}\pi^{(t)}(a\rvert s)}}\sgn({A}^{(t)}(s,a))&, \text{if }\Big(\sqrt{\frac{\widetilde{\theta}_{s,a}^{(k)}}{\pi^{(t)}(a\rvert s)}}-1\Big)\sgn({A}^{(t)}(s,a))< \epsilon, (s,a)\in \cD^{(t)}\\ 
        0&, \text{otherwise }    
    \end{cases}
    \end{align}
    As the sign of $g_{s,a}^{(k)}$ depends only on the sign of the advantage, it is easy to verify that (\ref{eq:sufficient condition mini-batch}) still goes through and hence the sufficient condition of Lemma \ref{prop:first} is satisfied under these two alternative classifiers.
    Moreover, by using the same argument of EMDA as that in Lemma \ref{lemma:policy improvement mini-batch}, it is easy to verify that $\pi^{(t+1)}(a\rvert s)>0$ for all $(s,a)$.
    \vspace{-2pt}
    \item Regarding Lemma~\ref{lemma:pi convergence mini-batch}, we can extend this result again by considering the three cases as in Lemma~\ref{lemma:pi convergence mini-batch}. For Case 1, given the $g_{s,a}^{(k)}$ in (\ref{eq:g_sa^k HPO-AM-log}) and (\ref{eq:g_sa^k HPO-AM-root}), we have: For PPO-Clip-log,
    \begin{align}
    \label{eq:case 1 HPO-AM-log}
        \frac{\pi^{(t+1)}(a''\rvert s)}{\pi^{(t+1)}(a'\rvert s)}=\frac{\pi^{(t)}(a''\rvert s)}{\pi^{(t)}(a'\rvert s)}\cdot \prod_{k=0}^{K^{(t)}-1}\exp({\eta}g_{s,a'}^{(k)})\leq \frac{\pi^{(t)}(a''\rvert s)}{\pi^{(t)}(a'\rvert s)}\cdot \underbrace{\exp(-\eta)}_{<1}.
    \end{align}
    Similarly, for PPO-Clip-root, we have
    \begin{align}
    \label{eq:case 1 HPO-AM-root}
        \frac{\pi^{(t+1)}(a''\rvert s)}{\pi^{(t+1)}(a'\rvert s)}=\frac{\pi^{(t)}(a''\rvert s)}{\pi^{(t)}(a'\rvert s)}\cdot \prod_{k=0}^{K^{(t)}-1}\exp({\eta}g_{s,a'}^{(k)})\leq \frac{\pi^{(t)}(a''\rvert s)}{\pi^{(t)}(a'\rvert s)}\cdot \underbrace{\exp(-\frac{\eta}{2})}_{<1}.
    \end{align}
    Moreover, it is easy to verify that the arguments in Case 2 and Case 3 still hold under these two alternative classifiers. Hence, Lemma~\ref{lemma:pi convergence mini-batch} still holds.
    \item Regarding Lemma \ref{lemma:epsilon lower bound mini-batch}, we can reuse all the setup and slightly revise (\ref{eq:pi_t+1 1})-(\ref{eq:pi_t+1 4}) for the two alternative classifiers: For PPO-Clip-log, by (\ref{eq:g_sa^k HPO-AM-log}), we have
    \begin{align}
    \pi^{(t+1)}(a'\rvert s)& =\frac{\prod_{k=0}^{K^{(t)}-1}\exp({-\eta}g_{s,a'}^{(k)})}{{\prod_{k=0}^{K^{(t)}-1}}\langle w_s^{(k)},\widetilde{\theta}_s^{(k)}\rangle}\cdot\pi^{(t)}(a'\rvert s)\label{eq:pi_t+1 HPO-AM-log 1}\\
    &\geq \frac{\pi^{(t)}(a'\rvert s) \exp(-\eta g_{s,a'}^{(0)})}{\pi^{(t)}(a'\rvert s) \exp(-\eta g_{s,a'}^{(0)})+1}\label{eq:pi_t+1 HPO-AM-log 2}\\
    &\geq \frac{\pi^{(t)}(a'\rvert s) \exp(\eta/\pi^{(t)}(a'\rvert s))}{\pi^{(t)}(a'\rvert s) \exp(\eta/ \pi^{(t)}(a'\rvert s))+1}\label{eq:pi_t+1 HPO-AM-log 3}\\
    &\geq \frac{e\cdot\eta}{e\cdot\eta+1}. \label{eq:pi_t+1 HPO-AM-log 4}
    \end{align}
    Similarly, for PPO-Clip-root, by (\ref{eq:g_sa^k HPO-AM-root}), we have
    \begin{align}
    \pi^{(t+1)}(a'\rvert s)& =\frac{\prod_{k=0}^{K^{(t)}-1}\exp({-\eta}g_{s,a'}^{(k)})}{{\prod_{k=0}^{K^{(t)}-1}}\langle w_s^{(k)},\widetilde{\theta}_s^{(k)}\rangle}\cdot\pi^{(t)}(a'\rvert s)\label{eq:pi_t+1 HPO-AM-root 1}\\
    &\geq \frac{\pi^{(t)}(a'\rvert s) \exp(-\eta g_{s,a'}^{(0)})}{\pi^{(t)}(a'\rvert s) \exp(-\eta g_{s,a'}^{(0)})+1}\label{eq:pi_t+1 HPO-AM-root 2}\\
    &\geq \frac{\pi^{(t)}(a'\rvert s) \exp(\eta/2\pi^{(t)}(a'\rvert s))}{\pi^{(t)}(a'\rvert s) \exp(\eta/2 \pi^{(t)}(a'\rvert s))+1}\label{eq:pi_t+1 HPO-AM-root 3}\\
    &\geq \frac{e\cdot\frac{\eta}{2}}{e\cdot\frac{\eta}{2}+1}. \label{eq:pi_t+1 HPO-AM-root 4}
    \end{align}
    Accordingly, (\ref{eq:lower bound mini-batch case 1-2})-(\ref{eq:lower bound mini-batch case 1-4}) still go through and hence Lemma \ref{lemma:epsilon lower bound mini-batch} indeed holds under PPO-Clip-log and PPO-Clip-root.
\end{itemize}
In summary, since all the supporting lemmas hold for these alternative classifiers, we complete this part of the proof by obtaining a contradiction similar to that in Theorem \ref{thm:mini-batch}.
\end{proof}

\section{Experiments and Detailed Configuration}
\label{app:Experiments}

\subsection{Experimental Settings}
For our experiments, we implement Neural PPO-Clip with different classifiers on the open-source RL baseline3-zoo framework \citep{rl-zoo3}. Specifically, we consider four different classifiers as follows: (i) $\rho_{s,a}(\theta) - 1$ (the standard PPO-Clip classifier); (ii) $\pi_{\theta}(a|s) - \pi_{\theta_t}(a|s)$ (PPO-Clip-sub); (iii) $\sqrt{\rho_{s,a}(\theta)} - 1$ (PPO-Clip-root); (iv) $\log(\pi_{\theta}(a|s)) - \log(\pi_{\theta_t}(a|s))$ (PPO-Clip-log). We test these variants in the MinAtar environments \citep{young19minatar} such as Breakout and Space Invaders. On the other hand, we evaluate them in OpenAI Gym environments \citep{brockman2016openai}, which are LunarLander, Acrobot, and CartPole, as well. For the comparison with other benchmark methods, we consider A2C and Rainbow. The training curves are drawn by the averages over 5 random seeds. For the computing resources we use to run the experiment, we use (i) CPU: Intel(R) Xeon(R) CPU E5-2630 v4 @ 2.20GHz; (ii)
GPU: NVIDIA GeForce GTX 1080.

\subsection{Model Parameters}

The neural networks architecture of policy and value function in the experiments share two full-connected layers and connect to respective output layers. We provide the parameters of the algorithms for each environment in the following tables \ref{tab:params in Breakout}-\ref{tab:params in Acrobot}. 
Notice that lin\_5e-4 means that the learning rate decays linearly from $5 \times 10^{-4}$ to 0. Also, the \texttt{vf\_coef} is the weight of the value loss and \texttt{temperature\_lambda} is the pre-constant of the adaptive temperature parameter for energy-based neural networks. We also give the parameters searching range in table \ref{tab: params search}.

\begin{table}[!htbp]
    \centering
    \caption[Short Title]%
    {Parameters for MinAtar Breakout experiments.}
    \begin{tabular}{|l|c|c|c|c|c|}
    \hline
    Hyperparameters & PPO-Clip & PPO-Clip-sub & PPO-Clip-root & PPO-Clip-log & A2C \\
    \hline
    batch\_size & 256 & 256 & 256 & 256 & 80 \\
    \hline
    learning\_rate & lin\_1e-3 & lin\_1e-3 & lin\_1e-3 & lin\_1e-3 & 7e-4\\
    \hline
    vf\_coef & 0.00075 & 0.00075 & 0.00075 & 0.00075 & 0.25 \\
    \hline
    EMDA step size & 0.005 & 0.005 & 0.005 & 0.005 & -\\
    \hline
    EMDA iteration & 2 & 2 & 2 & 2 & - \\
    \hline
    clipping range & 0.3 & 0.3 & 0.3 & 0.3 & - \\
    \hline
    temperature\_lambda & 25 & 25 & 25 & 25 & -\\
    \hline
    \end{tabular}
    \label{tab:params in Breakout}
\end{table}
\begin{table}[!htbp]
    \centering
    \caption[Short Title]%
    {Parameters for MinAtar Space Invaders experiments.}
    \begin{tabular}{|l|c|c|c|c|c|}
    \hline
    Hyperparameters & PPO-Clip & PPO-Clip-sub & PPO-Clip-root & PPO-Clip-log & A2C \\
    \hline
    batch\_size & 256 & 256 & 256 & 256 & 80 \\
    \hline
    learning\_rate & lin\_1e-3 & lin\_1e-3 & lin\_1e-3 & lin\_1e-3 & 7e-4\\
    \hline
    vf\_coef & 0.00075 & 0.00075 & 0.00075 & 0.00075 & 0.25 \\
    \hline
    EMDA step size & 0.005 & 0.005 & 0.005 & 0.005 & -\\
    \hline
    EMDA iteration & 5 & 5 & 2 & 5 & - \\
    \hline
    clipping range & 0.5 & 0.5 & 0.5 & 0.5 & - \\
    \hline
    temperature\_lambda & 10 & 10 & 10 & 10 & -\\
    \hline
    \end{tabular}
    \label{tab:params in Space Invaders}
\end{table}

\begin{table}[!htbp]
    \centering
    \caption[Short Title]%
    {Parameters for OpenAI Gym LunarLander-v2 experiments.}
    \begin{tabular}{|l|c|c|c|c|c|}
    \hline
    Hyperparameters & PPO-Clip & PPO-Clip-sub & PPO-Clip-root & PPO-Clip-log & A2C \\
    \hline
    batch\_size & 64 & 8 & 64 & 64 & 40 \\
    \hline
    learning\_rate & lin\_5e-4 & lin\_5e-4 & lin\_5e-4 & lin\_5e-4 & lin\_8.3e-4\\
    \hline
    vf\_coef & 0.75 & 0.75 & 0.75 & 0.75 & 0.5 \\
    \hline
    EMDA step size & 0.01 & 0.002 & 0.01 & 0.01 & -\\
    \hline
    EMDA iteration & 5 & 5 & 5 & 5 & - \\
    \hline
    clipping range & 0.3 & 0.5 & 0.3 & 0.3 & - \\
    \hline
    temperature\_lambda & 10 & 10 & 10 & 10 & -\\
    \hline
    \end{tabular}
    \label{tab:params in LunarLander}
\end{table}
\begin{table}[!htbp]
    \centering
    \caption[Short Title]%
    {Parameters for OpenAI Gym Acrobot-v1 experiments.}
    \begin{tabular}{|l|c|c|c|c|c|}
    \hline
    Hyperparameters & PPO-Clip & PPO-Clip-sub & PPO-Clip-root & PPO-Clip-log & A2C \\
    \hline
    batch\_size & 64 & 64 & 64 & 64 & 40 \\
    \hline
    learning\_rate & lin\_7.5e-4 & lin\_7.5e-4 & lin\_7.5e-4 & lin\_7.5e-4 & lin\_8.3e-4\\
    \hline
    vf\_coef & 0.5 & 0.5 & 0.5 & 0.5 & 0.5 \\
    \hline
    EMDA step size & 0.01 & 0.01 & 0.01 & 0.01 & -\\
    \hline
    EMDA iteration & 5 & 5 & 5 & 5 & - \\
    \hline
    clipping range & 0.3 & 0.3 & 0.3 & 0.3 & - \\
    \hline
    temperature\_lambda & 10 & 10 & 10 & 10 & -\\
    \hline
    \end{tabular}
    \label{tab:params in Acrobot}
\end{table}
\begin{table}[!htbp]
    \centering
    \caption[Short Title]%
    {Parameters for OpenAI Gym CartPole-v1 experiments.}
    \begin{tabular}{|l|c|c|c|c|c|}
    \hline
    Hyperparameters & PPO-Clip & PPO-Clip-sub & PPO-Clip-root & PPO-Clip-log & A2C \\
    \hline
    batch\_size & 64 & 64 & 64 & 64 & 40 \\
    \hline
    learning\_rate & lin\_7.5e-4 & lin\_7.5e-4 & lin\_7.5e-4 & lin\_7.5e-4 & lin\_8.3e-4\\
    \hline
    vf\_coef & 0.5 & 0.5 & 0.5 & 0.5 & 0.5 \\
    \hline
    EMDA step size & 0.01 & 0.01 & 0.01 & 0.01 & -\\
    \hline
    EMDA iteration & 5 & 5 & 5 & 5 & - \\
    \hline
    clipping range & 0.3 & 0.3 & 0.3 & 0.3 & - \\
    \hline
    temperature\_lambda & 10 & 10 & 10 & 10 & -\\
    \hline
    \end{tabular}
    \label{tab:params in CartPole}
\end{table}
\begin{table}[!htbp]
    \centering
    \caption[Short Title]%
    {Parameters searching range for the experiments.}
    \begin{tabular}{|l|c|}
    \hline
    Hyperparameters & Searching Range \\
    \hline
    batch\_size & 64, 128, 256 \\
    \hline
    learning\_rate & lin\_1e-3, lin\_7.5e-4, lin\_5e-4, lin\_2.5e-4\\
    \hline
    vf\_coef & 0.00075, 0.0005, 0.3, 0.5, 0.75, 0.8 \\
    \hline
    EMDA step size & 0.001, 0.005, 0.075, 0.02, 0.05, 0.01, 0.1 \\
    \hline
    EMDA iteration & 2, 5, 10 \\
    \hline
    clipping range & 0.3, 0.5, 0.7 \\
    \hline
    temperature\_lambda & 0.1, 0.5, 1, 5, 10, 25, 40, 60, 75\\
    \hline
    \end{tabular}
    \label{tab: params search}
\end{table}

\subsection{Additional Experimental Validation}

\textbf{Ablation study of EMDA iterations.} As shown in Algorithm \ref{algo:2}, the number of EMDA iteration $K$ is one of the hyperparameters of the algorithm. We conduct ablation studies on it, specifically for $K = 2, 5, 10$. In the LunarLander environment, their scores are 247, 253, and 237, respectively. This shows empirically that the performance is not sensitive to $K$.

\noindent \textbf{Empirical comparison between SGD-based PPO and EMDA-based PPO.} We report the results under Breakout and 5 seeds. After 5M steps, the conventional PPO has a mean 21.48 with std. dev. 19.41. On the other hand, EMDA-based PPO has a mean 18.24 with std. dev. 3.97.
Also in LunarLander, we show that EMDA-based PPO achieves comparable or better performance than conventional PPO in these RL benchmark environments.

\noindent \textbf{Experiments of the generalized objective using different classifiers for SGD-based PPO.} Experiments of the generalized objective using different classifiers:} We conduct the experiments for the generalized objective under the conventional PPO-Clip approach. In Breakout with 5 seeds, the mean scores of the root-, log-, and sub-classifiers are 18.08, 12.20, and 17.09, respectively. Also, the standard deviations are 8.83, 0.99, and 7.42, respectively. Moreover, our experiment results show that other classifiers outperform the original objective in some environments, which implies that each of them has its own advantage.

\section{Supplementary Related Works}

\noindent \textbf{Global Convergence of Policy Gradient Methods.}  One related line of recent research is on the global convergence of policy gradient methods. 
\citep{agarwal2019theory,agarwal2020optimality} establishes global convergence results of various policy gradient approaches, including the vanilla policy gradient (with both tabular and softmax policy parametrizations) and the natural policy gradient method (with a softmax policy parametrization).
Concurrently, \citep{bhandari2019global} provides an alternative analysis of global optimality of the policy gradient method.
\citep{wang2019neural} provides the global optimality guarantees for both the vanilla policy gradient and natural policy gradient methods under the overparameterized two-layer neural parameterization.
\citep{mei2020global} establishes the convergence rates of both vanilla softmax policy gradient and the entropy-regularized policy gradient.
\citep{liu2020improved} further establishes the global convergence rates of various variance-reduced policy gradient methods.
Inspired by the analysis of \citep{agarwal2019theory}, we establish the global convergence of the proposed HPO-AM.

\noindent \textbf{Global Convergence of TRPO and PPO.} 
Regarding TRPO, \citep{shani2020adaptive} presents the global convergence rates of an adaptive TRPO, which is established by connecting TRPO and the mirror descent method.
\citep{liu2019neural} proves global convergence in expected total reward for a neural variant of PPO with adaptive Kullback-Leibler penalty (PPO-KL).
To the best of our knowledge, \citep{liu2019neural} appears to be the only global convergence result for PPO-KL.
By contrast, our focus is PPO-clip.
Given the salient algorithmic difference between PPO-KL and PPO-clip, there remains no proof of global convergence to an optimal policy for PPO with a clipped objective.
In this paper, we rigorously provide the first global convergence guarantee for a variant of PPO-clip.

\noindent \textbf{RL as Classification.} Regarding the general idea of casting RL as a classification problem, it has been investigated by the existing literature \citep{lagoudakis2003reinforcement,lazaric2010analysis,farahmand2014classification}, which view the one-step greedy update (e.g. in Q-learning) as a binary classification problem.
However, a major difference is the labeling: classification-based approximate policy iteration labels the action with the largest Q value as positive; Generalized PPO-Clip labels the actions with positive advantage as positive.
Despite the high-level resemblance, our paper is fundamentally different from the prior works \citep{lagoudakis2003reinforcement,lazaric2010analysis,farahmand2014classification} as our paper is meant to study the theoretical foundation of PPO-Clip, from the perspective of hinge loss.

\section{{Comparison of the Clipped Objective and the Generalized PPO-Clip Objective}}
\label{app:compare PPO-Clip and HPO}
Recall that the original objective of PPO-Clip is
\begin{equation}
    L^{\text{clip}}(\theta) =\mathbb{E}_{s\sim d_{\mu_{0}}^{\pi},a\sim\pi(\cdot|s)}\big[\min\{\rho_{s,a}(\theta)A^{\pi}(s,a), \text{clip}(\rho_{s,a}(\theta),1-\epsilon,1+\epsilon)A^{\pi}(s,a)\}\big],
\end{equation}
where $\rho_{s,a}(\theta)=\frac{\pi_{\theta}(a\rvert s)}{\pi(a\rvert s)}$.
In practice, $L^{\text{clip}}(\theta)$ is approximated by the sample average as
\begin{align}
    L^{\text{clip}}(\theta) &\approx \hat{L}^{\text{clip}}(\theta)=\frac{1}{\lvert \cD\rvert} \sum_{(s,a)\in \cD}\min\{\rho_{s,a}(\theta){A}^{\pi}(s,a), \text{clip}(\rho_{s,a}(\theta),1-\epsilon,1+\epsilon){A}^{\pi}(s,a)\}\\
    &=\frac{1}{\lvert \cD\rvert} \sum_{(s,a)\in \cD}\lvert {A}^{\pi}(s,a)\rvert \cdot\underbrace{\min\{\rho_{s,a}(\theta)\sgn({A}^{\pi}(s,a)), \text{clip}(\rho_{s,a}(\theta),1-\epsilon,1+\epsilon)\sgn({A}^{\pi}(s,a))\}}_{=:{H}^{\text{clip}}_{s,a}(\theta)}.
\end{align}
Note that ${H}^{\text{clip}}_{s,a}(\theta)$ can be further written as
\[    {H}^{\text{clip}}_{s,a}(\theta)=
\begin{cases}
        1+\epsilon&, \text{if } {A}^{\pi}(s,a)>0 \text{ and } \rho_{s,a}(\theta)\geq 1+\epsilon  \\
        \rho_{s,a}(\theta)&, \text{if } {A}^{\pi}(s,a)>0 \text{ and } \rho_{s,a}(\theta)<1+\epsilon\\
        -\rho_{s,a}(\theta)&, \text{if } {A}^{\pi}(s,a)<0 \text{ and } \rho_{s,a}(\theta)>1-\epsilon\\
        -(1-\epsilon)&, \text{if } {A}^{\pi}(s,a)<0 \text{ and } \rho_{s,a}(\theta)\leq 1-\epsilon\\
        0&, \text{otherwise}
    \end{cases}\]
{Recall that the generalized objective of PPO-Clip with hinge loss takes the form as}
\begin{align}
    L(\theta) &\approx \hat{L}(\theta)=\frac{1}{\lvert \cD\rvert} \sum_{(s,a)\in \cD}\lvert {A}^{\pi}(s,a)\rvert \cdot\underbrace{\max\big\{0,\epsilon-(\rho_{s,a}(\theta)-1)\sgn({A}^{\pi}(s,a))\big\}}_{=:{H}_{s,a}(\theta)}.
\end{align}
Similarly, ${H}_{s,a}(\theta)$ can be further written as
\[    {H}_{s,a}(\theta)=
\begin{cases}
        0&, \text{if } {A}^{\pi}(s,a)>0 \text{ and } \rho_{s,a}(\theta)\geq 1+\epsilon  \\
        -\rho_{s,a}(\theta)+(1+\epsilon)&, \text{if } {A}^{\pi}(s,a)>0 \text{ and } \rho_{s,a}(\theta)<1+\epsilon\\
        \rho_{s,a}(\theta)-(1-\epsilon)&, \text{if } {A}^{\pi}(s,a)<0 \text{ and } \rho_{s,a}(\theta)>1-\epsilon\\
        0&, \text{if } {A}^{\pi}(s,a)<0 \text{ and } \rho_{s,a}(\theta)\leq 1-\epsilon\\
        \epsilon&, \text{otherwise}
    \end{cases}\]
Therefore, it is easy to verify that $\hat{L}^{\text{clip}}(\theta)$ and $-\hat{L}(\theta)$ only differ by a constant with respect to $\theta$. This also implies that $\nabla_{\theta}\hat{L}^{\text{clip}}(\theta)= -\nabla_{\theta}\hat{L}(\theta)$.

\end{document}